\documentclass{article}

\PassOptionsToPackage{numbers, compress}{natbib}
\usepackage[preprint]{neurips_2026}

\usepackage[utf8]{inputenc} % allow utf-8 input
\usepackage[T1]{fontenc}    % use 8-bit T1 fonts
\usepackage{hyperref}       % hyperlinks
\usepackage{url}            % simple URL typesetting
\usepackage{booktabs}       % professional-quality tables
\usepackage{amsfonts}       % blackboard math symbols
\usepackage{nicefrac}       % compact symbols for 1/2, etc.
\usepackage{microtype}      % microtypography
\usepackage{xcolor} 
\usepackage{hyperref}
\title{\timewarp: Evaluating Web Agents by\\ Revisiting the Past}

\author{%
  Md Farhan Ishmam and Kenneth Marino \\
  University of Utah, USA\\
  \texttt{\{farhan.ishmam, kenneth.marino\}@utah.edu} \\\\
  \url{timewarp-web.github.io}
  \vspace{-0.5cm}
}

\usepackage{amsmath}
\usepackage{amssymb}
\usepackage{mathtools}
\usepackage{amsthm}
\usepackage{booktabs}
\usepackage{multirow}
\usepackage{graphicx}
\usepackage{etoc}
\usepackage{amsfonts}
\usepackage{xcolor}
\usepackage[table]{xcolor} 
\usepackage{hyperref}
\usepackage{pifont}
\usepackage{pgfplots}
\usepackage{listings}
\usepackage{tabularx}
\usepackage{algorithm}
\usepackage{algorithmic}
\usepackage{amsmath} % Required for \text{...} in math mode
\usepackage{float}

\pgfplotsset{compat=1.8}
\usepackage{caption}
\usepackage[capitalize,noabbrev]{cleveref}
\newcommand{\cmark}{\textcolor{green!60!black}{\ding{51}}}
\newcommand{\xmark}{\textcolor{red!70!black}{\ding{55}}}
\newcommand{\TF}[1]{\ifnum#1=1 \cmark \else \xmark \fi}

\newcommand{\farhan}[1]{}
\newcommand{\kenny}[1]{}

\newcolumntype{L}[1]{>{\raggedright\arraybackslash}p{#1}}
\newcolumntype{C}[1]{>{\centering\arraybackslash}p{#1}}

\newcommand{\mwrapCenter}[2]{%
  \multirow[c]{#1}{*}{\parbox{\linewidth}{\centering #2}}%
}

\theoremstyle{plain}
\newtheorem{theorem}{Theorem}[section]
\newtheorem{proposition}[theorem]{Proposition}

\theoremstyle{definition}
\newtheorem{definition}[theorem]{Definition}
\newtheorem{assumption}[theorem]{Assumption}
\theoremstyle{remark}
\newtheorem{remark}[theorem]{Remark}

\usepackage[disable,textsize=tiny]{todonotes}
\newcommand{\E}{\mathbb{E}}

\newcommand{\timewarp}{\textsc{TimeWarp}}
\newcommand{\timetraj}{\textsc{TimeTraj}}

\newcommand{\html}{{HTML}}
\newcommand{\axt}{{AXT}}
\newcommand{\som}{{SoM}}
\newcommand{\sshot}{{SS}}
\newcommand{\bc}{{BC}}
\newcommand{\bct}{{BCTv$_{1:6}$}}
\newcommand{\tw}{{TW}}

\newcommand{\mwrap}[2]{%
  \multirow[c]{#1}{*}{\parbox{\linewidth}{\raggedright #2}}%
}

\providecommand{\sectionvspace}{\vspace{0cm}}

\usetikzlibrary{patterns,patterns.meta}
\newcommand{\INPUT}{\item[\textbf{Input:}]}
\usepackage[most]{tcolorbox}
\tcbuselibrary{breakable}

\usepackage{microtype}
\usepackage{graphicx}
\usepackage{subfigure}
\usepackage{booktabs}
\usepackage{hyperref}
\usepackage{diagbox}
\usepackage{makecell}

\usepackage{amsfonts}
\usepackage{xcolor}
\usepackage[table]{xcolor} 
\usepackage{hyperref}
\usepackage{pifont}
\usepackage{pgfplots}
\usepackage{listings}

\pgfplotsset{compat=1.8}
\usepackage{etoc}
\usepackage{titletoc}
\newcommand\DoToC{%
  \startcontents
  \printcontents{}{1}{\noindent \textbf{\Large{Table of Contents in Appendix}}\vskip3pt\vskip5pt}
  \vskip3pt\vskip5pt
}
\begin{document}

\maketitle

% [STORY] Web agents are benchmarked on static sites but deployed on a web that keeps changing. TimeWarp rebuilds three sites in six historical UI versions to measure robustness to change in a controlled way, and TimeTraj plus TimeWarp-BC turn one human-refined plan per goal into multi-version training data that roughly doubles success and generalizes to unseen versions.
\begin{abstract}
%% Maybe we change the first line a bit?
% [REFINE: plain] "it raises the question" has no antecedent for "it".
% ORIG: As web agents close the gap with humans on benchmarks, it raises the question: \textit{Do today's agents perform just as well on tomorrow's web?}
{As web agents close the gap with humans on benchmarks, one question arises: \textit{Do today's agents perform just as well on tomorrow's web?}}
We introduce \timewarp, a benchmark that emulates the evolving web. 
\timewarp\ consists of three web environments, each with six UI versions spanning UI design, frontend code, and workflows from different eras of the internet.
We pair \timewarp\ with a set of complex, realistic tasks covering different forms of web navigation. 
Our experiments reveal that vision-based agents are vulnerable to changes, while text-based agents become brittle once fine-tuned on a single version.
To address this, we propose \timetraj, a new annotation method that uses plan distillation to collect trajectories across multiple versions. By training agents on teacher rollouts using our BC-variant, we achieve substantial performance gains: $20.4\%\rightarrow37.7\%$ for Qwen-3 4B and $0\%\rightarrow27.0\%$ for Llama-3.1 8B models.
Our work helps study generalization across web designs and opens a new paradigm for collecting plans rather than trajectories to improve the robustness of web agents. % Chop off the orphan line
\end{abstract}
\vspace{-12pt}
\begin{figure}[ht]
    \centering
    \includegraphics[width=\linewidth]{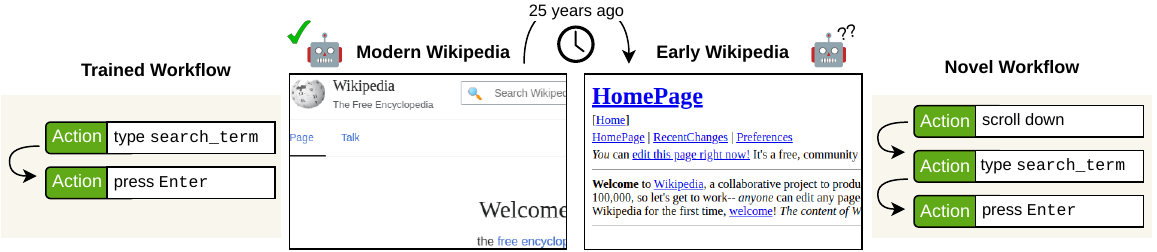}
    \caption{\textbf{Agents \textit{vs.} the Dynamic Web.} Websites change visually and functionally over time, resulting in workflow changes. Web agents can be vulnerable to novel workflows.}
    \label{fig:teaser}
\end{figure}
\vspace{-8pt}
\section{Introduction}
\label{sec:introduction}
\vspace{-8pt}
\sectionvspace

% \kenny{Why, webs change over time. We intro env. Shows that pertrained models vary perf based on version, esp. vision.  
% When training, overfits to version. We develop techinique where if you do this over multiple version (e.g. do as live web changes) then improves robustness with minimal additional labeling cost. Vanilla BC fails, so we introduce auto method which automatically collects for new versions. Also talk about how tasks are hard and need memory and planning and we also introduce this into BC training. Show that being mindful of changing web can make more robust agents. Also hit how this is primarily looking at how these agents are trained so this is why we didn't benchmark the closed models}

\noindent

\begin{figure*}[t]
    \centering

    % ---- FIGURE ----
    \begin{minipage}{\linewidth}
        \centering
        \includegraphics[width=\linewidth]{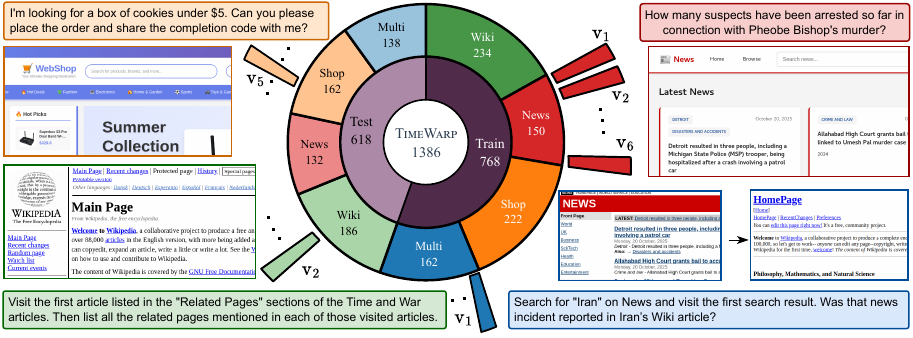}
    \end{minipage}

    % \vspace{0.2cm}

    % ---- TABLE ----
    \begin{minipage}{\linewidth}
        \centering
        \small
        \begin{tabular}{lccccccc}

\toprule

\textbf{Env} & \textbf{Reference Sites} & \textbf{v$_1$} & \textbf{v$_2$} & \textbf{v$_3$} & \textbf{v$_4$} & \textbf{v$_5$} & \textbf{v$_6$} \\

\midrule

\cellcolor{teal!08}\textbf{Wiki} & Wikipedia & 2001 & 2002-03 & 2003-04 & 2005-22 & 2023-25 & Minimal \\

\cellcolor{red!08}\textbf{News} & BBC News & 1998-01 & 2002-07 & 2008-15 & 2016-22 & 2023-25 & Minimal \\

\cellcolor{orange!13}\textbf{Shop} & Amazon (Az), Temu & Az'99-04 & Az'05-11 & Az'12-14 & Az'15-25 & Temu'25 & Webshop \\

\bottomrule

\end{tabular}
    \end{minipage}
    \caption{\textbf{Overview of \timewarp}: environments, versions, version to year mapping, UI examples, task distribution, and goal examples. The benchmark contains 231 goals $\times$ 6 versions $=$ 1386 tasks.}
    \vspace{-0.35cm}
    \label{fig:datasetStatistics}
\end{figure*}

Today's web agents are rapidly improving on benchmarks \cite{Song2025BEARCUBSABU}, with researchers showing that fine-tuning vastly outperforms training-free methods for open-source models~\cite{qi2024webrl,webagentR1-shi-etal-2024-direct,vattikonda2025how}. But these gains may not persist when the web changes. The web is dynamic and ever-changing by nature, with its UI, layout, and content evolving rapidly as user preferences shift and technological capabilities develop. Will the web agents we train today work just as well when the web changes?

Existing benchmarks are unable to study this problem, making it difficult to evaluate how web agents generalize to changes.
%However, existing web benchmarks fall short in evaluating an agent's generalization capabilities and vulnerability to web changes. 
Generally, these benchmarks feature either simulated environments \cite{zhou2023webarena,yao2022webshop, Garg2025REALBAA} or the live web \cite{Song2025BEARCUBSABU,He2024WebVoyagerBAAF,pan2024webcanvas}. Simulated environments have stable behavior and prioritize reproducibility, but do not capture the dynamic nature of the internet, often leading to poor generalization \cite{li2024websuite}. Live web benchmarks partially address this as they, by necessity, expose agents to the real, dynamic web. However, the live web can change unpredictably and cannot be studied in a controlled setting.
%Today's web agents are achieving unprecedented numbers in modern benchmarks \cite{Song2025BEARCUBSABU}, strengthening the case for integrating them into real human workflows. But how well do these benchmark gains translate into reliable real-world performance? A good indicator here is the web agent's ability to generalize across website variations. Given that the internet is dynamic, with UI, layout, and content changing over time, generalization across these changes should reflect an agent's robustness during deployment.

%They are often set up in a way that makes it difficult to adapt them to study these kinds of changes. 

%partially address this, but are prone to anti-bot measures, such as CAPTCHA, and have poor reproducibility due to unpredictable content modifications \cite{Garg2025REALBAA}. 
% Trained web agents are vulnerable to data distribution shifts
We introduce \timewarp, a new web environment which provides containerized UI versions across different eras of the internet
for multiple websites (Fig. \ref{fig:datasetStatistics}, \S\ref{sec:timewarpBenchmark}). \timewarp\ enables evaluation of web changes in a \textit{controlled} and \textit{reproducible} manner. The environment is complemented by a set of realistic and complex tasks spanning multiple categories to evaluate different forms of web navigation (\S\ref{sec:dataset}). 
Each task goal in our dataset can be tested on different versions of the web that simulate changes over time.
\timewarp\ is tightly integrated with the BrowserGym ecosystem \cite{tmlr25dechezellesBrowserGym} and provides easy expandability of its environments, versions, and tasks (\S\ref{subsec:timeWarpExpandability}).

Armed with this new benchmark, we experiment with open-source agents to evaluate how performance varies across versions (\S\ref{sec:Results}). % [REFINE: plain] Parallel clauses ("to be", "failing", "unable") made parallel.
% ORIG: We find vision-based agents to be more vulnerable to web changes, standard behavior cloning failing on our complex and realistic tasks, and trained agents unable to generalize to new versions \& benchmarks.
We find that vision-based agents are more vulnerable to web changes, standard behavior cloning fails on our complex and realistic tasks, and trained agents do not generalize to new versions and benchmarks.
While training across multiple versions of the environment can be a potential solution, trajectories cannot be reused across versions, and collecting human trajectories across these versions is time-consuming and resource-intensive. A more abstract representation, such as a plan, is more robust to temporal variations than trajectories. Using this principle, we develop \timetraj, a method for scalably collecting trajectories across multiple versions (\S\ref{sec:trajectoryCollectionTimeTraj}). Humans generate a high-level, checkpoint-based execution plan for a single version, which a teacher agent uses to collect low-level, version-specific trajectories with no additional human involvement. 

{Using \timetraj, we also obtain non-action tokens from the teacher agent, such as thinking, planning, and memory tokens, which human annotation cannot provide. This enables a new variant of behavior cloning, \timewarp-BC (\S\ref{sec:timewarpBC}), that trains web agents on these non-action tokens. It not only improves performance in \timewarp, but also generalizes better to unseen versions and benchmarks. Our findings suggest that by collecting plans once, researchers can increase both the amount of training data and the robustness of their agents. They can either create multiple versions of their websites or wait for websites to evolve naturally, and then scalably collect new trajectories (\S\ref{subsec:broaderImpactAnnotation}).}

The major contributions of our work include:
% \vspace{-0.4cm}
\begin{itemize}
    \item \timewarp: A web environment and dataset containing tasks spanning multiple versions, evaluating how agentic performance changes as websites develop visually and functionally.
    % \vspace{-0.2cm}
    % [REFINE: plain] "especially using visual observations or fine-tuned" lacked a noun; made explicit (matches the abstract).
    % ORIG: \item Empirical findings on the vulnerabilities of web agents against changes (especially using visual observations or fine-tuned), and generalizability across versions and benchmarks.
    \item {Empirical findings on the vulnerability of web agents to changes (especially agents using visual observations or fine-tuned on a single version), and on generalization across versions and benchmarks.}
    \item \timetraj: A method for scalably collecting training trajectories on new versions of the web without requiring additional human involvement.
    % \vspace{-0.2cm}
    % ORIG: \item \timewarp-BC: An improved behavior cloning for web agents that uses \timetraj~to train agents better on tasks requiring memory and planning.
    \item {\timewarp-BC: An improved behavior cloning method for web agents that uses \timetraj\ to train agents more effectively on tasks requiring memory and planning.}
\end{itemize}
\section{Related Work}
\label{sec:relatedWork}
% \sectionvspace
\textbf{Web Environments} fall broadly into two categories: simulated environments and live web \cite{sager2025comprehensive}. Simulated environments, such as WebShop \cite{yao2022webshop}, Web Arena \cite{zhou2023webarena}, and Real \cite{Garg2025REALBAA}, use a limited number of containerized websites that replicate the real ones. While reproducible, they fail to capture the internet’s dynamic complexity. Live web benchmarks, \textit{e.g.}, Web Voyager \cite{He2024WebVoyagerBAAF}, WebCanvas \cite{pan2024webcanvas}, and Online-Mind2Web \cite{Xue2025AnIOAH}, evaluate agents directly on the real internet, but cannot support systematic study of web evolution due to slow, uncontrolled, real-world change. We take a hybrid approach by using containerized versions of websites to emulate dynamic web behavior and study agents' responses to change in real time. 
To our knowledge, no environment has created multiple versions of the same websites/tasks to simulate the changing nature of the web. This makes \timewarp~unique and useful for studying robustness to the changing web.

\textbf{Web Agent Training} provides substantial performance gains to open-source models, with the best results achieved by LLMs aligned via RL, \textit{e.g.}, PPO \cite{qi2024webrl}, DPO \cite{putta2024agent}, or GRPO \cite{webagentR1-shi-etal-2024-direct}. These alignment methods, however, share a common behavior cloning (BC) phase, deemed a prerequisite to any form of web agent training \cite{webagentR1-shi-etal-2024-direct}. BC is generally applied to the action tokens, with recent works finding further gains by including thinking tokens \cite{hu-etal-2025-webcot}. % ORIG: As task complexity increased, additional tokens for planning and memory have been introduced at inference time \cite{Drouin2024WorkArenaHCAW}.
{As task complexity has increased, additional tokens for planning and memory have been introduced at inference time \cite{Drouin2024WorkArenaHCAW}.} We iterate on these works and include the action, thinking, planning, and memory tokens from a teacher model during the BC phase. 
%We do not perform alignment, as these are orthogonal to our work and can be applied in parallel.

\textbf{Trajectory Collection} for behavior cloning is generally performed by humans \cite{yao2022webshop}, with recent works shifting towards automation \cite{pahuja-etal-2025-explorer, xu2024agenttrek, trabucco2025insta}. However, automatically generated trajectories are relatively less complex and not specific to the training tasks of a dataset (Tab. {\ref{tab:trajectoryCollectionComparison}}). Given the complexity of \timewarp's task, we simply require trajectory generation for the training split, but acknowledge the need for some form of automation, as collecting trajectories across multiple versions is tedious and resource-intensive. % [REFINE: plain] "making humans refine ... and use" was not parallel.
% ORIG: Our proposed \timetraj\ takes a hybrid approach by making humans refine checkpoint-based execution plans on a single version, and use an executor/teacher web agent to collect trajectories across multiple versions.
{Our proposed \timetraj\ takes a hybrid approach. Humans refine checkpoint-based execution plans on a single version, and an executor (teacher) web agent then collects trajectories across multiple versions.}

\section{Problem Formulation}
\label{sec:problem}
% \vspace{-8pt}
\sectionvspace
\timewarp\ consists of multiple versions $\{\mathcal{E}_1,\dots\mathcal{E}_n\}$, where each version $\mathcal{E}_v$ is formalized as a POMDP \cite{spaan2012partially} %, kurniawati2022partially} 
and can be represented as a 7-tuple $\langle\mathcal{S},\mathcal{A},\mathcal{O}, \mathbb{T}, \mathbb{T}_0,\mathbb{O},R\rangle$. %%%%%%%%%% This representation is literally from Wikipedia, if reviewers ask about it, simply cite Wikipedia.

\noindent\textbf{State and Observation.} $s\in\mathcal{S}$ denotes the underlying, unobserved state of the environment.
%with $s\in\mathcal{S}$ being the underlying, unobserved state of the environment. 
% [REFINE: symbol] $o\in\mathcal{O}$ was typeset twice in the same sentence (a leftover of the commented-out clause).
% ORIG: $o\in\mathcal{O}$ is the observation received by the agent, $o\in\mathcal{O}$, in the form of HTML, accessibility trees (AXT), UI screenshots, or UI set-of-marks \cite{yang2023setofMarks}.
{$o\in\mathcal{O}$ is the observation received by the agent, in the form of HTML, accessibility trees (AXT), UI screenshots, or UI set-of-marks \cite{yang2023setofMarks}.}
$\mathbb{O}:\mathcal{S}\rightarrow\mathcal{O}$ is the observation function that maps states to observations.
%underlying state to the rendered observation. 
$\mathbb{T}:\mathcal{S} \times \mathcal{A}\rightarrow{}\mathcal{S}$ is the state transition, which can be considered deterministic.
%represents the state transition function, which can be treated as deterministic for modeling purposes. 
Finally, the initial state distribution $\mathbb{T}_0$ is uniform over all \timewarp\ tasks.

\noindent\textbf{Action.} The action space $\mathcal{A}$ consists of browser-level operations, and is a subset of BrowserGym's \cite{tmlr25dechezellesBrowserGym} action space. \timewarp’s action space (\S\ref{sec:actionSpace}) includes general actions (\textit{e.g.}, $\texttt{click}$), web navigation (\textit{e.g.}, $\texttt{goto}$), tab operations (\textit{e.g.}, $\texttt{tab\_focus}$), and user interactions (\textit{e.g.}, $\texttt{send\_msg\_to\_user}$). At each step $t$, the generated action $a_t \in \mathcal{A}$ is passed to the parser for execution.

\noindent\textbf{Reward.} \timewarp\ uses a sparse terminal reward $R\in\{0=\text{Failure},1=\text{Success}\}$, which is assigned only at the final step by the evaluator (\S\ref{sec:evaluation}).

\noindent\textbf{Agent.} Modern web agents generally follow an auto-regressive policy $\pi_\theta$ \cite{christianos2023pangu} that conditions on the interaction history $h_t=(o_{1:t-1}, y_{1:t-1})\in\mathcal{H}$, \textit{i.e.}, observations and responses prior to time step $t$. Since conditioning on multiple high-dimensional observations can exceed an agent's context window or confuse it \cite{He2024WebVoyagerBAAF}, the history is approximated using the most recent observation and previous responses \cite{putta2024agent}. The response tokens are generated at step $t$ as $y_t\sim \pi_\theta(\cdot\mid h_t),$
from which the browser-level action is deterministically parsed and executed. 
%Formally, an optimal policy in a partially observable environment may condition on the complete observation history.

% the current observation $o_t$ and all previously generated outputs, $y_{1:t-1}$, up to time step $t$.

%%%%%%% Response history is part of the observation, only mentioned about passing the observation and the response which the polcy maps
% In theory, the complete observation history $o_{1:t}$ should be passed, but because it would overload the agent's context, we only pass the current observation along with the response history. 

% response. At each step, the agent produces a structured 4-tuple $(a_t, p_t, c_t, m_t)$ comprising an environment action $a_t$, a plan $p_t$, an internal reasoning trace $c_t$, and a memory state $m_t$. The agent’s input at time $t$ is the observation $o_t$ together with the interaction history $(o_{1:t}, r_{1:t-1})$, where $r_t$ denotes the emitted 4-tuple. A trajectory is thus $\tau={(o_t, r_t)}_{t=1}^T$, with task horizon $T$. We consider a sparse terminal reward: the agent receives $R(\tau)\in{0,1}$ only at the final step $t=T$, indicating task failure or success, and $0$ otherwise.

% $\mathcal{O}$, and $\mathcal{A}$, represent the state, observation, and action spaces respectively, $\mathcal{T}$ denotes the t

% A trajectory (episode) is a sequence:
% \begin{equation}
%     \tau = (s_1, a_1, s_2, a_2, \dots, s_T, a_T),
% \end{equation}
% generated by an unknown expert policy interacting with the environment, which is typically a human. 

\begin{figure*}[ht]
    \centering
    \includegraphics[width=0.92\linewidth]{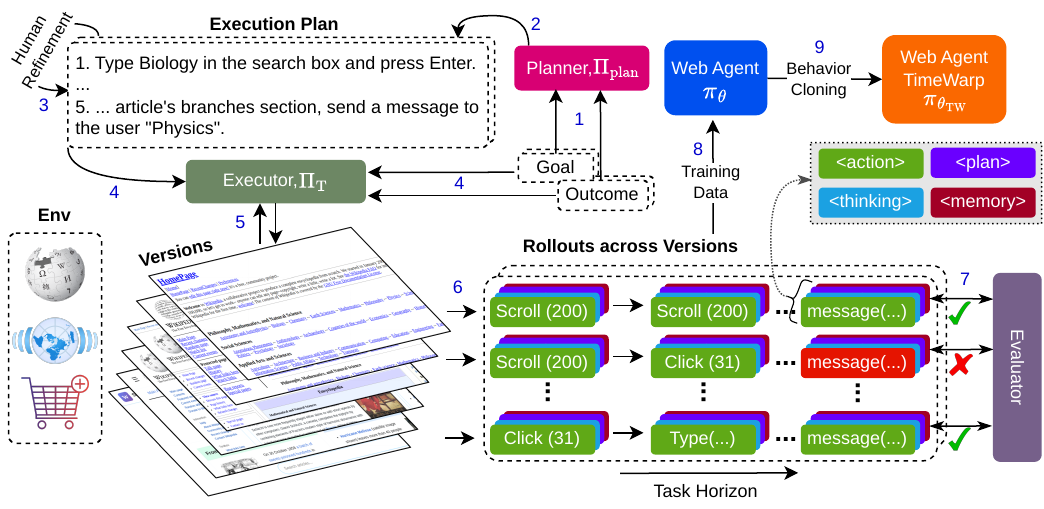}
    \vspace{-0.1cm}
    \caption{\textbf{Overview of the \timewarp\ benchmark}: (1) The goals and desired outcomes of the tasks are passed to the planner, $\Pi_\text{plan}$, which (2) produces draft execution plans. (3) The plans are refined by humans and (4) passed to the executor (teacher), $\Pi_\text{T}$, along with the goals and desired outcomes. (5) The executor generates trajectories across different versions of the \timewarp\ environments: Wiki, News, and Shop. (6) Rollouts of the trajectories consist of observations and a 4-tuple of response tokens ($<$action$>$, $<$thinking$>$, $<$memory$>$, $<$plan$>$), at each time step. (7) Trajectories are evaluated by the evaluator, (8) the filtered rollouts across versions form the training data for a web agent $\pi_{\theta}$, (9) which uses behavior cloning to produce a \timewarp\ agent, $\pi_{\theta_{\text{TW}}}$.}
    \vspace{-0.5cm}
    \label{fig:timewarpMain} 
\end{figure*}

\sectionvspace
% \vspace{-8pt}
\section{The \timewarp\ Benchmark}
\label{sec:timewarpBenchmark}
% \vspace{-8pt}
\sectionvspace
The goal of \timewarp\ is to evaluate agents against changes to the web. To simulate these changes in a controlled and reproducible setting, we introduce multiple versions with varying front-end content across three containerized web environments. At its core, \timewarp\ is designed to be \textit{modular} and \textit{expandable}, achieved by adopting a lightweight Python-based Flask backend with modular frontends, each corresponding to a unique web version. Frontends can be easily added, swapped, or modified to create new versions and websites with minimal overhead.

Our design contrasts with previous web environments that used dockerized deployments \cite{zhou2023webarena,Garg2025REALBAA}, where individual websites require substantial storage\footnote{ \textit{e.g.}, WebArena's Wikipedia takes 89 GB disk storage \cite{webarena-environment-docker-2023}}, and maintaining multiple versions incurs high additional cost. \timewarp\ is tightly integrated with the BrowserGym framework \cite{tmlr25dechezellesBrowserGym}, enabling standardized agent interaction via Playwright-based browser actions and parallel execution of tasks.
%%%%%%%%%%% This is the main draft version

%% The goal behind creating timewarp is to benchmark agents against changes in the web. To simulate change without relying on the live web, we introduce six versions of varying frontend content across three new web environments.
%At its core, our benchmark aims to be \textit{accessible} and \textit{expandible}. We achieve this by, first, creating a lightweight system that uses a simple environment structure with a Python-based \textit{flask}\footnote{https://flask.palletsprojects.com/en/stable/} backend, with each frontend serving as a varying version. Frontends can be added, swapped, or modified, thereby easily creating new versions of the environment. The environment is similarly expandable, with newer websites easily being added to the Flask backend. (Maybe refer to a section in broader impact to tease miniweb) This is in sharp contrast to current benchmarks that used dockarized containers \cite{youKKnowHow}, one such container can easily take 100+ GB, with multiple versions of such taking many times of that. Secondly, the benchmark is tightly integrating the whole benchmark with the popular BrowserGym framework \cite{tmlr25dechezellesBrowserGym}, enabling easy and standardized agent interaction through Playwright-based browser automation.
\vspace{-4pt}
\sectionvspace
\subsection{Web Environments}
\vspace{-4pt}
\sectionvspace
\label{sec:webEnvironment}
We select three web environments: \textbf{Wiki}, \textbf{News}, and \textbf{Shop} to span a broad range of tasks, with each environment serving a distinct purpose and allowing evaluation of different aspects of web interaction (Fig. \ref{fig:datasetStatistics}). Each environment (and in some cases, its versions) features a unique search mechanism, \textit{e.g.}, Wiki's stricter string-based search aligns better with realistic encyclopedic searches, while News' flexible content-based search aligns with article searching (\S\ref{sec:searchAlgorithms}).
Searching successfully without backtracking is unlikely and encourages agents to adapt to each website’s search behavior. Below, we present a concise overview of each environment, with additional details provided in \S\ref{sec:envDetails}.

\noindent\textbf{Wiki} is a Wikipedia-style environment constructed from articles in the SimpleWiki dump \cite{WikimediaFoundationSimpleWiki}. It introduces encyclopedic content-retrieval tasks that evaluate an agent’s ability to search and navigate large bodies of information, while also serving as a knowledge base for other tasks. These tasks require reasoning over article sections, traversing linked pages, and synthesizing information across multiple articles, while adapting to a stricter search mechanism.

\noindent\textbf{News} is a BBC-style environment constructed from articles in the EnWikiNews dump \cite{WikimediaFoundationEnWikinews}. It introduces time-sensitive information retrieval tasks that evaluate an agent’s ability to prioritize and retrieve relevant content from multiple search results. Unlike Wiki, News tasks require agents to reason over metadata, \textit{e.g.}, the article title and publication date. The environment uses a relevance-based search algorithm based on term frequency, which resembles real-world news search. While this algorithm is more flexible than Wiki’s, imprecise search queries can produce many results, making it harder to retrieve the most relevant articles.

\noindent\textbf{Shop}'s backend is adapted from the Webshop environment \cite{yao2022webshop}, which contains $\sim1.18$ million real-world products scraped from Amazon, each with multiple variants. We modified the item-ordering mechanism so that a unique confirmation code is generated after the order is completed.
Shop tasks require agents to make decisions based on product attributes and features, often involving visual queries that necessitate inspecting product images. Shop's search mechanism is similar to News, resembling that of realistic e-commerce sites.
% We modify the item ordering mechanism such that, upon order completion, the environment generates a confirmation code unique to the selected product and its variants. This design enables easy verification of shopping tasks with minimal evaluation overhead. 
\vspace{-4pt}
\subsection{Environment Versions}
\label{sec:version}
% \vspace{-4pt}
\textbf{Version Selection.} For each environment, we select a set of six frontend versions $\{\text{v}_1,\cdots,\text{v}_6\}$, corresponding to distinct eras of web UI design (Fig.~\ref{fig:datasetStatistics}).
To identify representative design eras, we draw inspiration from the historical web page snapshots in the Web Design Museum \cite{WebDesignMuseum}.
After identifying these eras, we use webpage snapshots from the Internet Archive \cite{InternetArchive} to recreate version interfaces appropriate for that era. The associated front-end code is designed to be representative of each era, \textit{e.g.}, older websites use earlier HTML versions, such as 4.01 Transitional, and modern websites use HTML 5.0, thereby {also introducing} temporal variations in the HTML/AXT observation spaces. Workflows can vary across search behavior (Fig.~\ref{fig:searchDifferences}), interaction workflows (Fig.~\ref{fig:newsSearch}), dynamic page behavior (Fig.~\ref{fig:popUpAdsErrorQualAnalysis}), feature availability (Fig.~\ref{fig:tableofcontent}), and the structure of the observation space itself. Tasks and data are held fixed across versions, so that performance shifts are attributable solely to the version change. We provide additional details (\S\ref{sec:versionCreation}, \ref{sec:variationsVersions},\ref{sec:versionValidation}) and UI images for each version (\S\ref{sec:versionExamples}).
%The v$_1$ and v$_2$ resembles the early stages of the internet from late 90s to early 2000s. The websites have dated UI that is often harder to navigate visually (\textit{e.g.}, Wiki search is located at the bottom (\red{refer to an appendix pic}), articles lack table of contents). However, the HTML and AXTree are can be easier to navigate due to the simplicity. v$_3$ and v$_4$ are websites are mostly from the middle 2000s to the 2010s. Websites are visually easier to navigate stylistically, one common placements of search bars and easy navigation buttons (\textit{e.g.}, table of contents). v$_5$ shows the modern websites that are visually more complex often for aesthetic purposes, \textit{e.g.}, search icons instead of text box (news). The HTML and accessibility is also more complex and websites also have modern bloatware such as popup ads (refer shop v5). Modern websites have several quality of life improvements, for instance, wiki v5 and v6 have substring match results shown while typing, but the other versions don't feature this capability.

\textbf{Older Versions} v$_1$ and v$_2$ showcase the early era of the internet with dated interfaces that are often visually harder to navigate, \textit{e.g.}, search boxes placed at the bottom of the page (Fig. \ref{fig:searchDifferences}a), and long articles without a table of contents (Fig. \ref{fig:tableofcontent}a, \ref{fig:tableofcontent}b). However, the underlying HTML structure and AXT tend to be simpler, which can often make web navigation for text-based web agents easier. % [REFINE: plain] Literal phrase instead of the "Middle Ages" metaphor.
% ORIG: Versions v$_3$ and v$_4$ are mostly from the internet's Middle Ages.
{Versions v$_3$ and v$_4$ mostly cover the intermediate era between the early and the modern web.} These interfaces are visually more organized, with consistent placement of UI elements, \textit{e.g.}, top nav bars, and table of contents (Fig. \ref{fig:tableofcontent}c, \ref{fig:tableofcontent}d), improving usability for both humans and visual agents.

\textbf{Modern Versions.} v$_5$ showcases today's web design, where websites are visually more complex and often prioritize aesthetics through visual elements, such as icon-based search controls (Fig \ref{fig:newsSearch}). The HTML and AXT are also structurally more complex and may include dynamic elements and bloat, such as pop-up advertisements (Fig. \ref{fig:popUpAdsErrorQualAnalysis}). These websites also offer quality-of-life improvements, \textit{e.g.,} drop-down suggestions while typing in the search box (Fig. \ref{fig:searchDifferences}b), which were absent in earlier versions. Modern LLMs and VLM-based web agents are also more likely to have been trained on websites with similar UI. v$_6$ is unique and represents a clean, minimal interface for the Wiki and News environments, while the Shop v$_6$ uses the base UI from the WebShop environment \cite{yao2022webshop}.

% Frontends are recreated using resources from WebArenaMuseum, with each environment version adopting UI structures and accessibility representations characteristic of its era.
% In particular, each version exposes HTML content and corresponding accessibility (AX) trees that reflect historical design conventions, enabling agents to interact with interfaces that vary meaningfully across time.

 % Each versions of the webenvironment has its unique frontend versions that is representative of an era of the internet's design. The associated front end code that is also representative of that era's code, for instance an older version will use an earlier version of such as HTML HTML 4.01 Transitional, and a modern version will use HTML 5.0 or modern version DOuble check this information.  

% For each environment, our goal is to select a set of versions representing a particular era of the internet's UI design (Tab. \ref{tab:theme_mapping}). To identify the historical eras, we rely on the webpage snapshots from Web Design Museum\footnote{\url{https://www.webdesignmuseum.org/}}. After identifying the eras, we refer to the Internet Archive\footnote{\url{https://archive.org/}} to view snapshots of the different web pages for reproducing them.

% The forntend are recreated from WebArenamuseem. Each environment versions uses the HTML and AX Tree of their own era. For historial accuracy, the versions have been taken from Web Design Museum \cite{WebDesignMuseum_wikipedia_search}

\vspace{-4pt}
\subsection{Dataset Tasks and Coverage}
\label{sec:dataset}
\sectionvspace

The \timewarp\ benchmark comprises 1,386 tasks (231 goals × 6 versions) split into training and test sets (Fig. \ref{fig:datasetStatistics}) and spans a wide range of categories (Tab. \ref{tab:datasetTasks}). Formally, a task is a pair $(g, \mathcal{E}_\text{v})$ of a version-invariant goal $g$, together with its reference answer and refined plan, and an environment version $\mathcal{E}_\text{v}$.
The task goals are similar in nature to other popular benchmarks, such as AssistantBench \cite{yoran2407assistantbench}, and BrowseComp \cite{wei2025browsecomp}. All tasks are manually written by authors and verified externally. Tasks are \textit{not} template-based, as such approaches tend to introduce redundant and incremental variations. Our dataset includes goals that require both visual and textual modalities (\S\ref{sec:visualInfoAnnotation}).

% For simplicity, we avoid evaluation of internal state changes \cite{zhou2023webarena}, which helps expandability of the benchmark in adding tasks. 

% -> Shop tasks are new and not from webshop. We also have multi-evenironmentts tasks

% Talk about the timewarp tasks being really complex and require reasoning, decomposition of planning, 

% \textit{[Placeholder: Insert details regarding the dataset size (e.g., number of trajectories), task diversity (informational vs. transactional), and specific breakdown of tasks extracted from \texttt{test.raw.json}.]}

\vspace{-4pt}
\subsection{Evaluation}
\label{sec:evaluation}
\sectionvspace
% For consistent evaluation 
We evaluate trajectory success using deterministic verifiers, which provide a binary reward $R\in\{0,1\}$. Similar to the evaluation functions of WebArena \cite{zhou2023webarena}, each task specifies a verifier composed of function classes. The verifiers reach a 99.87\% agreement rate with the GPT-5.1 judge on the teacher trajectories, while being cheap, fully reproducible, and free of API dependence (\S\ref{sec:EvaluationDetails}).

\vspace{-4pt}
\section{Methodology}
\label{sec:methodology}
\vspace{-4pt}

\sectionvspace
\subsection{\timetraj}
\label{sec:trajectoryCollectionTimeTraj}
\sectionvspace

\timetraj\ collects trajectories across different versions of \timewarp's environment using a planner $\Pi_\text{plan}$, and a teacher executor $\Pi_T$ module to (i) generate draft execution plans that are refined by humans (\S\ref{sec:planDistillationMethod}), and (ii) produce the teacher rollouts across versions (\S\ref{sec:teacherRolloutsMethod}). The prompt for each module is reported in \S\ref{sec:prompt}, with additional details on \timetraj\ provided in the Appendix \S\ref{sec:methodologyDetails}.
\sectionvspace
\subsubsection{Human-in-the-Loop Plan Distillation}
\label{sec:planDistillationMethod}
\sectionvspace

\timewarp's goal dataset, $\mathcal{D}_\text{goal}$, consists of human-annotated and version-invariant (i) task goals $g$, which are natural language queries describing what the agent needs to do, and (ii) their desired outcomes $a$, which are generally answer strings as outcomes of the given goals. The planner module $\Pi_\text{plan}$ generates draft execution plans $\hat{p}$ based on an autoregressive policy conditioned on $g$ and $a$. The plans are subsequently refined by humans who interact with only a single version of the environment and generate the human-refined plans $p^*$, which are \textit{strictly} version-independent (\S\ref{sec:TimeTrajHumanEffort}). Humans provide additional details in the execution plan, such as task execution checkpoints (\S\ref{sec:singleStepExecution}), making it easier for an agent to complete the tasks. The task goals and plans are aggregated to form the planning dataset, $\mathcal{D}_\text{plan}$. The procedure is formally described in Algorithm \ref{alg:HiTLPlanDistillation}.

\vspace{-4pt}
\subsubsection{Teacher Rollouts}
\vspace{-4pt}
\label{sec:teacherRolloutsMethod}
\sectionvspace
The teacher policy $\Pi_T$ collects teacher rollouts via online interactions with each version. It iterates over each goal and plan in the planning dataset, $\mathcal{D}_\text{plan}$, and executes them in each environment version. The resulting trajectory is a sequence of observation histories and responses, $\tau=\{(h_t, y_{t})\}_{t=1}^T$ with task horizon $T$. The agent's response at time step $t$ is denoted as a 4-tuple, $y_t=\langle a_t, c_t,p_t, m_t\rangle$, % ORIG: where $a_t$ is the action tokens, $c_t$ is the chain-of-thought thinking tokens, $p_t$ is the planning tokens, and $m_t$ is the memory tokens.
{where $a_t$, $c_t$, $p_t$, and $m_t$ denote the action, chain-of-thought thinking, planning, and memory tokens, respectively.} The observation history includes the response history $y_{1:t-1}$. Each trajectory is evaluated by the judge $J_\phi$, which assigns binary rewards based on success and failure. Successful trajectories are filtered and appended to the trajectory dataset $\mathcal{D}_\tau$, with each trajectory represented as observation history and response pairs,
% \vspace{-0.2cm}
\begin{equation}
\label{eq:trajectory}
    \mathcal{D}_{\tau} = \{\tau_i\}_{i=1}^N
      = \big\{ (h_{i,1}, y_{i,1}, \dots, h_{i,T_i}, y_{i,T_i}) \big\}_{i=1}^N,
% \vspace{-0.2cm}
\end{equation}
where $T$ is the number of steps taken in the task or the task horizon. The overall procedure is described in Algorithm \ref{alg:teacherRollouts}.  
Using the trajectory collection algorithm, we collected 757 training trajectories against the 768 training tasks, as 11 unsuccessful trajectories were filtered out by the judge evaluator.

% \subsection{Partially Observable MDP}

% We consider an episodic Markov decision process (MDP) 
% $\mathcal{M} = \langle\mathcal{S}, \mathcal{A}, P, \rho_0, \gamma\rangle$,
% with state space $\mathcal{S}$, action space $\mathcal{A}$, transition dynamics 
% $P(s' \mid s,a)$, initial state distribution $\rho_0$, and discount factor 
% $\gamma \in [0,1)$. 

% A trajectory (episode) is a sequence:
   
% generated by an unknown expert policy interacting with the environment, which is typically a human. 

\vspace{-4pt}
\subsection{\timewarp-BC}
\label{sec:timewarpBC}
\vspace{-4pt}
Building on the standard BC framework, we train web agents on the full teacher-agent response rather than only on action tokens. % [REFINE: plain] Reworded for grammar ("use the options such as", "more in distribution to"); no change in claims.
% ORIG: This has two key advantages: (i) it allows the agent to use the options such as thinking tokens needed for reasoning,  planning for solving complex tasks, and memory required for remembering information from certain tasks, which are often required for the tasks in the \timewarp\ dataset, (ii) the training data will be more in distribution to the test data.
{This has two key advantages: (i) it lets the agent produce the thinking tokens needed for reasoning, the planning tokens needed for complex tasks, and the memory tokens needed to retain information across steps, all of which \timewarp\ tasks often require, and (ii) the training data are closer in distribution to the test data.} The model is trained on the full response $(h_t,y_t)$, with $y_t$ containing action, thinking, planning, and memory tokens as mentioned in \S\ref{sec:teacherRolloutsMethod} (details in \S\ref{subsec:timewarpBCdetails}). % [REFINE: connection] Eq.~\ref{eq:vanillaBCLoss} is defined in the appendix, so point the reader there.
% ORIG: The BC loss from Eq. \ref{eq:vanillaBCLoss} can be reformulated as:
% [PROOF-CHECK] Not a reformulation. Standard BC trains the model to emit the parsed action $a=\phi(y)$ as its whole response, as Eq.~\ref{eq:timewarpBCATMP} makes explicit, while \timewarp-BC trains it to emit the full response $y$. The two losses therefore score different target sequences, and neither is a rewriting of the other. An earlier version of this note claimed $\mathcal{L}_{\text{BC}}\le\mathcal{L}_{\text{TW-BC}}$. That inequality holds only if $\pi_\theta(a\mid h)$ is read as the marginal of $\pi_\theta(y\mid h)$ over the fibre of $\phi$, which the paper does not define, so it is not asserted here.
% ORIG: {The standard BC loss (Eq.~\ref{eq:vanillaBCLoss}, \S\ref{sec:vanillaStandardBC}) can be reformulated as:}
{The standard BC loss (Eq.~\ref{eq:vanillaBCLoss}, \S\ref{sec:vanillaStandardBC}) is replaced by a loss on the full response:}
% \vspace{-0.1cm}
\begin{equation}
\label{eq:timewarpBC}
    \mathcal{L}_{\text{TW-BC}}(\theta)
    = - \mathbb{E}_{(h,y) \sim D} \big[ \log \pi_\theta(y \mid h) \big].
% \vspace{-0.15cm}
\end{equation}

\begin{table*}[ht]
\centering
\footnotesize
\caption{Success rate (\%) of vision language models (VLMs) under various zero-shot observation settings: (i) Textual (\textcolor{orange!80!black}{\textbf{T}}): \html\ and Accessibility Tree (\axt), and (ii) Visual (\textcolor{cyan!80!black}{\textbf{V}}): UI Screenshot (\sshot) and Set of Marks (\som). $\Delta$v represents the difference between version-wise best and worst performance. Human indicates the human baseline reported in \S\ref{sec:humanBaseline}.}
\label{tab:mainBenchmarkVisual}
\setlength{\tabcolsep}{4pt}
\resizebox{\textwidth}{!}{%
\begin{tabular}{lcccccccccccccc}
\toprule
\multirow{2}{*}{\textbf{Model}} &
\multicolumn{2}{c}{\multirow{2}{*}{\textbf{Setting}}} &
\multicolumn{4}{c}{\textbf{Environment}} &
\multicolumn{7}{c}{\textbf{Version}} &
\multirow{2}{*}{\textbf{O/A}} \\
\cmidrule(lr){4-7}\cmidrule(lr){8-14}
& & &
\textbf{Wiki} & \textbf{News} & \textbf{Shop} & \textbf{Multi} &
\textbf{v1} & \textbf{v2} & \textbf{v3} & \textbf{v4} & \textbf{v5} & \textbf{v6} & \textbf{$\Delta$v}\textcolor{red!70!black}{$\downarrow$}
\\
\midrule
\multirow{4}{*}{Qwen-3VL 8B}& \multirow{2}{*}{\textcolor{orange!80!black}{\textbf{T}}}
 & \multicolumn{1}{c|}{\cellcolor{yellow!15}HTML}
 & \textbf{20.8} & \textbf{33.6} &\textbf{20.8} & \multicolumn{1}{c|}{24.4}
 & \textbf{27.8} & \textbf{23.3} & 22.6 & 24.6 & \textbf{20.4} & \multicolumn{1}{c|}{\textbf{27.2}} & \multicolumn{1}{c|}{07.4}
 & \textbf{24.3} \\
 && \multicolumn{1}{c|}{\cellcolor{orange!15}AXT}
 & 19.0 & 32.8 & 18.1 & \multicolumn{1}{c|}{\textbf{24.9}}
 & 20.7 & 22.0 & \textbf{24.6} & \textbf{27.5} & 19.4 & \multicolumn{1}{c|}{23.9} & \multicolumn{1}{c|}{08.1}
 & 23.0 \\ \cmidrule{2-15}
& \multirow{2}{*}{\textcolor{cyan!80!black}{\textbf{V}}}  &\multicolumn{1}{c|}{\cellcolor{teal!10}SS}
 & 00.7 & 01.8 & 00.2 & \multicolumn{1}{c|}{00.0}
 & 01.3 & 00.6 & 00.6 & 00.6 & 00.3 & \multicolumn{1}{c|}{00.3} & \multicolumn{1}{c|}{01.0}
 & 00.6 \\
 && \multicolumn{1}{c|}{\cellcolor{lime!15}SoM}
 & 06.8 & 16.4 & 15.2 & \multicolumn{1}{c|}{10.6}
 & 02.6 & 04.5 & 07.8 & 21.4 & 14.2 & \multicolumn{1}{c|}{21.0} & \multicolumn{1}{c|}{18.8}
 & 11.9 \\

\midrule
\multirow{4}{*}{\begin{tabular}[l]{@{}l@{}}Qwen-3VL 8B\\Thinking\end{tabular}}& \multirow{2}{*}{\textcolor{orange!80!black}{\textbf{T}}}
 & \multicolumn{1}{c|}{\cellcolor{yellow!15}HTML}
 & 08.8 & 26.5 & 09.3 & \multicolumn{1}{c|}{08.4}
 & 12.6 & 12.6 & 10.4 & 12.9 & 12.0 & \multicolumn{1}{c|}{15.2} & \multicolumn{1}{c|}{04.8} & 12.6 \\
 && \multicolumn{1}{c|}{\cellcolor{orange!15}AXT}
 & 11.1 & 26.9 & 08.6 & \multicolumn{1}{c|}{06.7}
 & 14.4 & 09.4 & 11.0 & 13.6 & 12.3 & \multicolumn{1}{c|}{16.5} & \multicolumn{1}{c|}{07.1}
 & 12.8 \\ \cmidrule{2-15}
 & \multirow{2}{*}{\textcolor{cyan!80!black}{\textbf{V}}} & \multicolumn{1}{c|}{\cellcolor{teal!10}SS}
 & 00.0 & 01.8 & 01.4 & \multicolumn{1}{c|}{00.0}
 & 01.3 & 00.3 & 01.0 & 01.0 & 00.0 & \multicolumn{1}{c|}{01.0} & \multicolumn{1}{c|}{01.3}
 & 00.8 \\
 && \multicolumn{1}{c|}{\cellcolor{lime!15}SoM}
 & 01.4 & 08.1 & 09.1 & \multicolumn{1}{c|}{01.2}
 & 01.9 & 01.3 & 03.5 & 06.5 & 03.2 & \multicolumn{1}{c|}{12.3} & \multicolumn{1}{c|}{11.0}
 & 04.8 \\

\midrule
\multirow{4}{*}{Gemma-3 12B}& \multirow{2}{*}{\textcolor{orange!80!black}{\textbf{T}}} 
 & \multicolumn{1}{c|}{\cellcolor{yellow!15}HTML}
 & 14.0 & 27.8 & 10.5 & \multicolumn{1}{c|}{10.6}
 & 13.3 & 12.6 & 14.6 & 18.4 & 12.9 & \multicolumn{1}{c|}{19.7} & \multicolumn{1}{c|}{07.2}
 & 15.3 \\
& & \multicolumn{1}{c|}{\cellcolor{orange!15}AXT}
 & 15.4 & 27.8 & 13.2 & \multicolumn{1}{c|}{19.1}
 & 19.7 & 15.2 & 19.1 & 18.8 & 15.8 & \multicolumn{1}{c|}{21.7} & \multicolumn{1}{c|}{05.8}
 & 18.3 \\ \cmidrule{2-15}
 & \multirow{2}{*}{\textcolor{cyan!80!black}{\textbf{V}}} & \multicolumn{1}{c|}{\cellcolor{teal!10}SS}
 & 07.1 & 15.2 & 08.6 & \multicolumn{1}{c|}{05.3}
 & 08.7 & 08.1 & 08.4 & 08.7 & 10.7 & \multicolumn{1}{c|}{08.1} & \multicolumn{1}{c|}{02.6}
 & 08.8 \\
& & \multicolumn{1}{c|}{\cellcolor{lime!15}SoM}
 & 09.9 & 15.4 & 11.7 & \multicolumn{1}{c|}{07.2}
 & 07.4 & 09.1 & 10.3 & 14.8 & 09.7 & \multicolumn{1}{c|}{14.5} & \multicolumn{1}{c|}{07.4}
 & 11.0 \\
\midrule
\multicolumn{3}{c|}{\cellcolor{gray!10}Human}
 & \cellcolor{gray!10}100 & \cellcolor{gray!10}95.4 & \cellcolor{gray!10}96.3 & \multicolumn{1}{c|}{\cellcolor{gray!10}95.6} &\cellcolor{gray!10}97.1&\cellcolor{gray!10}97.1&\cellcolor{gray!10}97.1&\cellcolor{gray!10}97.1&\cellcolor{gray!10}97.1& \multicolumn{1}{c|}{\cellcolor{gray!10}97.1} &\multicolumn{1}{c|}{\cellcolor{gray!10}00.0}
 & \cellcolor{gray!10}97.1 \\
\bottomrule
\end{tabular}
}
% \vspace{-0.3cm}
\end{table*}

\begin{table}[ht]
    \centering
    % First Minipage
    \begin{minipage}{0.49\textwidth}
        \centering
       \caption{Success rate (\%) across zero-shot \axt\ and \tw\ variants. \colorbox{purple!07}{Pink} indicates held-out versions \textit{i.e.}, unseen during training. $\mathcal{D}_{\tau,v}$ is the training data for version $v$. The worst performances in each setting are \underline{underlined}.}
       \vspace{4pt}
\label{tab:generalization}
        \setlength{\tabcolsep}{3pt}
        \resizebox{\columnwidth}{!}{%
\begin{tabular}{llcccccc}
\toprule
\textbf{Model} & ${\mathcal{D}_{\tau,v}}$ & \textbf{v1} & \textbf{v2} & \textbf{v3} & \textbf{v4} & \textbf{v5} & \textbf{v6} \\
\midrule

\multirow{4}{*}{\begin{tabular}[c]{@{}l@{}}Qwen-3 4B\\\end{tabular}}
 & -   & \cellcolor{purple!07}22.0 & \cellcolor{purple!07}18.4 & \cellcolor{purple!07}19.4 & \cellcolor{purple!07}20.7 & \cellcolor{purple!07}\underline{17.1} & \cellcolor{purple!07}23.0 \\
 & v$_1$   & 23.0 & \cellcolor{purple!07}19.1 & \cellcolor{purple!07}16.5 & \cellcolor{purple!07}15.2 & \cellcolor{purple!07}\underline{09.7} & \cellcolor{purple!07}12.9 \\
 & v$_5$   & \cellcolor{purple!07}\underline{13.6} & \cellcolor{purple!07}18.8 & \cellcolor{purple!07}15.5 & \cellcolor{purple!07}16.8 & 20.1 & \cellcolor{purple!07}17.8 \\
 & v$_{1:5}$ & 31.7 & 30.4 & 31.7 & 31.3 & 31.0 & \cellcolor{purple!07}\underline{29.7} \\
\midrule

\multirow{4}{*}{\begin{tabular}[c]{@{}l@{}}Qwen-3 4B\\Thinking\end{tabular}}
 & -   & \cellcolor{purple!07}29.1 & \cellcolor{purple!07}28.5 & \cellcolor{purple!07}28.8 & \cellcolor{purple!07}31.7 & \cellcolor{purple!07}\underline{24.9} & \cellcolor{purple!07}29.5 \\
 & v$_1$   & 14.6 & \cellcolor{purple!07}12.0 & \cellcolor{purple!07}12.0 & \cellcolor{purple!07}12.9 & \cellcolor{purple!07}\underline{09.7} & \cellcolor{purple!07}13.3 \\
 & v$_5$   & \cellcolor{purple!07}14.9 & \cellcolor{purple!07}14.2 & \cellcolor{purple!07}\underline{12.9} & \cellcolor{purple!07}15.2 & 18.4 & \cellcolor{purple!07}15.9 \\
 & v$_{1:5}$ & 33.0 & 30.7 & 35.6 & 28.5 & 32.7 & \cellcolor{purple!07}\underline{27.9} \\
\midrule

\multirow{4}{*}{\begin{tabular}[c]{@{}l@{}}Llama-3.1 8B\\\end{tabular}}
 & -  & \cellcolor{purple!07}\underline{00.4} & \cellcolor{purple!07}00.8 & \cellcolor{purple!07}\underline{00.4} & \cellcolor{purple!07}01.0 & \cellcolor{purple!07}00.8 & \cellcolor{purple!07}00.8 \\
& v$_1$   & 18.1 & \cellcolor{purple!07}15.9 & \cellcolor{purple!07}12.0 & \cellcolor{purple!07}17.8 & \cellcolor{purple!07}\underline{07.1} & \cellcolor{purple!07}11.7 \\
 & v$_5$   & \cellcolor{purple!07}\underline{13.6} & \cellcolor{purple!07}16.2 & \cellcolor{purple!07}14.9 & \cellcolor{purple!07}15.5 & 14.6 & \cellcolor{purple!07}16.5 \\
 & v$_{1:5}$ & 29.1 & 27.9 & 25.6 & 25.2 & \underline{24.3} & \cellcolor{purple!07}26.2 \\
\bottomrule
\end{tabular}
\vspace{-4pt}
}
    \end{minipage}
    \hfill % Adds spacing between the two minipages
    % Second Minipage
    \begin{minipage}{0.49\textwidth}
        \centering
        \caption{Success Rate (\%) of the Qwen-3 4B model on the v$_6$ environments, when trained under different combinations of Thinking (T), Memory (M), and Planning (P) tokens. Action tokens are included in all settings. The best performance is highlighted in \textbf{bold}.}
       \vspace{4pt}
        \setlength{\tabcolsep}{3pt}
\renewcommand{\arraystretch}{1.15}
\resizebox{0.98\columnwidth}{!}{%
\begin{tabular}{ccc|ccccc}
\toprule
\multicolumn{3}{c|}{\textbf{Response}} & \multicolumn{5}{c}{\textbf{Environment}} \\
\cmidrule(lr){1-3}\cmidrule(lr){4-8}
\textbf{T} & \textbf{M} & \textbf{P} &
\textbf{Wiki} & \textbf{News} & \textbf{Shop} & \textbf{Multi} & \textbf{O/A} \\
\midrule
\TF{0} & \TF{0} & \TF{0} & 16.1\% & 42.4\% & 17.3\% & 21.7\% & 23.3\% \\
\TF{1} & \TF{0} & \TF{0} & 19.4\% & 31.8\% & 16.0\% & 40.6\% & 25.9\% \\
\TF{0} & \TF{1} & \TF{0} & 18.3\% & 40.9\% & 11.1\% & 36.2\% & 25.2\% \\
\TF{0} & \TF{0} & \TF{1} & 17.2\% & 45.5\% & 25.9\% & 34.8\% & 29.5\% \\
\TF{1} & \TF{1} & \TF{0} & 22.6\% &\cellcolor{cyan!05}\textbf{ 47.0\%} & 22.2\% & 36.2\% & 30.7\% \\
\TF{1} & \TF{0} & \TF{1} & 25.8\% & 39.4\% & 18.5\% & \cellcolor{cyan!05}\textbf{50.7\%} & 32.4\% \\
\TF{0} & \TF{1} & \TF{1} & 21.5\% & \cellcolor{cyan!05}\textbf{47.0\%} & \cellcolor{cyan!05}\textbf{33.3\%} & 47.8\% & 35.9\% \\
\TF{1} & \TF{1} & \TF{1} & \cellcolor{cyan!05}\textbf{31.2\%} & \cellcolor{cyan!05}\textbf{47.0\%} & 30.9\% & 49.3\% & \cellcolor{cyan!05}\textbf{38.5\%} \\
\bottomrule
\end{tabular}
}
\label{tab:ablationTMP}
       \vspace{-4pt}
    \end{minipage}
\end{table}

% \begin{table}[t!]
% \centering
% \small

% % \vspace{-0.4cm}
% \end{table}

\vspace{-8pt}
\section{Experiments}
\vspace{-4pt}
\label{sec:Results}
% \vspace{-8pt}
% \sectionvspace
\subsection{Experimental Setup}\label{sec:experimentalSetup}
\sectionvspace
All experiments are conducted using the BrowserGym environment \cite{tmlr25dechezellesBrowserGym}, with details in \S\ref{sec:experimentDetailsAppendix}.
Results are averages over $3$ runs.
The training was performed using Llamafactory~\cite{zheng2024llamafactory} on the accessibility tree, with models hosted locally using vLLM~\cite{llmcompressor2024}. % ORIG: The agents are evaluated using reasoning, planning, and memory capabilities, as tasks require varying degrees of these skills.
{The agents are evaluated with reasoning, planning, and memory capabilities enabled, as tasks require varying degrees of these skills.}

%%% Final draft by Farhan
\textbf{Baselines.} For the prompting baselines, we evaluate a range of open-source models, including (i) LLMs: Qwen-3 4B base and thinking \cite{yang2025qwen3}, Llama-3.1 8B \cite{grattafiori2024llama}, and (ii) VLMs: Qwen-3 VL 8B base and thinking \cite{yang2025qwen3}, and Gemma-3 12B \cite{team2025gemma}. We focus on open-source models to assess the effect of training on a model's robustness. The LLMs receive either the \html~or Accessibility Tree (\axt), and VLMs receive either the same input, screenshots (\sshot), or a Set of Marks (\som). % [REFINE: plain] The four training settings were not parallel; each now names what it trains on.
% ORIG: The LLMs are also fine-tuned on the AXT using behavior cloning on single ({BCv$_6$}), multiple versions ({BCv$_{1:6}$}), with reasoning traces (\bct), and the \timewarp~variant (\tw).
{The LLMs are also fine-tuned on the \axt\ using behavior cloning on a single version ({BCv$_6$}), on multiple versions ({BCv$_{1:6}$}), on multiple versions with reasoning traces (\bct), and the \timewarp\ variant (\tw).}

%\input{Figures/graphs/sampleExperiment}

% \farhan{Execution plan has checkpoints. Humans add the checkpoints.}
% \farhan{Add Complexity -> Number of elements in visual and textual elements, number of tokens. Add it in the version figure. Right now it is a bit unclear what the versions represent visually. Adding a new diagram on the versions (3rowsx2columns) will be pretty cool too}
% \farhan{Examples of tasks in the main table. Average length of trajectories for the GPT trajectories, and average length of tasks.  Remember, this is a dataset paper; talk a bit more about the dataset statistics.}
\vspace{-4pt}
\subsection{Results and Analysis}

% ORIG: Following Tab.\ref{tab:mainBenchmarkVisual} and \ref{tab:mainBenchmarkTextual}, \textsc{TimeWarp-BC} (\tw) models consistently outperform the other zero-shot (ZS) and trained baselines, notably, Qwen 3 4B \tw\ achieving the highest success rate across all environments and versions.
{Following Tab.~\ref{tab:mainBenchmarkVisual} and \ref{tab:mainBenchmarkTextual}, \textsc{TimeWarp-BC} (\tw) models consistently outperform the other zero-shot (ZS) and trained baselines, with Qwen-3 4B \tw\ achieving the highest success rate across all environments and versions.} Performance gains are more pronounced on multi-site tasks and remain consistent across all versions. We provide additional empirical results (\S\ref{sec:additionalResultAnalysis}) in the appendix.

\textbf{ZS visual agents are vulnerable to web changes.} Following Tab. \ref{tab:mainBenchmarkVisual}, % ORIG: performance of agents using SoM ranges span $[2.6,21.4]$ for Qwen-3 VL 8B
{the performance of agents using \som\ spans $[2.6,21.4]$ for Qwen-3 VL 8B}, $[1.3,12.3]$ for Qwen-3 VL 8B Thinking, and $[7.4,14.8]$ for Gemma-3 12B. The variation is less noticeable with \sshot, but this is primarily because models perform worse in this setting. While Qwen-3 VL family models have poor \sshot\ performance, Gemma achieves a decent {$8.8\%$} overall, with a consistent $[8.1,10.7]$ across the versions. Notably, the same Qwen-3 VL 8B checkpoint shows a performance range ($\Delta$v) of $7.4/8.1$ under \html/\axt\ but $18.8$ under \som, \textit{i.e.}, with model capacity held constant, the additional variance is attributable to the visual observation modality rather than the model's base performance.

\begin{table*}[ht]
\centering
\footnotesize
\caption{Success rate (\%) of LLMs under various (i) Zero-Shot (\textcolor{red!80!black}{\textbf{ZS}}) settings: \html\ and Accessibility Tree (\axt) and (ii) Training (\textcolor{teal}{\textbf{T}}) settings: Behavior Cloning on a single version ({BCv$_{6}$}), all versions ({BCv$_{1:6}$}), using Thinking tokens ({BCTv$_{1:6}$}), and using \textsc{TimeWarp-BC} (\tw). $\Delta$v represents the difference between version-wise best and worst performance, \textbf{bold} indicates best performance.}
\label{tab:mainBenchmarkTextual}
\setlength{\tabcolsep}{4pt}
\resizebox{\textwidth}{!}{%
\begin{tabular}{lcccccccccccccc}
\toprule
\multirow{2}{*}{\textbf{Model}} &
\multicolumn{2}{c}{\multirow{2}{*}{\textbf{Setting}}} &
\multicolumn{4}{c}{\textbf{Environment}} &
\multicolumn{7}{c}{\textbf{Version}} &
\multirow{2}{*}{\textbf{O/A}} \\
\cmidrule(lr){4-7}\cmidrule(lr){8-14}
& & &
\textbf{Wiki} & \textbf{News} & \textbf{Shop} & \textbf{Multi} &
\textbf{v1} & \textbf{v2} & \textbf{v3} & \textbf{v4} & \textbf{v5} & \textbf{v6} & \textbf{$\Delta$v}\textcolor{red!70!black}{$\downarrow$}
\\
\midrule
\multirow{6}{*}{Qwen-3 4B}& \multirow{2}{*}{\textcolor{red!80!black}{\textbf{ZS}}} 
 & \multicolumn{1}{c|}{\cellcolor{yellow!15}HTML}
 & 19.0 & 26.0 & 18.5 & \multicolumn{1}{c|}{19.8}
 & 21.7 & 19.8 & 21.1 & 21.0 & 17.1 & \multicolumn{1}{c|}{22.6} & \multicolumn{1}{c|}{05.5}
 & 20.5 \\
 && \multicolumn{1}{c|}{\cellcolor{orange!15}AXT}
 & 20.3 & 25.3 & 18.1 & \multicolumn{1}{c|}{18.6}
 & 22.0 & 18.4 & 19.4 & 20.7 & 18.8 & \multicolumn{1}{c|}{23.0} & \multicolumn{1}{c|}{04.5}
 & 20.4 \\ \cmidrule{2-15}
 & \multirow{4}{*}{\textcolor{teal}{\textbf{T}}} & \multicolumn{1}{c|}{\cellcolor{red!10}BCv$_6$}
 & 11.5 & 38.9 & 20.2 & \multicolumn{1}{c|}{11.1}
 & 17.5 & 19.4 & 21.3 & 23.0 & 17.8 & \multicolumn{1}{c|}{18.1} & \multicolumn{1}{c|}{05.5}
 & 19.5 \\
 && \multicolumn{1}{c|}{\cellcolor{red!10}BCv$_{1:6}$}
 & 10.2 & 34.9 & 16.5 & \multicolumn{1}{c|}{16.6}
 & 16.8 & 17.1 & 17.5 & 21.4 & 22.0 & \multicolumn{1}{c|}{16.5} & \multicolumn{1}{c|}{05.5}
 & 18.5 \\

 && \multicolumn{1}{c|}{\cellcolor{violet!15}BCTv$_{1:6}$}
 & 14.2 & 36.6 & 25.5 & \multicolumn{1}{c|}{38.4}
 & 23.0 & 27.2 & 25.3 & 25.5 & 30.4 & \multicolumn{1}{c|}{32.7} & \multicolumn{1}{c|}{09.7}
 & 27.4 \\

 && \multicolumn{1}{c|}{\cellcolor{blue!10}TW}
 & \textbf{\cellcolor{blue!10}31.0} & \textbf{\cellcolor{blue!10}43.2} & \textbf{\cellcolor{blue!10}30.7} & \multicolumn{1}{c|}{\textbf{\cellcolor{blue!10}49.8}}
 & \textbf{\cellcolor{blue!10}39.8} & \textbf{\cellcolor{blue!10}35.6} & \textbf{\cellcolor{blue!10}40.8} & \textbf{\cellcolor{blue!10}36.6} & \textbf{\cellcolor{blue!10}34.9} & \multicolumn{1}{c|}{\textbf{\cellcolor{blue!10}38.5}} & \multicolumn{1}{c|}{\cellcolor{blue!10}05.8}
 & \textbf{\cellcolor{blue!10}37.7} \\
\midrule
\multirow{6}{*}{\begin{tabular}[l]{@{}l@{}}Qwen-3 4B\\Thinking\end{tabular}}& \multirow{2}{*}{\textcolor{red!80!black}{\textbf{ZS}}} 
 & \multicolumn{1}{c|}{\cellcolor{yellow!15}HTML}
 & 27.2 & 33.1 & 28.4 & \multicolumn{1}{c|}{29.2}
 & 28.8 & 28.8 & 29.5 & 32.0 & 26.2 & \multicolumn{1}{c|}{30.1} & \multicolumn{1}{c|}{05.8}
 & 29.2 \\
 && \multicolumn{1}{c|}{\cellcolor{orange!15}AXT}
 & 26.9 & 31.1 & 28.8 & \multicolumn{1}{c|}{29.0}
 & 29.1 & 28.5 & 28.8 & 31.7 & 24.9 & \multicolumn{1}{c|}{29.5} & \multicolumn{1}{c|}{06.8}
 & 28.8 \\ \cmidrule{2-15}
 & \multirow{4}{*}{\textcolor{teal}{\textbf{T}}} & \multicolumn{1}{c|}{\cellcolor{red!10}BCv$_6$}
 & 04.1 & 17.4 & 09.7 & \multicolumn{1}{c|}{05.1}
 & 06.5 & 05.2 & 09.0 & 10.7 & 09.4 & \multicolumn{1}{c|}{11.0} & \multicolumn{1}{c|}{06.2}
 & 08.6 \\
 && \multicolumn{1}{c|}{\cellcolor{red!10}BCv$_{1:6}$}
 & 04.3 & 31.1 & 16.0 & \multicolumn{1}{c|}{01.9}
 & 15.2 & 12.9 & 10.7 & 13.6 & 12.0 & \multicolumn{1}{c|}{11.0} & \multicolumn{1}{c|}{04.6}
 & 12.6 \\
  && \multicolumn{1}{c|}{\cellcolor{violet!15}BCTv$_{1:6}$}
 & 13.6 & 30.5 & 22.5 & \multicolumn{1}{c|}{32.4}
 & 24.3 & 24.0 & 26.5 & 23.0 & 25.2 & \multicolumn{1}{c|}{19.4} & \multicolumn{1}{c|}{07.1}
 & 23.7 \\
 && \multicolumn{1}{c|}{\cellcolor{blue!10}TW}
 & \cellcolor{blue!10}16.1 & \cellcolor{blue!10}37.6 & \cellcolor{blue!10}23.3 & \multicolumn{1}{c|}{\cellcolor{blue!10}37.5}
 & \cellcolor{blue!10}27.8 & \cellcolor{blue!10}28.2 & \cellcolor{blue!10}28.2 & \cellcolor{blue!10}26.2 & \cellcolor{blue!10}26.9 & \multicolumn{1}{c|}{\cellcolor{blue!10}26.9} & \multicolumn{1}{c|}{\cellcolor{blue!10}02.0}
 
 & \cellcolor{blue!10}27.4 \\
\midrule
\multirow{6}{*}{Llama-3.1 8B}& \multirow{2}{*}{\textcolor{red!80!black}{\textbf{ZS}}} 
 & \multicolumn{1}{c|}{\cellcolor{yellow!15}HTML}
 & 00.0 & 00.0 & 00.0 & \multicolumn{1}{c|}{00.0}
 & 00.0 & 00.0 & 00.0 & 00.0 & 00.0 & \multicolumn{1}{c|}{00.0} & \multicolumn{1}{c|}{00.0}
 & 00.0 \\
 && \multicolumn{1}{c|}{\cellcolor{orange!15}AXT}
 & 02.3 & 00.0 & 00.0 & \multicolumn{1}{c|}{00.0}
 & 00.4 & 00.8 & 00.4 & 01.0 & 00.8 & \multicolumn{1}{c|}{00.8} & \multicolumn{1}{c|}{00.6}
 & 00.7 \\ \cmidrule{2-15}
 & \multirow{4}{*}{\textcolor{teal}{\textbf{T}}}  & \multicolumn{1}{c|}{\cellcolor{red!10}BCv$_6$}
 & 00.5 & 09.9 & 03.7 & \multicolumn{1}{c|}{00.2}
 & 02.6 & 04.5 & 02.3 & 03.9 & 00.0 & \multicolumn{1}{c|}{06.5} & \multicolumn{1}{c|}{06.5}
 & 03.3 \\
 && \multicolumn{1}{c|}{\cellcolor{red!10}BCv$_{1:6}$}
 & 00.0 & 01.3 & 00.0 & \multicolumn{1}{c|}{00.0}
 & 00.0 & 01.0 & 00.3 & 00.0 & 00.0 & \multicolumn{1}{c|}{00.3} & \multicolumn{1}{c|}{01.0}
 & 00.3 \\
  && \multicolumn{1}{c|}{\cellcolor{violet!15}BCTv$_{1:6}$}
 & 12.2 & 27.8 & 29.8 & \multicolumn{1}{c|}{27.8}
 & 24.6 & 26.9 & 22.3 & 26.2 & 17.5 & \multicolumn{1}{c|}{24.3} & \multicolumn{1}{c|}{09.4}
 & 23.6 \\
 && \multicolumn{1}{c|}{\cellcolor{blue!10}TW}
 & \cellcolor{blue!10}12.7 & \cellcolor{blue!10}42.9 & \cellcolor{blue!10}24.3 & \multicolumn{1}{c|}{\cellcolor{blue!10}34.0}
 & \cellcolor{blue!10}28.8 & \cellcolor{blue!10}28.8 & \cellcolor{blue!10}27.2 & \cellcolor{blue!10}27.5 & \cellcolor{blue!10}22.0 & \multicolumn{1}{c|}{\cellcolor{blue!10}27.5} & \multicolumn{1}{c|}{\cellcolor{blue!10}06.8}
 & \cellcolor{blue!10}27.0 \\
\bottomrule
\end{tabular}
}
% \vspace{-0.3cm}
\end{table*}

\begin{figure}[htp]
    \centering
    {
\begin{tikzpicture}
\begin{axis}[
    ybar,
    width=0.33\columnwidth,
    height=3.35cm,
    enlarge y limits={upper},
    enlarge x limits=0.60,
    ylabel={SR (\%)},
    ymin=10,
    grid=both,
    major grid style={dashed, gray!55},
    minor grid style={gray!20},
    minor tick num=1,
    symbolic x coords={
        Qwen3-4B,
        Qwen3-4B-Th,
        Llama-3.1 8B
    },
    legend image code/.code={
    \draw[#1, draw=black] (0cm,-0.12cm) rectangle (0.15cm,0.12cm);
},
    xtick=data,
    xticklabel style={
    rotate=35,
    anchor=east
},
    bar width=10pt,
    legend style={
        at={(0.5,1.1)},
        anchor=south,
        legend columns=2
    },
    tick label style={font=\small},
]

% Overall (first column)
\addplot[fill=magenta!30, draw=black, postaction={
        pattern=north east lines,
        pattern color=black!90
    }] coordinates {
    (Qwen3-4B,25.2)
    % (Qwen3-4B-Th,16.5)
    (Llama-3.1 8B,20.7)
};

% Overall (second column)
\addplot[fill=teal!30, draw=black] coordinates {
    (Qwen3-4B,16.2)
    % (Qwen3-4B-Th,20.4)
    (Llama-3.1 8B,17.4)
};

\legend{$\mathcal{D}_{\tau,6}$, $\mathcal{D}_{\tau,6\rightarrow1}$}
\end{axis}
\node[anchor=south east] at (current bounding box.south east) {(a)};
\end{tikzpicture}
    }
    \hfill
    {
\begin{tikzpicture}
    \begin{axis}[
        ybar,
        width=0.33\columnwidth,
        height=3.35cm,
        enlarge y limits={upper, value=0.1},
        enlarge x limits=0.60,
        ylabel={SR (\%)},
        ymin=0,
        grid=both,
        major grid style={dashed, gray!55},
        minor grid style={gray!20},
        minor tick num=1,
        symbolic x coords={
            Qwen-3 4B,
            Llama-3.1 8B
        },
        legend image code/.code={
            \draw[#1, draw=black] (0cm,-0.12cm) rectangle (0.15cm,0.12cm);
        },
        xtick=data,
        xticklabel style={
            rotate=35,
            anchor=east
        },
        bar width=10pt,
        legend style={
            at={(0.5,1.1)},
            anchor=south,
            legend columns=2,
            font=\small
        },
        tick label style={font=\small},
    ]
    \addplot[fill=blue!20, draw=black, postaction={
            pattern=north east lines,
            pattern color=black!90
        }] coordinates {
        (Qwen-3 4B, 4.2)
        (Llama-3.1 8B, 2.4)
    };
    \addplot[fill=orange!20, draw=black] coordinates {
        (Qwen-3 4B, 10.3)
        (Llama-3.1 8B, 6.7)
    };
    \legend{BCv$_{1:6}$, TW}
    \end{axis}
    \node[anchor=south east] at (current bounding box.south east) {(b)};
    \end{tikzpicture}
    }
    \hfill
    {
       \begin{tikzpicture}
\begin{axis}[
    ybar,
    width=0.33\columnwidth,
    height=3.35cm,
    enlarge y limits={upper},
    enlarge x limits=0.6,
    ylabel={SR (\%)},
    ymin=10,
    grid=both,
    major grid style={dashed, gray!55},
    minor grid style={gray!20},
    minor tick num=1,
    symbolic x coords={
        Qwen3-4B,
        Qwen3-4B-Th,
        Llama-3.1 8B
    },
    xtick=data,
    xticklabel style={
    rotate=35,
    anchor=east
},
    bar width=10pt,
    legend style={
        at={(0.4,1.05)},
        anchor=south,
        legend columns=2
    },
    legend image code/.code={
    \draw[#1, draw=black] (0cm,-0.12cm) rectangle (0.15cm,0.12cm);
} 
    tick label style={font=\small},
]

% with WA
\addplot[fill=cyan!25, draw=black, postaction={
        pattern=north east lines,
        pattern color=black!90
    }] coordinates {
    (Qwen3-4B,30.8)
    % (Qwen3-4B-Th,29.4)
    (Llama-3.1 8B,15.9)
};

% without WA
\addplot[fill=violet!25, draw=black] coordinates {
    (Qwen3-4B,38.5)
    % (Qwen3-4B-Th,26.9)
    (Llama-3.1 8B,27.5)
};

\legend{with WA, w/o WA}

\end{axis}
% Add (b) label in bottom right
\node[anchor=south east] at (current bounding box.south east) {(c)};
\end{tikzpicture}
    }
    \vspace{0cm}
    \caption{Success rate (SR) of (a) agents trained on \timewarp\ v$_6$ trajectories $\mathcal{D}_{\tau,6}$ \textit{vs.} trained continually on v$_6$ then v$_1$ trajectories $\mathcal{D}_{\tau,6\rightarrow1}$, (b) cross-benchmark performance on WebArena-Lite \cite{Liu2024VisualAgentBenchTLBG}, (c) training with \textit{vs.} without WebArena-Lite training data and evaluated on the v$_6$ environments.}
    % \vspace{-0.6cm}
\label{fig:webArenaTrainingAndContinualLearning}
\end{figure}

\textbf{Text-only agents are more robust.} From Tab. \ref{tab:mainBenchmarkTextual}, % [REFINE: punctuation] Colon replaced by parentheses.
% ORIG: both LLM and VLM agents have significantly lower performance variations when using textual observations: \html\ and \axt, instead of visual ones.
{both LLM and VLM agents have significantly lower performance variations when using textual observations (\html\ and \axt) instead of visual ones.} Performance between \html\ and \axt\ settings is comparable with a ${<}3\%$ difference. Qwen-3 4B Thinking achieves the highest ZS performance of $29.2\%$, with moderate robustness $[26.2,32.0]$. Most models similarly exhibit $\sim6\%$ difference across versions. Visual agents are more vulnerable to version changes, likely because UI shifts more significantly than the underlying HTML/AXT, \textit{e.g.}, while the code differences between v$_1$ (HTML 4.01 Transitional) and v$_5$ (HTML 5.0) are notable, they pale in comparison to the visual changes (\S\ref{sec:versionExamples}).

\textbf{Training helps, if done right.} Training LLM agents with \tw\ provides substantial performance gains: +17.3\% overall for Qwen-3 4B, and +26.3\% for Llama-3.1 8B, with similar consistency across versions. 
We highlight Llama-3.1 8B, improving from an unusable 0.7\% S.R. in untrained \axt\ to a functioning agent with 27.0\% S.R. in \tw. However, these gains do not extend to vanilla \bc, as the Qwen models see performance drops, while Llama shows incremental gains. Adding reasoning traces during training (\bct) partially mitigates this, but is outperformed by \timewarp-\textsc{BC}. These results suggest {that} the absence of additional tokens during training hinders the agent's ability to solve complex web tasks in our benchmark, strongly aligning with previous findings \cite{vattikonda2025how,hu-etal-2025-webcot}. 

\begin{figure*}[ht]
    \centering
    \includegraphics[width=0.98\linewidth]{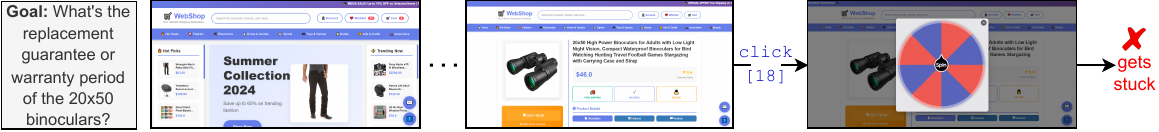}
    \caption{Popup Ads in Shop v$_5$ affecting a web agent's workflow and task success.}
    \label{fig:popUpAdsErrorQualAnalysis}
    % \vspace{-0.2cm}
    \vspace{-10pt}
\end{figure*}

\textbf{Thinking Models as Web Agents.}  We evaluate the thinking variants of  LLM and VLM agents. % [REFINE: story] Tab.~\ref{tab:mainBenchmarkVisual} gives 12.6/12.8/4.8 (Thinking) vs. 24.3/23.0/11.9 (non-thinking) under HTML/AXT/SoM, so the thinking variant underperforms. The following "In contrast" only reads correctly this way.
% ORIG: Qwen-3VL 8B thinking outperforms its non-thinking counterpart across all non-trivial settings. Our analysis shows that thinking models are producing more errors during visual reasoning (Fig. \ref{fig:overthikingQwen3VLthinking}).
{Qwen-3 VL 8B Thinking underperforms its non-thinking counterpart across all training settings. Our analysis shows that thinking models produce more errors during visual reasoning (Fig.~\ref{fig:overthikingQwen3VLthinking}).} In contrast, the thinking variant of Qwen-3 4B outperforms the non-thinking model in the \html\ and \axt\ settings, but its performance degrades under \tw. Our analysis suggests that while training adds more structure to the non-thinking tokens (Fig.~\ref{fig:qwen4BThinkingImproved}), it can lead to mild performance degradation, potentially due to a mismatch between the teacher’s and student’s reasoning processes.

\textbf{Does multi-version training generalize better?} We evaluate generalization by holding out at least one version and training agents on the remaining version(s). As shown in Tab. \ref{tab:generalization}, models trained on a single version, either the older v$_1$ or the recent v$_5$, perform better on similar versions, but degrade on dissimilar ones. 
However, training on $5$ versions generalizes better to the held-out versions. Degradation also scales with version distance, \textit{e.g.}, Qwen-3 4B trained on v$_1$ scores $23.0/19.1/16.5/15.2/9.7$ on v$_1$--v$_5$, \textit{i.e.}, failure tracks the amount of change, not just its presence. {\S\ref{sec:theory} gives a simple theoretical account of both trends in multi-version training.}

\textbf{Can training transfer across benchmarks?} We assess cross-benchmark generalizability by training on the \timewarp\ environments and evaluating on WebArena-Lite \cite{Liu2024VisualAgentBenchTLBG}, with Fig. \ref{fig:webArenaTrainingAndContinualLearning} (b) showing \tw\ agents significantly outperforming \bc.
We also assess whether integrating cross-benchmark data can improve performance by fine-tuning on WebArena-Lite and then \timewarp\ continually, with Fig. \ref{fig:webArenaTrainingAndContinualLearning} (c) showing the cross-dataset samples degrading the performance. 

\textbf{Can we train continually on versions?} We experiment by training Qwen-3 4B and Llama-3.1 8B using \timewarp-\textsc{BC} on (i) v$_6$ only for 1 epoch, and (ii) first v$_6$, then continually on v$_1$, for 1 epoch each. When evaluated on v$_6$ environments, performance drops significantly after continual training (Fig. \ref{fig:webArenaTrainingAndContinualLearning} (a)), which could be due to catastrophic forgetting \cite{kirkpatrick2017overcoming}. 

% We treat this as a preliminary result and view continual learning on evolving web versions as a promising direction for future exploration.

\textbf{Does training sample count matter?} Following Fig. \ref{fig:trainingSamplesBCTWContextLength}(a), for Qwen-3 4B trained with \bc\ on a single version, performance saturates as the number of training samples increases. In contrast, the \tw\ performance scales approximately linearly with the number of training samples.

% and \tw trained on six versions, and find \bc's performance saturating while \tw's performance rises somewhat linearly with the number of training samples.

\textbf{Ablations.} We perform ablations using the Qwen-3 4B model on the v$_6$ environments to assess (i) training token combinations, (ii) training context window length, (iii) parameter-efficient fine-tuning via LoRA, and (iv) the number of training epochs. 
Tab. \ref{tab:ablationTMP} shows that training on thinking, memory, and planning tokens, along with the action tokens, significantly outperforms other training token combinations. We also observe performance gains from increasing the context window, specifically, using $64$k context and training for $3$ epochs (\S\ref{sec:ContextLengthAblation}, \ref{sec:epochResults}). While LoRA fine-tuning improves step-efficiency, it results in a significant performance drop compared to full fine-tuning (\S\ref{sec:LoRAResults}).

\textbf{Error Analysis.} We observe web agents failing in both older and newer web designs (details in \S\ref{sec:variationsVersions}). A common feature of older websites is the lack of quality-of-life features, such as a table of contents (Fig.~\ref{fig:tableofcontent}a), while newer websites introduce complex designs and bloat (Fig. \ref{fig:popUpAdsErrorQualAnalysis}). % [REFINE: plain] Literal phrase instead of "jarring pitfall".
% ORIG: However, the most jarring pitfall of agents is the misalignment between action and non-action tokens
{However, the most striking failure of agents is the misalignment between action and non-action tokens}, present in both trained and untrained agents (\S\ref{sec:actionMisalignmentError}, Fig. \ref{fig:actionMisalignment}). Thinking models also tend to generate more structured thinking tokens after training (Fig. \ref{fig:qwen4BThinkingImproved}) but often overanalyze the observation (\S\ref{sec:thinkingError}, Fig.~\ref{fig:overthikingQwen3VLthinking}).
\vspace{-1em}
\section{Discussion \& Conclusion}
\vspace{-1em}
As web agents move from research to real-world use, the ability to adapt to an ever-changing environment will matter as much as raw task performance. % [REFINE: plain] "scaffolding" metaphor replaced.
% ORIG: \timewarp\ provides the scaffolding needed to pursue that.
{\timewarp\ provides the testbed needed to study that.} Through evaluations spanning temporal changes, generalization, cross-benchmark transfer, and preliminary continual-learning experiments, we assess a range of open-source agents across modalities and training methods. % [REFINE: plain] "paint a consistent picture" replaced.
% ORIG: Our results paint a consistent picture: \textit{web agents can be brittle to temporal changes}.
{Our results consistently show that \textit{web agents can be brittle to temporal changes}.}

% most acutely zero-shot agents under visual observations and text agents trained on a single version, whereas multi-version training mitigates both. Beyond average success, the performance range $\Delta$v and the flip rate (\S\ref{sec:flipRate}) expose this brittleness on the same goals, and we discuss how version robustness relates to task and domain generalization in \S\ref{sec:generalizationDiscussion}.

Temporal robustness is not simply a data or training problem. Collecting multi-version training data is expensive, as web interfaces change in ways that invalidate existing trajectories. % ORIG: \timetraj\ unlocks a new way to collect reusable high-level plans
{\timetraj\ provides a new way to collect reusable high-level plans}, and the resulting trajectories with detailed per-step responses can be distilled to further improve robustness using \timewarp-\textsc{BC}. We hope \timewarp\ serves as a foundation for agents that do not merely perform, but adapt over time.

\bibliographystyle{unsrtnat}
\bibliography{ref}

% \setlength{\subfigcapskip}{5pt}
% \tableofcontents

\newpage
\appendix

\begin{center}
\LARGE Supplementary Material of \timewarp: Evaluating Web Agents by Revisiting the Past
\end{center}

% 1. Force LaTeX to recognize sections in the ToC again

\DoToC

\newpage
% \onecolumn

\section{Broader Impact}
\label{sec:broaderImpact}

Our work contributes to the methodological shift toward temporal evaluation in agentic AI, with implications for any domain where digital interfaces evolve continuously (\textit{e.g.}, mobile apps, documentation portals). The \textsc{TimeWarp} benchmark could thus serve as a general testbed for the long-term reliability of agents in any dynamic information retrieval system. Below, we detail the broader impact of \timewarp\ across several aspects.

\subsection{Robust Web Agents}
\label{subsec:robustWebAgentsBroaderImpact}
Robust web agents have significant potential to transform how humans interact with the internet and computers. Evaluating and training systems that remain reliable as websites evolve in the real world % [REFINE: punctuation] Dash removed, and "On the other hand, though ... could have" was a sentence fragment.
% ORIG: could dramatically improve user-facing applications -- from assistive browsing tools to automated customer services. On the other hand, though greater automation of the web could have potentially negative consequences, such as increased hacking, security risks, or websites becoming more vulnerable to unauthorized bot scraping or scalping. However, we do not believe that this is an inherent limitation of \timewarp~but of agents, in general.
{could dramatically improve user-facing applications, from assistive browsing tools to automated customer services. On the other hand, greater automation of the web could have negative consequences, such as increased hacking, security risks, and websites becoming more vulnerable to unauthorized bot scraping or scalping. However, we do not believe that this is an inherent limitation of \timewarp\ but of agents in general.}

\subsection{A New Paradigm of Trajectory Annotation for Agents}
\label{subsec:broaderImpactAnnotation}
Through \timetraj, we unlock a new paradigm for scalably collecting data to train agents via distillation. Rather than spending time and resources on human annotators to collect new trajectories whenever a website changes, one can annotate a high-level plan and then use it to automatically generate teacher trajectories for the updated website. As websites naturally evolve, \timetraj\ can re-run a teacher trajectory on the human-refined plans, as these are usually robust to temporal changes. {Agents thus gain updated training data without additional annotation cost.} 

The principle of collecting a more abstract representation from humans and then using automated methods to specify the detailed trajectory can be applied to other domains, such as embodied agents and robotics. This is {especially} important for the web, as websites are constantly updated in their UI, functions, workflows, and data, and \timetraj\ helps keep the training data up to date.

\subsection{Expandability of \timewarp}
\label{subsec:timeWarpExpandability}
As \timewarp\ is tightly integrated with the BrowserGym \cite{tmlr25dechezellesBrowserGym} ecosystem, we can easily expand the environment and dataset across multiple dimensions. The expandability of \timewarp\ positions it as a generalized framework for evaluating the robustness of web agents.

\textbf{Tasks.}  Adding a new task requires only a task definition, a desired outcome, and a lightweight verifier specification composed from our five evaluation function classes (\S\ref{sec:deterministicVerifierDetails}). For tasks with ambiguous outcomes, multiple valid answers can be provided (see \S\ref{sec:multipleCorrectAnswers} for details), and subjective tasks can fall back to the generalized LLM judge, which requires no task-specific evaluation criterion.

\textbf{Versions.} \timewarp's design allows new front-end versions to be easily added, removed, or swapped. All front-end versions share the lightweight backend and data stores, while remaining fully independent without cross-version dependencies.

% . The repository has every version isolated, allowing each to work independently with no dependencies.

\textbf{Websites.} All web environments in \timewarp\ share a common architecture built on the Python-based Flask framework. Introducing a new website is as simple as implementing a new Flask backend with a data store, both of which are fully shared across versions.

\subsection{Continual and Lifelong Learning of Agents}
\label{sec:continualLearningAgentsBroaderImpact}
As websites evolve and new data becomes available, a natural question arises: can agents be continually trained on successive versions without degrading previous capabilities? \timewarp\ provides a controlled platform to study this challenge. Our results in Fig.~\ref{fig:webArenaTrainingAndContinualLearning} show that agents trained sequentially on a single version suffer significant performance drops. This effect can be studied systematically across multiple temporal versions. We hope \timewarp~serves as a testbed for developing continual learning algorithms that help agents retain past capabilities while adapting to the ever-changing web.

\section{Limitations}
\label{sec:limitations}

\paragraph{Domain Coverage.} Each \timewarp\ domain costs roughly six times as much to create as a conventional web-benchmark domain, since every website ships six fully functional versions. As version breadth, rather than domain breadth, is our primary axis of evaluation, this naturally limited the number of environments we could include. The three environments were chosen to span the interaction primitives most affected by UI evolution: three distinct search paradigms, content- vs. metadata-centric retrieval, visual product queries, transactions, and multi-site queries. % [REFINE: punctuation] Semicolon list split into sentences.
% ORIG: Given more time, we wish to include categories such as social media platforms (\textit{e.g.}, Twitter/X, Reddit), which feature dynamic, user-generated content and infinite scrolls; web-based productivity tools (\textit{e.g.}, Google Docs, Notion), which involve stateful editing environments; entertainment/streaming platforms, which are media-heavy and interaction-light; and government and healthcare portals, which often present unique accessibility and navigation challenges.
{Given more time, we wish to include four further categories. Social media platforms (\textit{e.g.}, Twitter/X, Reddit) feature dynamic, user-generated content and infinite scrolling. Web-based productivity tools (\textit{e.g.}, Google Docs, Notion) involve stateful editing environments. Entertainment and streaming platforms are media-heavy and interaction-light. Government and healthcare portals often present unique accessibility and navigation challenges.} Our findings on the vulnerability of agents to web changes should therefore be read within the covered domains, and extending them to a broader range of web workflows remains future work.

\paragraph{Replication of the Earlier Web.} Reproducing historical frontend code is inherently challenging, as we are not practitioners who developed with the earlier web firsthand. We made best-effort attempts using coding agents and the Wayback Machine \cite{InternetArchive} as a reference, though incomplete archival coverage in the Wayback Machine meant we could not rely on it entirely. The reproduced interfaces may therefore not be exactly identical to their actual historical counterparts. However, each era's UI design was manually verified against existing archival snapshots \cite{WebDesignMuseum}, and we consider them close replicas of the actual ones. We further quantify how faithfully the reconstructed versions track real web evolution in \S\ref{sec:versionValidation}, which establishes that the versions vary along historically plausible axes, not that they exactly match the deployed pages of specific years.

% \paragraph{Version Sensitivity.} Version sensitivity is not uniform across agents: it is strongest for zero-shot agents operating on visual observations and for text agents trained on a single version (\S\ref{sec:Results}), whereas multi-version training with \timetraj\ and \timewarp-\textsc{BC} substantially mitigates both.

\paragraph{Partial Rewards.} Most of the web agent literature relies on sparse rewards assigned at the end of a trajectory \cite{Shi2017WorldOBN,yao2022webshop,zhou2023webarena,koh2024visualwebarena}, and we follow the same approach. While this is not a limitation unique to our work but shared across the web agent literature, partial rewards could benefit the field more broadly by enabling finer-grained evaluation and providing dense training signals for RL-based approaches. As it represents a research direction distinct from our core contributions, we do not explore it.

\paragraph{Reproducibility of Proprietary Models.} The teacher trajectories are collected with GPT-5, and the two subjective goals, as well as the validation of our verifiers, rely on a GPT-5.1 judge \cite{gu2024survey}, so parts of our pipeline depend on proprietary models whose exact checkpoints may not remain available. We mitigate the evaluation-side dependence in three ways: $99.1\%$ of the goals are scored by deterministic verifiers that require no LLM (\S\ref{sec:deterministicVerifierDetails}), an open-source Gemma-4 12B judge \cite{team2026gemma} is supported as a drop-in replacement with $99.74\%$ agreement (\S\ref{sec:judgeEvaluatorDetails}), and the judge itself was validated against human verdicts ($98.5\%$ agreement, \S\ref{sec:evaluationJudgeEval}) and across judge models (Tab.~\ref{tab:selfEvalJudge}). The collected teacher trajectories are also released, so training can be reproduced without re-querying the teacher model.

\paragraph{Post-Training.} Our work focuses on the behavior cloning phase, which is considered a prerequisite to any form of web agent training pipeline \cite{webagentR1-shi-etal-2024-direct,hu-etal-2025-webcot}. Recent literature has explored a variety of post-training methods, including PPO \cite{qi2024webrl}, DPO \cite{putta2024agent}, and GRPO \cite{webagentR1-shi-etal-2024-direct}. These approaches are orthogonal to our core contribution and can be applied on top of our training method for further gains.

% While these are somewhat orthogonal to our core contribution of establishing a benchmark for evaluating the evolving web, we believe a more methodology-focused paper can explore the effect of post-training methods in improving the temporal robustness of web agents.

% Our work does not explore these directions as we view them orthogonal to the core contribution of our work, but a different tangent can explore the impact of these post-training methods to improve the temporal robustness of web agents. 

%Mention RL as a potential training direction; however, frame it more as a future scope of the work. Mention training in the visual modality.

% Safety concerns of robust = more capable agents. Finding security vulnerabilities and also used to red team
% ARicles are from public data sources
% Visual impaired users

\section{Environment \& Benchmark Details}
\label{sec:EnvironmentBenchmarkDetails}

We complement the benchmark overview in \S\ref{sec:timewarpBenchmark} by providing additional details, examples, and comparisons to better understand the \timewarp\ benchmark.
% ORIG: We first look into the observation (\S\ref{sec:ObservationSpace}) and action spaces (\S\ref{sec:actionSpace}), details about how the \timewarp\ dataset has been curated (\S\ref{sec:datasetCreation}), examples of the task goals (\S\ref{sec:goalExamples}), additional details on the environment (\S\ref{sec:envDetails}), versions (\S\ref{sec:versionCreation}, \ref{sec:variationsVersions}), and evaluation (\S\ref{sec:EvaluationDetails}).
{We first describe the observation (\S\ref{sec:ObservationSpace}) and action spaces (\S\ref{sec:actionSpace}), how the \timewarp\ dataset was curated (\S\ref{sec:datasetCreation}), examples of the task goals (\S\ref{sec:goalExamples}), and which goals require visual information (\S\ref{sec:visualInfoAnnotation}). We then provide additional details on the environments (\S\ref{sec:envDetails}), the versions and their validation (\S\ref{sec:versionCreation}, \ref{sec:variationsVersions}, \ref{sec:versionValidation}), and the evaluation (\S\ref{sec:EvaluationDetails}), and close by discussing how version robustness relates to generalization (\S\ref{sec:generalizationDiscussion}).}

\subsection{Observation Space}
\label{sec:ObservationSpace}

As previously mentioned in \S\ref{sec:problem}, \timewarp's observation space can be one of the four types: HTML (\html), accessibility tree (\axt), UI screenshots (\sshot), and Set of Marks (\som) \cite{yang2023setofMarks}. Most of these observations have been used in previous well-established benchmarks \cite{zhou2023webarena,koh2024visualwebarena,Drouin2024WorkArenaHCAW}. In our benchmark, observations are taken from the BrowserGym environment \cite{tmlr25dechezellesBrowserGym}, where \html, \axt, and \som\ are assigned BrowserGym IDs (BIDs) for easier interaction by the web agent. Examples of each observation used in our benchmark have been provided in Fig. \ref{fig:observationExamples}.

\lstset{
  language=HTML,
  basicstyle=\ttfamily\small,
  keywordstyle=\color{blue},
  stringstyle=\color{teal},
  commentstyle=\color{gray},
  showstringspaces=false
}

\begin{figure}[h]
\centering

% =====================
% Row 1 — Listings
% =====================

\begin{minipage}[]{0.48\linewidth}
\textbf{(a) HTML (\html)}
\vspace{0.3em}

\begin{lstlisting}[language=HTML]
<head bid="1">
 <meta bid="2" content="text/html;
 charset=utf-8" http-equiv=
 "Content-Type"/>
 <title bid="3"> 
    HomePage-Wikipedia
 </title>
</head> 
<h1 bid="6" visible="">
 <a bid="7" href="/" visible="">
 HomePage</a> </h1>
<a bid="8" href="/" visible="">
Home </a>
<p bid="9" visible="">
\end{lstlisting}

\end{minipage}
\hfill
\begin{minipage}[]{0.48\linewidth}
\textbf{(b) Acessibility Tree (\axt)}
\vspace{0.3em}

\definecolor{axheading}{RGB}{0,90,150}      % blue-ish
\definecolor{axlink}{RGB}{180,40,40}        % red-ish
\definecolor{axstatic}{RGB}{0,120,60}       % green-ish

\lstdefinelanguage{AXTree}{
  morekeywords=[1]{heading,paragraph,link},
  morekeywords=[2]{clickable},
  morekeywords=[3]{StaticText},
  keywordstyle=[1]\color{blue},
  keywordstyle=[2]\color{axlink},
  keywordstyle=[3]\color{axstatic},
  morestring=[b]',
  stringstyle=\color{gray},
}

% \begin{lstlisting}[language=AXTree]
% [6] heading 'Biophysics', visible
%     [7] link 'Biophysics', clickable, 
%             visible StaticText '['
%     [8] link 'Home', clickable, 
%             visible StaticText ']'
%     [9] paragraph '', visible
%         [10] link 'HomePage', clickable,
%                 visible StaticText '|'
%         [11] link 'RecentChanges', clickable,
%                 visible StaticText '|'
%         [12] link 'Preferences', clickable, 
%                 visible
%     [13] paragraph '', visible
%         [14] emphasis '', visible
%             StaticText 'You'
%                   StaticText 'can'
% \end{lstlisting}

\begin{lstlisting}[language=AXTree]
[6] heading 'HomePage', visible
   [7] link 'HomePage', 
   clickable, visible 
   StaticText '['
 [8] link 'Home', clickable, 
 visible StaticText ']'
 [9] paragraph '', visible
   [10] link 'HomePage', 
   clickable, visible 
   StaticText '|'
   [11] link 'RecentChanges', 
   clickable StaticText '|'
   [12] link 'Preferences', 
   clickable
\end{lstlisting}

\end{minipage}

\vspace{1em}

% =====================
% Row 2 — Images
% =====================

\begin{minipage}[]{0.48\linewidth}

\textbf{(c) Screenshot (\sshot)}
\centering
\vspace{0.3em}

\fbox{\includegraphics[width=\linewidth]{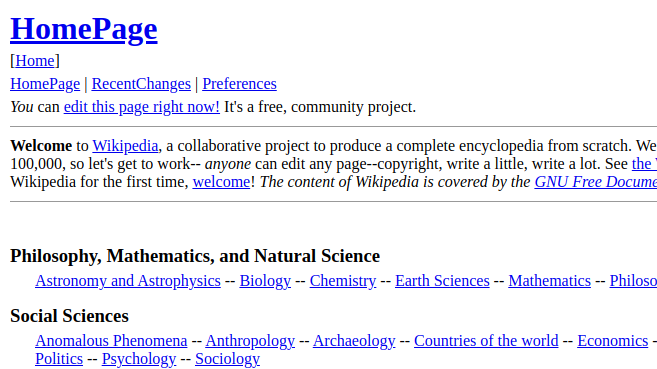}}

\end{minipage}
\hfill
\begin{minipage}[]{0.48\linewidth}
\textbf{(d) Set of Marks (\som)}
\centering

\vspace{0.3em}

\fbox{\includegraphics[width=0.995\linewidth]{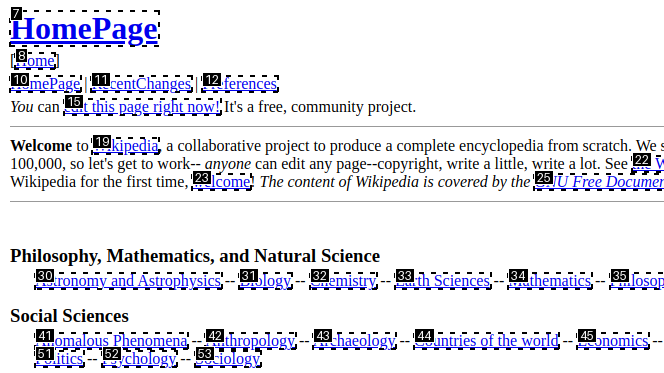}}

\end{minipage}

\caption{The textual (\html\ and \axt), and visual (\sshot\ and \som) observations in \timewarp.}
\label{fig:observationExamples}

\end{figure}

%% In the figure of SoM, use the one from the AgentXRay, which has more colors.

\subsection{Action Space}
\label{sec:actionSpace}
\timewarp's action space also follows previous well-established benchmarks \cite{zhou2023webarena,Drouin2024WorkArenaHCAW}. Table~\ref{tab:actionSpace} summarizes the complete set of actions available to our agents. % [REFINE: punctuation] Hyphens used as dashes replaced by commas and "since".
% ORIG: The actions include \textbf{general website interaction} -  a prerequisite to any web benchmark, \textbf{page and tab navigation} - as \timewarp\ tasks can span multiple websites and require backtracking, and \textbf{user interaction} - as the tasks require returning a final message to the user for evaluation.
{The actions include \emph{general website interaction}, a prerequisite for any web benchmark, \emph{page and tab navigation}, since \timewarp\ tasks can span multiple websites and require backtracking, and \emph{user interaction}, since the tasks require returning a final message to the user for evaluation.} The 
\texttt{report\_infeasible} action is included as a diagnostic action to identify unachievable tasks during dataset creation and can be removed during benchmarking. All actions use the default implementation provided by BrowserGym. The observation and action spaces can be coupled with their respective alignment or refinement strategies, such as AgentOccam \cite{yang2025agentoccam}.

\begin{table}[ht]
\centering
\caption{Action space of the \timewarp\  benchmark.}
\label{tab:actionSpace}
       \vspace{4pt}
\resizebox{0.9\textwidth}{!}{%
\begin{tabularx}{\textwidth}{@{} l l X @{}}
\toprule
\textbf{Type} & \textbf{Action} & \textbf{Description} \\
\midrule

\multirow{4}{*}{General}
& \texttt{scroll(dir)} &
Scroll up or down. \\

& \texttt{fill(id, value)} &
Fill the \texttt{id} field with a string \texttt{value}. \\

& \texttt{click(id, button)} &
Click \texttt{id} element with mouse option \texttt{button}. \\

& \texttt{press(id, key\_comb)} &
Press a key or key-combination (e.g., \texttt{Ctrl\ +\ a}) on \texttt{id} element. \\

\midrule
\multirow{2}{*}{Web Navigation}
& \texttt{go\_back()} &
Navigate to the previous page in history. \\

& \texttt{goto(url)} &
Navigate to a URL or target location. \\

\midrule
\multirow{2}{*}{User Interaction}
& \texttt{send\_msg\_to\_user(text)} &
Return the result by messaging the user and terminating. \\

& \texttt{report\_infeasible(text)} &
Mark the requested step/task as infeasible and provide a reason. \\

\midrule
\multirow{3}{*}{Tab Operations}
& \texttt{new\_tab()} &
Open a new tab and focus it. \\

& \texttt{tab\_close()} &
Close the current tab. \\

& \texttt{tab\_focus(index)} &
Switch focus to the \texttt{index} tab. \\
\bottomrule
\end{tabularx}
}
% \vspace{-0.85cm}
\end{table}

\subsection{Dataset Creation} 
\label{sec:datasetCreation}

Following \S\ref{sec:dataset}, all \timewarp\ tasks are manually annotated to evaluate
different aspects of agentic web navigation. % [REFINE: story] The environments are Wiki, News, and Shop; "web" was a slip. Order matched to the paper.
% ORIG: We first define task categories for each setting (\textit{web}, \textit{shop}, \textit{news}, and \textit{multi})
{We first define task categories for each setting (\textit{wiki}, \textit{news}, \textit{shop}, and \textit{multi})}, and draft example tasks for each category. For each task category, the annotators specify the task intent, including its relevance to evaluating robustness across multiple web versions and how that category can be challenging for an agent. These categories are iteratively refined through discussion before finalization. We then create training and test instances per category, ensuring broad coverage and similar task distributions across the splits. During instance creation, annotators interact with the web environment to verify the task's feasibility and annotate the desired outcome for evaluation. All annotations are subsequently verified, and erroneous or infeasible tasks are filtered or corrected. Distributional imbalances across train-test splits are also adjusted.

\subsection{Goal Examples}
\label{sec:goalExamples}
Following the discussion in \S\ref{sec:dataset}, \ref{sec:datasetCreation}, \timewarp\ comprises $1386$ task instances with $231$ unique goals. Tab. \ref{tab:datasetTasks} shows examples of task goals from some of the key categories. The tasks require comprehension of both visual and textual modalities. % ORIG: A good portion of the tasks require retrieval and navigation
{A large portion of the tasks requires retrieval and navigation}, which are quintessential to web agentic benchmarks. These tasks are complemented by specialized tasks that evaluate more fine-grained aspects of agents, such as hallucination traps \cite{li-etal-2023-halueval}, needle-in-a-haystack \cite{liu-etal-2024-lost}, and counting \cite{xu-ma-2025-llm-counting}. Multi-site tasks are also introduced to assess an agent's ability to select websites without explicit instructions and complete goals across multiple websites.

\begin{table*}[ht]
\centering
\caption{Example task goals and reference answers in the \timewarp\ dataset across environments and task categories. This is not a comprehensive list of task categories, but rather the key categories that we consider more important.}
\label{tab:datasetTasks}
\small
\setlength{\tabcolsep}{4pt}
\resizebox{\textwidth}{!}{%
\begin{tabular}{@{} 
L{1.5cm} 
L{3cm} 
L{9cm} 
L{5cm} 
@{}}
\toprule
\rowcolor{gray!15}\textbf{Env} &
\textbf{Category} &
\textbf{Example Task Goal} &
\textbf{Reference Answer} \\
\midrule

\rowcolor{teal!5}\mwrap{5}{Wiki} & \mwrap{5}{Multi-article Retrieval} &
\mwrap{5}{List all articles from the ``Related Pages'' sections of the Biophysics and Biochemistry pages. }&
List of biochemistry topics, Biophysics, Chromatography, Organic chemistry, Bionics, Computational biology, Database of Molecular Motions, Nanotechnology \\\midrule

\rowcolor{teal!5}\mwrap{2}{Wiki} & \mwrap{2}{Multi-step Navigation} &
Trace this path: Territory $\rightarrow$ first country mentioned $\rightarrow$ a country sharing its southern border. What is the last word of the first section? &
\mwrap{2}{world} \\\midrule

\rowcolor{teal!5}\mwrap{2}{Wiki} & \mwrap{2}{Recursive Navigation} &
Recursively follow the first link in the main content starting from the ``Technology'' page until a non-existent article is reached. &
\mwrap{2}{Technology, Skill, Experience, Learning, Knowledge, Fact}\\ \midrule

\rowcolor{teal!5}\mwrap{2}{Wiki} & \mwrap{2}{Fact Retrieval} &
Who is the current mayor of New York City, according to the article on NYC? &
\mwrap{2}{Eric Adams}\\\midrule

\rowcolor{teal!5}\mwrap{2}{Wiki} & \mwrap{2}{Needle in a Haystack} &
Which specialized tools are mentioned in the approaches section of the Physics article? &
\mwrap{2}{Particle Accelerators, Lasers }\\ \midrule

\rowcolor{red!5}\mwrap{2}{News} & \mwrap{2}{Search Count} &
How many articles were found on Richard Stallman (excluding redirect-only pages)? &
\mwrap{2}{Two} \\\midrule

\rowcolor{red!5}\mwrap{2}{News} & \mwrap{2}{Fact Verification} &
Is the publication year of the Tom Cruise psychiatry article consistent with the year of the event? &
\mwrap{2}{No} \\\midrule

\rowcolor{red!5}\mwrap{2}{News} & \mwrap{2}{Hallucination Trap} &
Name the article on the Indian prime minister testing positive for COVID-19. &
\mwrap{2}{No such article exists} \\\midrule

\rowcolor{red!5}\mwrap{2}{News} & \mwrap{2}{Comparison} &
Were the two specified football-related articles published on the same day? &
\mwrap{2}{Yes} \\ \midrule

\rowcolor{orange!5}\mwrap{1}{Shop} & \mwrap{1}{Product Verification} &
Is the Wii U Microphone compatible with the Wii system? &
\mwrap{1}{No} \\\midrule

\rowcolor{orange!5}Shop & Search Count &
How many sound bars are available under 150 USD? &
Two \\\midrule

\rowcolor{orange!5}\mwrap{2}{Shop} & \mwrap{2}{Order Placement} &
Find a small, portable, black music player for mini vinyl records and place an order if possible. &
\mwrap{2}{59EC38CAE8} \\\midrule

\rowcolor{orange!5}Shop & Visual Query &
What is the color of the TASYL USB lightning camera adapter? &
White \\\midrule

\rowcolor{orange!5}\mwrap{2}{Shop} & \mwrap{2}{Counting} &
How many times does ``baking soda'' appear on the Arm \& Hammer Toothpaste product page? &
\mwrap{2}{Eight times} \\ \midrule

\rowcolor{cyan!5}\mwrap{2}{Multi-Site} & \mwrap{2}{Multi-Site Query} &
Find the most recent sports event mentioned in Brazil’s Wikipedia article and a related news article. &
\mwrap{2}{Rio de Janeiro to host 2016 Olympics} \\\midrule

\rowcolor{cyan!5}\mwrap{2}{Multi-Site} & \mwrap{2}{Site Selection} &
Place an order for the cheapest available nuts and share the confirmation code.\footnote{Starting website is Wiki and implicitly requires the agent to navigate to Shop to complete the task.} &
\mwrap{2}{54F0DC47B1} \\\midrule

\rowcolor{cyan!5}\mwrap{2}{Multi-Site} & \mwrap{2}{Multi-Search Counting} &
Identify vegetables listed as sources of Vitamin K on Wiki and check their availability in the shop. &
\mwrap{2}{None} \\\midrule

\rowcolor{cyan!5}\mwrap{2}{Multi-Site} & \mwrap{2}{Chained Reasoning} &
Follow the links from the Wiki probability article, search the second link in the shop, then search its brand on Wiki. &
\mwrap{2}{Rusticware} \\

\bottomrule
\end{tabular}
}
% \vspace{-0.5cm}
\end{table*}

\subsection{Visual Information Requirements}
\label{sec:visualInfoAnnotation}
Web agents differ in whether they consume rendered screenshots or purely textual observations, \textit{i.e.}, HTML or the accessibility tree. To characterize how much of \timewarp\ genuinely depends on pixel-level information, % [REFINE: punctuation] Dashes removed.
% ORIG: we annotate each of the 231 goals as (a) \textit{Relevant} -- the rendered visual content (\textit{e.g.}, product images, colors, icons, or layout geometry) is required or helpful for solving the task -- or (b) \textit{Irrelevant} -- the task is fully solvable from textual observations.
{we annotate each of the 231 goals as either (a) \textit{Relevant}, meaning that the rendered visual content (\textit{e.g.}, product images, colors, icons, or layout geometry) is required or helpful for solving the task, or (b) \textit{Irrelevant}, meaning that the task is fully solvable from textual observations.} Each goal was independently labeled by two annotators using different framings, with disagreements settled by an adjudicator. The two annotators agree on $211/231$ ($91.3\%$) of the goals (Cohen's $\kappa=0.771$). The labels were reviewed by the authors and are released alongside the dataset.

As shown in Table~\ref{tab:visualInfo}, visual information is relevant for $59/231$ ($25.5\%$) of the goals and is most prevalent in Shop ($37.5\%$), which contains the only goals that strictly require inspecting product images, \textit{e.g.}, identifying the color of a product or the text printed on it. The majority of the goals ($74.5\%$) are solvable from textual observations alone, which contextualizes why text-based agents remain competitive with visual agents on \timewarp\ (\S\ref{sec:Results}).

\begin{table}[ht]
\centering
\caption{\textbf{Visual information requirements across the 231 goals of \timewarp.} A goal is \textit{Relevant} if rendered visual content is required or helpful for solving it, and \textit{Irrelevant} if it is fully solvable from textual observations.}
\label{tab:visualInfo}
\vspace{4pt}
\small
\begin{tabular}{lccc}
\toprule
\textbf{Environment} & \textbf{\# Goals} & \textbf{Relevant} & \textbf{Irrelevant} \\
\midrule
Wiki  & 70 & 20 (28.6\%) & 50 (71.4\%) \\
News  & 47 & 8 (17.0\%)  & 39 (83.0\%) \\
Shop  & 64 & 24 (37.5\%) & 40 (62.5\%) \\
Multi & 50 & 7 (14.0\%)  & 43 (86.0\%) \\
\midrule
\textbf{Total} & 231 & 59 (25.5\%) & 172 (74.5\%) \\
\bottomrule
\end{tabular}
\end{table}

\subsection{Environment Details}
\label{sec:envDetails}
We extend the discussion in \S\ref{sec:webEnvironment} and provide additional details behind the rationale of the \timewarp\ web {environments} (\S\ref{sec:environmentRationale}, \ref{sec:liveWebRationale}, \ref{sec:waybackRationale}), implementation details (\S\ref{sec:envImplementationDetails}, \ref{sec:webshopEnvModifcation}), and search algorithms (\S\ref{sec:searchAlgorithms}, \ref{sec:searchRationale}).

\subsubsection{Why do we need new environments?}
\label{sec:environmentRationale}
A common counterargument to creating new web environments, \textit{i.e.}, Wiki \& News, and a new dataset for \timewarp\ is that we could have created multiple versions of the existing containerized benchmarks by simply modifying their web environments. However, this approach has two key limitations. First, the realistic, general-purpose containerized benchmarks, \textit{e.g.}, WebArena \cite{zhou2023webarena}, REAL \cite{Garg2025REALBAA}, are distributed as fully Dockerized containers that include all dependencies. This form of packaging limits the extensibility of the environment and makes any form of modification non-trivial. Second, as mentioned in \S\ref{sec:timewarpBenchmark}, these environments are resource-intensive and require substantial disk storage. For instance, the Wikipedia environment in WebArena takes $89$ GB disk storage \cite{webarena-environment-docker-2023}. Creating six such Dockerized versions of Wikipedia would hence require $\sim0.5$ TB of disk storage, significantly reducing the convenience and practical usability of our benchmark. Moreover, at the time of writing, the Docker images for REAL were not publicly accessible, further constraining the modifiability of the environment.

\subsubsection{Why not benchmark on the live web?}
\label{sec:liveWebRationale}

Live web benchmarks, such as Web Voyager \cite{He2024WebVoyagerBAAF} and Mind2Web-Live \cite{pan2024webcanvas}, capture the natural dynamism of the internet and can, in principle, evaluate the robustness of models to change. However, as discussed in \S\ref{sec:introduction}, \ref{sec:relatedWork}, they present a few limitations, particularly when the goal is to assess performance against change. 

First, meaningful web changes typically unfold over extended periods of time, requiring longitudinal, {repeated experiments}. Benchmarking in this way is time-intensive and, essentially, impractical. The underlying data can also change, making the task outcome, which is treated as ground truth, unreliable. Second, {despite recent efforts} to improve evaluation on the live web \cite{Xue2025AnIOAH,gou2025mindweb}, the general reproducibility of live web results remains inherently constrained by uncontrollable and non-deterministic content updates. While judge agents \cite{gou2025mindweb} attempt to address this issue, their general evaluation accuracy across multiple realistic benchmarks has yet to be assessed. Finally, anti-bot measures, \textit{e.g.}, CAPTCHA, interfere with automated evaluation and are likely to become more restrictive, further challenging the feasibility of live web benchmarking.

% Live web benchmarks offer the dynamism of the internet, which can plausibly be used to assess a model's performance against change. However, as mentioned in \S\ref{sec:introduction},\ref{sec:relatedWork}, this has three key limitations: (i) Significant changes on the live web can only be observed after a long period of time, as website changes don't occur rapidly. Hence, for someone to assess performance against change, the experiments need to be conducted periodically, which is tedious and time-consuming. (ii) While recent studies have somewhat addressed the evaluation concerns on live web \cite{pan2024webcanvas}, the reproducibility of the benchmarks remains questionable, as the changes occur in an uncontrollable manner. Hence, containerizing multiple versions can enable us to assess performance under change without waiting long, while remaining controllable and reproducible. (iii) Finally, the internet deploys several anti-bot measures, such as CAPTCHA, which can hinder benchmarking on the live web. We also believe that these measures might become more robust over time, questioning the possibility of benchmarking on the live web.

\subsubsection{Can we benchmark on the Wayback Machine?}
\label{sec:waybackRationale}
The Wayback Machine \cite{InternetArchive} provides static historical snapshots of websites and
is conceptually similar to the temporal versions of websites introduced in \timewarp. However, benchmarking web agents in the Wayback Machine has a few limitations. First, snapshots capture only a finite set of pre-recorded states of a website, and navigating to unarchived or novel states results in errors, making it unsuitable for evaluating web agents. Second, access to the Wayback Machine is rate-limited, preventing large-scale or parallel benchmarking without explicit authorization from the hosts. These constraints limit its practical utility for systematic and scalable evaluation.

% \subsubsection{Why major redesigns instead of incremental updates?}
% \label{sec:macroMicroDiscussion}
% Real websites often change incrementally, but the changes most likely to affect an agent's performance are the major redesigns that \timewarp\ replicates. Modeling incremental micro-versions is also infeasible at benchmark scale: replacing each era with ${\sim}20$ micro-versions would turn our $103\times6=618$ test tasks into ${\approx}12.4$k test tasks (${\approx}37$k with 3 seeded runs), which is impractical as a standard benchmark. For reference, other web benchmarks range from ${\sim}100$ to ${\sim}2$k test tasks, \textit{e.g.}, WebArena \cite{zhou2023webarena} has $812$ and Online Mind2Web \cite{Xue2025AnIOAH} has $300$. \timewarp\ is nonetheless informative about the continuity of change: performance degrades with version distance, \textit{e.g.}, Qwen-3 4B trained on v$_1$ scores $23.0/19.1/16.5/15.2/9.7$ on v$_1$--v$_5$ (Tab.~\ref{tab:generalization}), \textit{i.e.}, failure scales with the amount of change, not only with its presence.

% in essence, is similar to our web environment, which spans across different time versions. The Wayback Machine also has a few limitations. Firstly, the static snapshots are across a fixed number of states of the website, and any attempt to traverse any novel state would result in an error, making it unsuitable to test web agents. Secondly, requests on the Wayback Machine are rate-limited, \textit{i.e.}, there is no easy way to benchmark multiple web agents in parallel without explicitly receiving permission from the site host. 

\subsubsection{Environment Implementation Details}
\label{sec:envImplementationDetails}

As mentioned in \S\ref{sec:timewarpBenchmark} and following previous works \cite{yao2022webshop}, all environments are hosted locally using Python-based Flask\footnote{\url{https://flask.palletsprojects.com/}} backends to ensure convenience and low-latency interaction. As our data store is {large}, we use aggressive caching strategies to reduce latency during web interaction:
\begin{itemize}
    \item \textbf{Wiki \& News:} MediaWiki XML dumps are pre-parsed at indexing time and cached via \texttt{pickle} serialization, resulting in response times of $\sim50$ms per action.
    \item \textbf{Shop:} Uses Webshop's Apache Lucene via \texttt{pyserini}\footnote{\url{https://pypi.org/project/pyserini/}} for $<100$ms search latency across $1.18$ million items.
\end{itemize}

\subsubsection{Webshop Environment Modification}
\label{sec:webshopEnvModifcation}
% ORIG: The Shop environment is a modification of the popular WebShop environment \cite{yao2022webshop}, by implementing the following changes:
{The Shop environment modifies the popular WebShop environment \cite{yao2022webshop} in two ways:}

\begin{enumerate}

    \item \textbf{Removal of Oracle Feeds:} Unlike the original WebShop, which displayed rewards and task instructions as elements within the webpage, our implementation removes these fields entirely. % [REFINE: connection] Verifiers, not the judge, are now the primary evaluator (\S\ref{sec:EvaluationDetails}); tense made present.
    % ORIG: Agents will rely solely on the textual task instruction provided in the prompt, and the reward will be provided by the judge evaluator. 
    {Agents rely solely on the textual task instruction provided in the prompt, and the reward is assigned by the evaluator (\S\ref{sec:EvaluationDetails}).} 

    \item \textbf{Purchase Verification:} Upon completion of an order, the website generates a unique transaction code derived from a deterministic hash $H(p, o)$ of the product ID $p$ and selected option $o$. % ORIG: The agent will use the code from the confirmation page and send it to the user to confirm order completion.
    {The agent reads the code from the confirmation page and sends it to the user to confirm order completion.}

\end{enumerate}

\subsubsection{Search Algorithms}
\label{sec:searchAlgorithms}

Each environment exposes a different search backend designed to replicate the retrieval behavior of contemporary web systems. Concretely, we use (i) a substring-based lookup for Wiki, (ii) a BM25-style inverted index for News, and (iii) a Lucene/Pyserini BM25 product search for Shop. As previously mentioned in \S\ref{sec:version}, the substring-based search of Wikipedia is version-dependent, \textit{i.e.}, older versions do not show the substring results as drop-down options, essentially restricting the search to an exact match search algorithm, representative of the encyclopedic sites of that era.

\subsubsection{Why different search algorithms?} 
\label{sec:searchRationale}
Search functionality varies substantially across real-world websites and depends on the underlying use case. For instance, the search on an encyclopedic platform such as Wikipedia differs fundamentally from product search on e-commerce platforms like Amazon. The inclusion of different search algorithms enables a more realistic evaluation of a web agent's robustness and retrieval capabilities. 

\subsection{Version Creation Details}
\label{sec:versionCreation}
As discussed in \S\ref{sec:version} and illustrated in Fig.~\ref{fig:datasetStatistics}, each version is modeled after a distinct historical phase of the web. We identified key design inflection points for each reference site, \textit{e.g.}, 2001 for Wikipedia, 2016 for BBC News, using resources such as the Web Design Museum\footnote{\url{https://www.webdesignmuseum.org/}}. While replicating the UI, we referenced archival snapshots from the Internet Archive\footnote{\url{https://archive.org/}}.
A notable exception was the 2001 Wikipedia theme, as individual webpages were unavailable in the Wayback Machine. We reconstructed the article template by extrapolating the design language of Wikipedia's main landing page from 2001. Each version also {uses} era-specific front-end code, which introduces meaningful temporal variation in the textual space, \textit{i.e.}, with the HTML and accessibility tree.

\subsection{Variations across Versions}
\label{sec:variationsVersions}
Following \S\ref{sec:timewarpBenchmark}, the \timewarp\ environments incorporate feature variations across versions to reflect the evolution of web interaction. Early versions emphasize minimal functionality, lacking several quality-of-life features. For example, we observe in Fig.~\ref{fig:tableofcontent} that Wiki v$_1$ contains long, unstructured pages without a table of contents, whereas later versions (e.g., v$_6$) introduce a ToC for easier navigation. Similarly, search functionality also evolves: early Wiki search relies on exact string matching, while later versions provide drop-down suggestions based on substring matching (Fig.~\ref{fig:searchDifferences}).
Modern versions introduce additional interaction complexity. For instance, Fig.~\ref{fig:newsSearch} shows that News v$_5$ introduces icon-based search functionalities, requiring agents to first identify and click the search icon before typing the search query. Modern layouts also increase the UI clutter, introducing bloat, such as pop-up advertisements (Fig.~\ref{fig:popUpAdsErrorQualAnalysis}), further challenging agentic navigation.

\begin{figure}[h]
    \centering
    \includegraphics[width=0.995\linewidth]{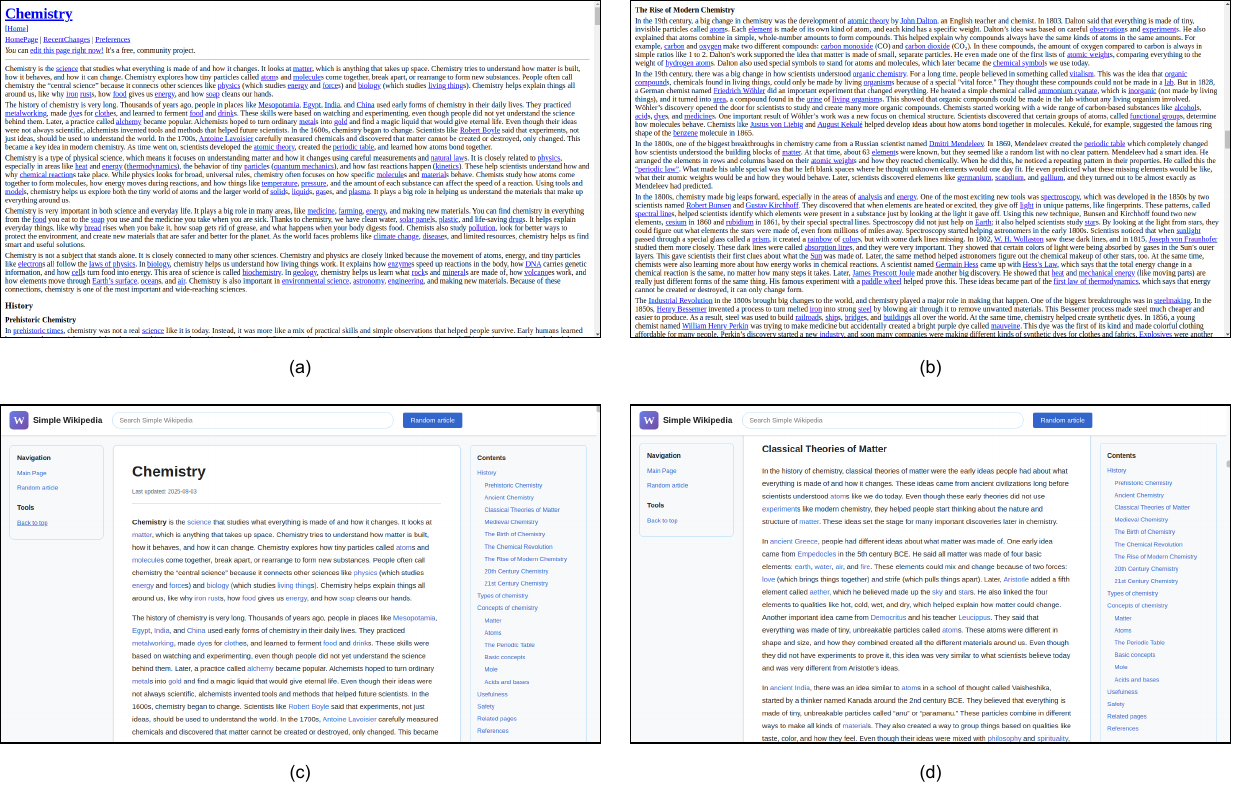}
    \caption{\textbf{Navigation features in legacy vs. modern environments.} (a,b) Wiki v$_1$ lacks a table of contents, requiring models to rely on scrolling to navigate long pages, whereas (c,d) Wiki v$_6$ provides a table of contents for efficient navigation. \textit{While the figure is not easily readable at this scale, full-size screenshots are provided to illustrate the volume of content and the resulting difficulty in navigation.}}
    \label{fig:tableofcontent}
\end{figure}

\begin{figure}[h]
    \centering
    \includegraphics[width=0.995\linewidth]{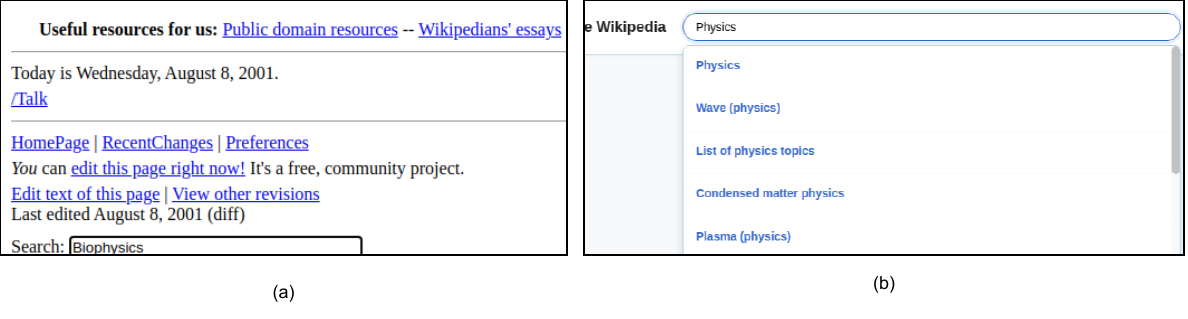}
    \caption{\textbf{Variations in Search across Versions.} (a) Wiki v$_1$'s search box is located at the bottom of the page and doesn't provide any drop-down suggestions, whereas (b) v$_6$'s search is at the standard top navigation bar and provides drop-down suggestions.}
    \label{fig:searchDifferences}
\end{figure}

\begin{figure}[h]
    \centering
    \includegraphics[width=0.995\linewidth]{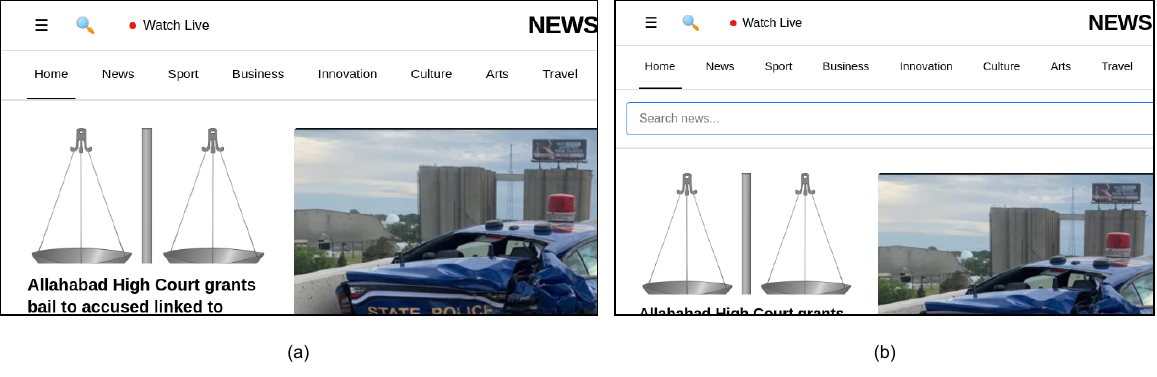}
    \caption{\textbf{Difficulties in the modern internet.} To search in News v$_5$, agents need to click on the search icon first, and then fill the search box. This is not as intuitive as directly filling the search box and often results in errors.}
    \label{fig:newsSearch}
\end{figure}

\subsection{Quantitative Validation of Version Realism}
\label{sec:versionValidation}
A natural concern is whether the six versions represent real temporal web evolution rather than arbitrary handcrafted UI variation. Because all versions of an environment render the same backend content through swapped front-end themes, measuring rendered pages across versions isolates the effect of the UI era in a controlled manner. We measure every version of every environment with headless Chromium at the benchmark's $1280\times720$ viewport, covering 246 page measurements (9, 13, and 19 pages per version for Wiki, News, and Shop, respectively) and 75 metrics per page, spanning DOM structure, the accessibility tree exactly as flattened for the agent's observation (with tokens counted using the \texttt{cl100k\_base} encoding), interactive elements, screenshot statistics, and web-technology markers, \textit{e.g.}, CSS rule counts, media queries, responsive viewport meta tags, and the share of table-based vs. flex/grid layout. As v$_6$ is a minimal baseline theme rather than a historical era (\S\ref{sec:version}), it is excluded from trend statistics and reported as a control.

\noindent\textbf{Versions are quantitatively distinct.} As shown in Table~\ref{tab:versionValidation} (top), the versions are far from cosmetic re-skins: over identical content, the agent's action space on Shop varies by $8.0\times$ (4.4 to 34.7 interactive elements) and its accessibility-tree observation by $4.0\times$ (769 to 3{,}069 tokens). Wiki is the deliberately low-variance environment ($1.1$--$1.4\times$), so per-environment robustness results should be interpreted per environment rather than averaged.

\noindent\textbf{Era markers track documented web history.} Table~\ref{tab:versionValidation} (bottom) and Fig.~\ref{fig:eraMarkers} show that the markers whose real-world history is well documented trend strongly and significantly with the era-ordered versions in all three environments (all $p<0.001$): CSS rule counts rise, responsive design (viewport meta tags and media queries) switches on at v$_4$ for Wiki and News and at v$_3$ for Shop, matching its real-world arrival, and layouts transition from purely table-based to flex/grid (layout modernity rising from 0.00 to 0.95--0.98). Overall, 8/17 (Wiki), 16/18 (News), and 17/19 (Shop) of the measured metrics exhibit significant ($p<0.05$) era trends.

\noindent\textbf{Environments diverge as their real counterparts did.} Shop grows denser and more visual over time (DOM elements 107 to 248, above-the-fold image area 1\% to 23\%), News becomes structurally lighter yet more interactive (DOM elements fall 220 to 127 while interactive elements rise 33 to 63), mirroring the real shift from dense table-based link portals to modern component-based designs, and Wiki's article structure remains stable while only its chrome modernizes, consistent with Wikipedia's conservative design history. Shop v$_1$--v$_3$ additionally render product details as separate server-rendered sub-pages, whereas v$_4$--v$_5$ collapse them into in-page tabs, reproducing the web's transition from server-rendered pages to single-page applications and materially changing the agent's task structure. Such coordinated, domain-specific divergence is unlikely to arise from arbitrary handcrafted variation. Full per-version breakdowns of the headline metrics are provided in Tables~\ref{tab:versionMetricsWiki}, \ref{tab:versionMetricsNews}, and \ref{tab:versionMetricsShop}, and the workflow (trajectory) length across versions is reported in Fig.~\ref{fig:trajectoryPerformanceDiagram}. We note that this analysis establishes that the versions vary along historically plausible axes, and we do not claim exact fidelity to specific years (\textit{cf.} \S\ref{sec:limitations}).

\begin{figure}[ht]
    \centering
    \includegraphics[width=\linewidth]{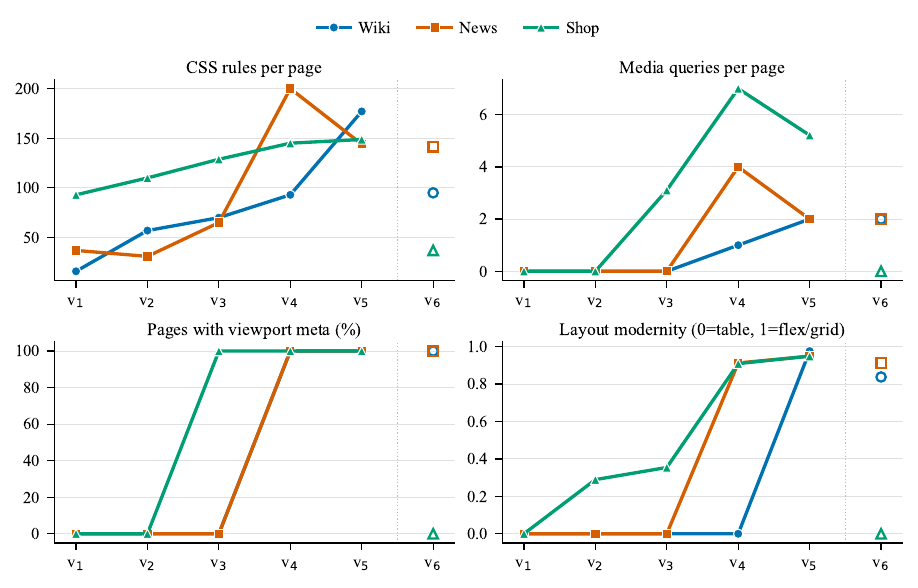}
    \caption{\textbf{Web-technology era markers across versions.} Per-page means of four markers with well-documented real-world trajectories, measured on the rendered pages of each environment version. All markers evolve in the historically expected direction in all three environments. v$_6$ (open markers) is the minimal baseline control, not a historical era.}
    \label{fig:eraMarkers}
\end{figure}

\begin{table}[ht]
\centering
\caption{\textbf{Quantitative validation of the \timewarp\ versions.} Top: variation of the agent-facing observation and action spaces across the six versions of each environment (per-page means over identical backend content). Bottom: Spearman correlation between the era-ordered versions v$_1$--v$_5$ and web-technology markers, computed over individual pages (all $p<0.001$).}
\label{tab:versionValidation}
\vspace{4pt}
\small
\setlength{\tabcolsep}{5pt}
\resizebox{\linewidth}{!}{%
\begin{tabular}{lccc}
\toprule
 & \textbf{Wiki} & \textbf{News} & \textbf{Shop} \\
\midrule
\multicolumn{4}{l}{\textit{Agent-facing variation (min $\rightarrow$ max across versions)}} \\
AX observation tokens & 21{,}837 $\rightarrow$ 24{,}663 ($1.1\times$) & 2{,}015 $\rightarrow$ 4{,}195 ($2.1\times$) & 769 $\rightarrow$ 3{,}069 ($4.0\times$) \\
Interactive elements  & 329 $\rightarrow$ 398 ($1.2\times$) & 19 $\rightarrow$ 63 ($3.3\times$) & 4.4 $\rightarrow$ 34.7 ($8.0\times$) \\
DOM elements          & 480 $\rightarrow$ 655 ($1.4\times$) & 106 $\rightarrow$ 263 ($2.5\times$) & 49 $\rightarrow$ 248 ($5.1\times$) \\
\midrule
\multicolumn{4}{l}{\textit{Era-marker trends (Spearman $\rho$, v$_1\rightarrow$v$_5$)}} \\
CSS rules            & 1.00 & 0.80 & 0.90 \\
Media queries        & 0.89 & 0.78 & 0.87 \\
Responsive viewport meta & 0.87 & 0.87 & 0.87 \\
Table $\rightarrow$ flex/grid layout & 0.70 & 0.87 & 0.87 \\
\bottomrule
\end{tabular}%
}
\end{table}

\begin{table}[ht]
\centering
\caption{\textbf{Per-version page metrics for the Wiki environment} (per-page means). v$_6$ is the minimal baseline control, not a historical era.}
\label{tab:versionMetricsWiki}
\vspace{4pt}
\small
\setlength{\tabcolsep}{4pt}
\resizebox{\linewidth}{!}{%
\begin{tabular}{lrrrrrr}
\toprule
\textbf{Metric} & \textbf{v$_1$ (2001)} & \textbf{v$_2$ (2002)} & \textbf{v$_3$ (2003--04)} & \textbf{v$_4$ (2005--22)} & \textbf{v$_5$ (2023--25)} & \textbf{v$_6$ (min.)} \\
\midrule
DOM elements & 479.7 & 571.3 & 594.7 & 644.2 & 654.9 & 509.2 \\
DOM nodes (+text) & 1,192 & 1,345 & 1,375 & 1,407 & 1,446 & 1,201 \\
DOM max depth & 3.11 & 10.00 & 10.67 & 8.67 & 8.67 & 8.89 \\
Distinct tags & 17.00 & 27.00 & 27.78 & 23.22 & 28.11 & 23.67 \\
HTML bytes & 37,725 & 42,992 & 45,354 & 46,606 & 48,193 & 41,247 \\
AXT nodes & 1,958 & 2,163 & 2,193 & 2,207 & 2,349 & 2,041 \\
AXT interactive nodes & 332.2 & 386.3 & 383.6 & 385.6 & 398.4 & 329.1 \\
AXT max depth & 4.11 & 10.00 & 11.44 & 9.44 & 8.22 & 7.67 \\
AXT observation tokens & 21,837 & 23,332 & 23,540 & 23,749 & 24,663 & 22,388 \\
Interactive elements & 332.2 & 366.4 & 384.4 & 385.6 & 398.4 & 329.1 \\
Links & 331.2 & 360.4 & 378.4 & 380.6 & 382.6 & 327.0 \\
Buttons & 0.00 & 3.00 & 4.00 & 4.00 & 6.89 & 1.00 \\
Forms & 1.00 & 3.00 & 2.00 & 1.00 & 1.00 & 0.00 \\
Ink fraction & 0.12 & 0.11 & 0.10 & 0.12 & 0.08 & 0.07 \\
Edge density & 14.97 & 13.65 & 6.93 & 7.01 & 6.67 & 5.71 \\
Palette colors & 65.89 & 290.6 & 255.3 & 351.1 & 217.3 & 225.8 \\
Image area fraction & 0.00 & 0.02 & 0.02 & 0.03 & 0.01 & 0.00 \\
Table elements & 0.00 & 2.00 & 1.44 & 0.56 & 0.00 & 0.00 \\
Layout modernity & 0.00 & 0.00 & 0.00 & 0.00 & 0.98 & 0.84 \\
CSS rules & 16.00 & 57.00 & 70.00 & 93.00 & 177.0 & 95.00 \\
\bottomrule
\end{tabular}%
}
\end{table}

\begin{table}[ht]
\centering
\caption{\textbf{Per-version page metrics for the News environment} (per-page means). v$_6$ is the minimal baseline control, not a historical era.}
\label{tab:versionMetricsNews}
\vspace{4pt}
\small
\setlength{\tabcolsep}{4pt}
\resizebox{\linewidth}{!}{%
\begin{tabular}{lrrrrrr}
\toprule
\textbf{Metric} & \textbf{v$_1$ (2000s)} & \textbf{v$_2$ (2004)} & \textbf{v$_3$ (2008)} & \textbf{v$_4$ (2016)} & \textbf{v$_5$ (2024)} & \textbf{v$_6$ (min.)} \\
\midrule
DOM elements & 220.3 & 258.8 & 263.1 & 127.8 & 127.5 & 106.2 \\
DOM nodes (+text) & 287.2 & 337.7 & 346.5 & 192.8 & 210.1 & 155.3 \\
DOM max depth & 15.38 & 12.31 & 15.54 & 7.31 & 7.15 & 8.69 \\
Distinct tags & 20.31 & 20.54 & 20.85 & 22.92 & 21.00 & 25.54 \\
HTML bytes & 12,453 & 16,473 & 18,249 & 9,783 & 9,860 & 11,225 \\
AXT nodes & 309.0 & 378.9 & 394.2 & 242.6 & 272.4 & 173.2 \\
AXT interactive nodes & 32.62 & 47.69 & 48.54 & 40.00 & 62.31 & 19.23 \\
AXT max depth & 12.31 & 10.15 & 12.31 & 6.77 & 6.62 & 7.31 \\
AXT observation tokens & 4,084 & 4,138 & 4,195 & 2,456 & 2,673 & 2,015 \\
Interactive elements & 32.62 & 47.69 & 48.23 & 41.00 & 63.31 & 19.23 \\
Links & 30.62 & 45.69 & 46.08 & 39.00 & 59.31 & 16.62 \\
Buttons & 1.00 & 1.00 & 1.08 & 0.00 & 3.00 & 1.62 \\
Forms & 1.00 & 1.00 & 1.00 & 0.00 & 0.00 & 0.00 \\
Ink fraction & 0.23 & 0.36 & 0.16 & 0.16 & 0.09 & 0.09 \\
Edge density & 9.18 & 8.29 & 7.52 & 5.76 & 5.50 & 5.15 \\
Palette colors & 200.1 & 356.7 & 365.2 & 349.4 & 380.8 & 133.2 \\
Image area fraction & 0.00 & 0.00 & 0.01 & 0.02 & 0.00 & 0.00 \\
Table elements & 11.77 & 14.46 & 15.62 & 0.00 & 0.00 & 0.00 \\
Layout modernity & 0.00 & 0.00 & 0.00 & 0.91 & 0.95 & 0.91 \\
CSS rules & 37.00 & 31.00 & 65.00 & 200.0 & 144.0 & 141.5 \\
\bottomrule
\end{tabular}%
}
\end{table}

\begin{table}[ht]
\centering
\caption{\textbf{Per-version page metrics for the Shop environment} (per-page means). v$_6$ is the minimal baseline control, not a historical era.}
\label{tab:versionMetricsShop}
\vspace{4pt}
\small
\setlength{\tabcolsep}{4pt}
\resizebox{\linewidth}{!}{%
\begin{tabular}{lrrrrrr}
\toprule
\textbf{Metric} & \textbf{v$_1$ (2000)} & \textbf{v$_2$ (2005)} & \textbf{v$_3$ (2010)} & \textbf{v$_4$ (2015)} & \textbf{v$_5$ (2025)} & \textbf{v$_6$ (classic)} \\
\midrule
DOM elements & 106.8 & 94.74 & 122.9 & 132.6 & 247.8 & 49.05 \\
DOM nodes (+text) & 155.6 & 126.8 & 169.4 & 196.4 & 339.0 & 60.32 \\
DOM max depth & 8.47 & 8.74 & 9.21 & 8.84 & 15.00 & 9.00 \\
Distinct tags & 17.74 & 21.42 & 24.84 & 23.37 & 28.16 & 10.95 \\
HTML bytes & 10,462 & 8,544 & 13,726 & 18,080 & 41,324 & 5,464 \\
AXT nodes & 179.7 & 142.7 & 173.9 & 182.1 & 303.8 & 62.58 \\
AXT interactive nodes & 23.79 & 20.74 & 22.05 & 28.05 & 34.89 & 4.37 \\
AXT max depth & 6.37 & 8.37 & 7.68 & 6.00 & 8.05 & 5.95 \\
AXT observation tokens & 1,928 & 1,515 & 1,858 & 2,055 & 3,069 & 769.4 \\
Interactive elements & 23.79 & 20.74 & 21.42 & 25.00 & 34.74 & 4.37 \\
Links & 14.05 & 13.95 & 7.74 & 12.79 & 6.16 & 1.58 \\
Buttons & 9.37 & 6.68 & 10.63 & 10.21 & 24.89 & 2.74 \\
Forms & 9.37 & 6.47 & 5.63 & 4.95 & 13.42 & 2.74 \\
Ink fraction & 0.11 & 0.24 & 0.13 & 0.27 & 0.41 & 0.06 \\
Edge density & 3.90 & 3.71 & 3.51 & 3.81 & 2.88 & 2.47 \\
Palette colors & 281.4 & 397.1 & 950.9 & 1,444 & 2,040 & 281.9 \\
Image area fraction & 0.01 & 0.02 & 0.06 & 0.20 & 0.23 & 0.05 \\
Table elements & 2.89 & 1.37 & 0.00 & 0.00 & 0.00 & 0.00 \\
Layout modernity & 0.00 & 0.29 & 0.35 & 0.91 & 0.95 & 0.00 \\
CSS rules & 93.00 & 110.0 & 128.8 & 145.0 & 148.7 & 37.00 \\
\bottomrule
\end{tabular}%
}
\end{table}

\subsection{Evaluation Details}
\label{sec:EvaluationDetails}
We expand upon the discussion in \S\ref{sec:evaluation} by providing additional details on \timewarp's evaluation framework, including details on the deterministic verifiers (\S\ref{sec:deterministicVerifierDetails}), their annotation and validation (\S\ref{sec:verifierAnnotation}), the judge evaluator (\S\ref{sec:judgeEvaluatorDetails}), assessment of multiple correct answers (\S\ref{sec:multipleCorrectAnswers}), evaluation of our judge evaluator (\S\ref{sec:evaluationJudgeEval}), and self-evaluation bias (\S\ref{sec:selfEvalBiasJudge}). We use success rate (\%) as our evaluation metric, which is standard across web benchmarks. Success rate is defined as the proportion of tasks completed successfully, and the criteria for success are determined by the deterministic verifiers, with the LLM judge evaluator serving as a fallback for subjective tasks.

\subsubsection{Deterministic Verifier Details}
\label{sec:deterministicVerifierDetails}
Each task in \timewarp\ specifies a deterministic verifier composed of five function classes, similar to WebArena's evaluation functions \cite{zhou2023webarena}: \texttt{exact\_match}, \texttt{must\_include}, and \texttt{must\_exclude} for string matching, \texttt{number\_match} for numeric answers, and \texttt{list\_match} for ordered and unordered lists. When a task combines multiple classes, the verifiers are conjoined (\texttt{AND} logic). As shown in Table~\ref{tab:evalTypes}, the deterministic verifiers cover $229/231=99.1\%$ of the goals. The remaining two goals require a comparative, subjective judgment and fall back to the LLM judge (\S\ref{sec:judgeEvaluatorDetails}).

\begin{table}[ht]
\centering
\caption{\textbf{Verifier coverage across the 231 goals of \timewarp.} Deterministic verifiers cover $229/231=99.1\%$ of the goals. Combined types conjoin multiple verifiers (\texttt{AND} logic). Only two subjective goals fall back to the LLM judge.}
\label{tab:evalTypes}
\vspace{4pt}
\small
\begin{tabular}{lr}
\toprule
\textbf{Evaluator} & \textbf{\# Goals} \\
\midrule
String match (\texttt{exact\_match}, \texttt{must\_include}, \texttt{must\_exclude}) & 156 \\
Number match & 39 \\
List match & 26 \\
String + number match & 5 \\
String + list match & 3 \\
LLM judge (subjective) & 2 \\
\midrule
\textbf{Total} & 231 \\
\bottomrule
\end{tabular}
\end{table}

All matching is performed on normalized answers: Unicode folding (accents, typographic quotes, and dashes), case folding, whitespace collapsing, and stripping of Markdown and edge punctuation make the verifiers robust to formatting variations in the agent's final answer. String containment is checked on word boundaries, preventing partial matches such as ``10'' matching within ``100''. Number matching parses formatted numerals, magnitude words, and spelled-out cardinals (\textit{e.g.}, ``57.7 million'', ``13th'', ``thirteen'') and requires exact equality unless the task explicitly specifies a tolerance. Verifier specifications additionally support alternative valid answers and regular-expression patterns (\S\ref{sec:multipleCorrectAnswers}), as well as a first-sentence scope that isolates yes/no verdicts from subsequent explanations. Malformed specifications and evaluator failures raise errors instead of silently scoring $0$, ensuring that infrastructure issues are never recorded as task failures.

To validate the deterministic verifiers, we score the collected teacher trajectories with both the verifiers and the GPT-5.1 judge, observing a \textbf{99.87\% agreement rate}. Each task also retains its original reference answer, so judge-based and verifier-based scoring remain directly comparable. Beyond reliability, deterministic verifiers are cheap, exactly reproducible, and free of API dependence and sampling variance, and can therefore provide a fast and reliable reward signal in training-heavy reinforcement learning pipelines such as GRPO \cite{shao2024deepseekmathpushinglimitsmathematical}.

\subsubsection{Verifier Annotation and Validation}
\label{sec:verifierAnnotation}
To author the verifier specifications, the 231 goals were split into 16 batches, stratified by answer shape and website, so that no batch contained a block of near-identical tasks. For each goal, an annotator drafts a specification from the goal, the gold answer, and the underlying environment content, together with three correct example answers of varying verbosity and formatting and three realistic near-miss incorrect answers. Every draft is then adversarially attacked by an independent critic, which attempts to construct answers that the specification falsely accepts or falsely rejects, and the specification is repaired accordingly. Specification drafting and critique were performed by LLM agents, with all flagged cases reviewed by the authors (\textit{cf.} \S\ref{sec:llmUsageDeclaration}).

Crucially, every specification is validated by execution rather than inspection. An automated self-check requires that (i) the gold answer scores $1$, (ii) all positive example answers score $1$, (iii) all near-miss negative answers score $0$, and (iv) the specification rejects the gold answers of other tasks, ensuring that it is discriminative rather than overly lenient. A complementary dataset-wide structural audit probes common failure classes, such as contrastive answers that mention an excluded phrase while still giving the correct answer, and specifications that merely echo tokens of the task goal. During this process, five gold answers were corrected against the environment content, and two goals were identified as genuinely subjective and left to the LLM judge (\S\ref{sec:judgeEvaluatorDetails}). Annotators assign a confidence level to each specification (131 high, 97 medium, 3 low), and every task retains free-text annotation notes documenting the evidence checked, keeping the full provenance of the evaluation auditable.

\subsubsection{Judge Evaluator Details}
\label{sec:judgeEvaluatorDetails}
For the two subjective goals not covered by the deterministic verifiers, and to validate the verifiers themselves, we use GPT-5.1 as an LLM judge \cite{son2024llm}, following recent benchmarks \cite{Xue2025AnIOAH,wei2025browsecomp}. The evaluator receives the task goal, a reference answer, and the agent's predicted answer, and outputs one of three labels: \textit{correct}, \textit{partially correct}, or \textit{incorrect}. Only \textit{correct} is assigned a reward of 1, and the remaining labels receive 0. The full prompt is provided in \S\ref{sec:prompt}. As LLM judges assess answers at a semantic level, they offer a robust complement to the deterministic verifiers for ambiguous or subjective answers.

We additionally support Gemma-4 12B \cite{team2026gemma} as an open-source judge, which removes the dependence on a proprietary API. The open-source judge reaches a \textbf{99.74\% agreement rate} with the GPT-5.1 judge on the teacher trajectories, making it a reliable drop-in replacement.

\subsubsection{Multiple Correct Answers}
\label{sec:multipleCorrectAnswers}
Both evaluation paths support multiple valid answers. In deterministic verifier specifications, alternative acceptable answers are separated with an \texttt{|OR|} delimiter, following WebArena \cite{zhou2023webarena}, and individual entries may be anchored regular expressions to capture open-ended phrasings. \texttt{AND} logic is expressed by conjoining multiple verifier functions within a single task (\S\ref{sec:deterministicVerifierDetails}). For the LLM judge, the reference answer can be provided as a list of acceptable strings instead of a single string. % ORIG: This would prompt the judge evaluator to evaluate each reference answer independently
{The judge then evaluates each reference answer independently}, \textit{i.e.}, if there are $n$ reference answers, there will be $n$ calls. If at least one of the answers is deemed valid, the judge will return $1$, corresponding to the \texttt{OR} logic between the validity of the reference answers. For judge-scored tasks that require all conditions to be satisfied, \textit{i.e.}, \texttt{AND} logic, the references should be concatenated into a single string, which will be treated as a single reference answer by the judge and evaluated in a single call.

% . If the users want all the answers to match (\texttt{AND} logic), then they should provide all the reference answers in a single string, which can be evaluated by the judge in a single call. 

\begin{table*}[ht!]
\centering
\caption{Test cases across different categories to evaluate the Judge Evaluator. The human verdict indicates whether the human verifier agrees with the judge’s reward (\textbf{R}). The only disagreement occurs in a tricky case in which ordering is implied but not explicitly stated.}
\label{tab:judge_results}
\setlength{\tabcolsep}{4pt}
\small
\resizebox{\textwidth}{!}{%
\begin{tabular}{@{} L{1.5cm} L{5.7cm} L{2cm} L{2.5cm} C{0.2cm} C{1.1cm} @{}}
\toprule
\mwrap{2}{\textbf{Eval. Type}} & \mwrap{2}{\textbf{Example Task Goal}} & \textbf{Reference Answer} & \mwrap{2}{\textbf{Candidate Answer}} & \mwrap{2}{\textbf{R}} & \textbf{Human Verdict} \\
\midrule
\mwrap{5}{Exact String\\Matching} &
\mwrap{5}{Order a pair of jeans that are made of at least 60\% cotton. If there are multiple options, pick any. The size must be M. Once you complete the order, share the completion code.} &
\mwrap{5}{59EC38CAE8} &
C97A8FE47F & 1 & \cmark \\
\cmidrule{4-6}
 & & & Confirmation code: C97A8FE47F & \mwrapCenter{2}{1} & \mwrapCenter{2}{\cmark} \\
\cmidrule{4-6}
 & & & C97B8FE47F & 0 & \cmark \\
\midrule
\mwrap{2}{Number\\Matching} &
\mwrap{2}{How many sound bars are available under 150 USD?} &
\mwrap{2}{Two} &
2 & 1 & \cmark \\
\cmidrule{4-6}
 & & & 3 & 0 & \cmark \\
\midrule
\mwrap{5}{Estimate\\Matching} &
\mwrap{5}{Find the largest city listed on the Wiki pages for Canada and Australia. Then tell me the population difference between those two cities.} &
\mwrap{5}{Around 2.5 million} & Around 2.55 million & 1 & \cmark \\
\cmidrule{4-6}
 & & & 2,533,124 & 1 & \cmark \\
\cmidrule{4-6}
 & & & 2 million & 0 & \cmark \\
\midrule
\mwrap{3}{Logic\\Matching} &
\mwrap{3}{Does the article on Sociology discuss its relationship with Mathematics?} &
\mwrap{3}{No, it does not.} &
No & 1 & \cmark \\
\cmidrule{4-6}
 & & & Yes & 0 & \cmark \\
\midrule
\mwrap{7}{Intent\\Matching} &
\mwrap{7}{Does the ``Once Upon A Time Queen Born In 1982 T-Shirts'' product from the Shop contain polyester? If so, go to the Polyester article on Wiki and tell me the names of the clothing brands mentioned there. Otherwise, just tell me, ``T-shirt is polyester-free!!!''} &
\mwrap{7}{No clothing brands are mentioned in the Polyester article.} &
The Polyester article doesn't mention any clothing brand. & \mwrapCenter{3}{1} & \mwrapCenter{3}{\cmark} \\
\cmidrule{4-6}
 & & & T-shirt is polyester-free!!! & \mwrapCenter{2}{0} & \mwrapCenter{2}{\cmark} \\
\midrule
\mwrap{5}{Unordered\\List Matching} &
\mwrap{5}{Which three biogeochemical cycles were mentioned in the article on Biology?} &
\mwrap{5}{Nitrogen, Carbon, and Water} &
Water, Nitrogen, Carbon &
\mwrapCenter{2}{1} & \mwrapCenter{2}{\cmark} \\
\cmidrule{4-6}
 & & & Water, Hydrogen, Carbon &
\mwrapCenter{2}{0} & \mwrapCenter{2}{\cmark} \\
\midrule
\mwrap{6}{Unordered\\(Implicit) List Matching} &
\mwrap{6}{Order these products and share the confirmation codes: Welch's Orange Pineapple Juice, Stirrings Simple Classic Cocktail Mix Mojito, and Italian Basil Pesto.} &
\mwrap{6}{B644DDC66C, 1069AE6414, 1A2ED223B7} &
B644DDC66C, 1069AE6414, 1A2ED223B7 & \mwrapCenter{3}{1} & \mwrapCenter{3}{\cmark} \\
\cmidrule{4-6}
 & & & 1069AE6414, 1A2ED223B7, B644DDC66C & \mwrapCenter{3}{0} & \mwrapCenter{3}{\xmark} \\
\midrule
\mwrap{9}{Ordered\\Matching} &
\mwrap{9}{Recursively follow the first link in the main content of each page, starting from the ``Technology'' page, until you reach an article that does not exist. List the titles of all the valid pages in the order that you visited.} &
\mwrap{9}{Technology, Skill, \\Experience, Learning, Knowledge, Fact} &
Technology, Skill, Experience, Learning, Knowledge, Fact. & \mwrapCenter{4}{1} & \mwrapCenter{4}{\cmark} \\
\cmidrule{4-6}
 & & & Skill, Experience, Learning, Knowledge, Fact, Technology & \mwrapCenter{4}{0} & \mwrapCenter{4}{\cmark} \\
\bottomrule
\end{tabular}
}
\end{table*}

\subsubsection{Evaluation of the Judge Evaluator}
\label{sec:evaluationJudgeEval}
We evaluate the reliability of our judge evaluator by testing it on a subset of \timewarp\ tasks across different evaluation aspects. For each task, we annotate candidate answers that closely resemble both correct and incorrect model outputs. The judge is called with the task goal, the reference answer, and candidate answers, and produces a reward. A human annotator then independently verifies the judge's reward.

As shown in Table~\ref{tab:judge_results}, the judge exhibits high agreement with human evaluation across most cases. The only notable discrepancy arises in a list-matching task where the evaluation is \textit{implicitly} unordered. In this instance, the judge assumes an ordered comparison and assigns a negative reward. We consider this discrepancy minor and unlikely to introduce substantial evaluation error, as it reflects a borderline, somewhat subjective interpretation of the task constraints rather than a fundamental grading failure.

% [REFINE: plain] Sentence fragment.
% ORIG: To further validate the reliability of the judge by running a larger study on 200 randomly sampled trajectories from our benchmark.
{To further validate the reliability of the judge, we run a larger study on 200 randomly sampled trajectories from our benchmark.} We find a \textbf{98.5\% human-judge agreement rate} on the sampled trajectories, confirming that our judge-based evaluation is robust to variations in the answers and highly reliable. This can be attributed to the low ambiguity and low subjectivity of our tasks. Our findings are also consistent with previous literature, which found that judge-based evaluation of web-agentic tasks is highly reliable \cite{Xue2025AnIOAH}.

\subsubsection{Deterministic Verifiers vs. LLM Judges}
\label{sec:whyUseJudge}
LLM-judge evaluation has become the standard in recent web agentic benchmarks (\textit{e.g.}, WebArena's Fuzzy Matching \cite{zhou2023webarena}, Online Mind2Web \cite{Xue2025AnIOAH}, WebVoyager \cite{He2024WebVoyagerBAAF}), as judges assess answers at a semantic level and have become more capable and reliable over the years \cite{tan2025judgebench,gou2025mindweb}. However, judge-based scoring is comparatively expensive, depends on API access, and introduces sampling variance into the benchmark signal, while \textit{manual evaluation} \cite{Song2025BEARCUBSABU} remains the most faithful alternative but does not scale. Deterministic verifiers, on the other hand, are cheap, exactly reproducible, and well-suited as a training reward signal \cite{shao2024deepseekmathpushinglimitsmathematical}, but require a per-task specification and can be brittle when authored carelessly \cite{Liu2024VisualAgentBenchTLBG}. \timewarp\ combines the strengths of both: deterministic verifiers, hardened with answer normalization and validated against the judge (99.87\% agreement, \S\ref{sec:deterministicVerifierDetails}), serve as the primary evaluators, while the LLM judge covers the two subjective goals and remains available{, including the open-source Gemma-4 12B option, for} semantic-level assessment (\S\ref{sec:judgeEvaluatorDetails}). Given the low ambiguity and subjectivity of our tasks, the judge itself shows high levels of human agreement (\S\ref{sec:evaluationJudgeEval}), so both evaluation paths provide a reliable signal.

\begin{minipage}[c]{0.66\columnwidth}
    \raggedright 
    \subsubsection{Self-Evaluation Bias}
    \label{sec:selfEvalBiasJudge}
    A potential concern with using GPT-5 as the teacher agent for trajectory collection and GPT-5.1, a model of the same family, as the judge evaluator is self-evaluation bias, \textit{i.e.}, the model unduly favors its own outputs even when incorrect. To assess this, we evaluate the trajectories from the GPT-5 teacher outputs using different state-of-the-art judge models. As shown in Table \ref{tab:selfEvalJudge}, we observe similar success rates and agreement rates across all evaluators, indicating minimal self-evaluation bias in our setup.
\end{minipage}
\hfill 
\begin{minipage}[c]{0.3\columnwidth}
    \centering
    \small
    \setlength{\tabcolsep}{2pt}
    % Use \captionof instead of \caption
    \vspace{8pt}
    \captionof{table}{Success rate and agreement rate (AR) with GPT-5.1 for GPT-5.1, Gemini-3.1 Pro, and Claude-4.6 Sonnet.}
    \label{tab:selfEvalJudge}
    \begin{tabular}{lcr}
        \toprule
        \textbf{Judge} & \textbf{SR(\%)} & \textbf{AR(\%)} \\
        \midrule
        GPT-5.1     & 98.57 & 100.00 \\
        Gemini-3.1P & 98.44 & 99.87 \\
        Claude-4.6S & 98.57 & 100.00 \\
        \bottomrule
    \end{tabular}
    % \par\vspace{10pt}
    % (b)
\end{minipage}
% We assess the performance of our judge evaluator by testing it on a subset of \timewarp\ tasks as test cases across different evaluation aspects. We additionally evaluate against a variety of generated answers that closely resemble correct and incorrect answers generated by a model. Finally, a human verifier provides a verdict on the reward of the judge. Following Table \ref{tab:judge_results}, we observe the judge closely resembling human judgement, with a high agreement score across most cases. The only instance where a human provides a negative verdict on the judge rewarded is when there is an unordered list matching problem, but the problem being unordered is implicitly assumed from the task definition. In this instance, the judge is strict and assumes the task to be an ordered list matching problem. We believe this is somewhat subjective and won't cause any significant evaluation errors during the benchmark. 

\subsection{Is version robustness just generalization?}
\label{sec:generalizationDiscussion}
Generalization for a web agent typically entails adapting to (a) unseen tasks, (b) unseen domains or websites, and (c) unseen versions of the same website. \timewarp\ evaluates task generalization through its held-out test split, domain generalization as cross-benchmark transfer (Fig.~\ref{fig:webArenaTrainingAndContinualLearning}(b)), and version generalization natively. We argue that a fully generalized agent must handle all three, and that version robustness is not subsumed by the first two.

% [REFINE: punctuation] Three colons in this paragraph replaced by full stops.
First, version robustness is not tied to the intrinsic difficulty of the goals. The human baseline achieves $97.1\%$ on every version with $\Delta\text{v}=0.0$ (Tab.~\ref{tab:mainBenchmarkVisual}), \textit{i.e.}, versions do not change how hard the tasks are, yet agents vary by as much as $18.8$ points across versions on the same goals. Hence, generalizing to unseen tasks does not guarantee robustness to unseen versions. Second, version sensitivity is not a byproduct of low success. Qwen-3VL 8B and Gemma-3 12B achieve nearly identical overall success under SoM ($11.9\%$ vs. $11.0\%$) yet differ sharply in version sensitivity ($\Delta\text{v}=18.8$ vs. $7.4$). Third, domain- and version-level training do not transfer to each other. Additional training data from WebArena-Lite's websites degrades \timewarp\ performance (Fig.~\ref{fig:webArenaTrainingAndContinualLearning}(c)), while multi-version training on the same websites improves performance on unseen versions (Tab.~\ref{tab:generalization}). As websites naturally change over time, adapting to new versions is necessary to maintain performance during deployment, and this dimension of generalization has not been systematically studied prior to \timewarp.

\section{Methodology Details}
\label{sec:methodologyDetails}
We extend \S\ref{sec:methodology} and provide the algorithms of our methods (\S\ref{sec:algorithms}) and  additional details on \timetraj's motivation (\S\ref{sec:timetrajMotivation}), execution (\S\ref{sec:singleStepExecution}), and performance (\S\ref{sec:timeTrajPerformance}). We then discuss the human effort required by \timetraj\ (\S\ref{sec:TimeTrajHumanEffort}), provide a comparison of existing trajectory collection methods with \timetraj\ (\S\ref{sec:trajectoryCollectionComparison}), and add details on \textsc{TimeWarp-BC} (\S\ref{subsec:timewarpBCdetails}).

\subsection{Algorithms}
\label{sec:algorithms}

We provide the algorithms of the \timetraj\ and \timewarp-\textsc{BC} methods proposed in this work, following \S\ref{sec:trajectoryCollectionTimeTraj}, \ref{sec:timewarpBC}. Specifically, the algorithms for Human-in-the-Loop Plan Distillation, Teacher Rollouts, and \timewarp-\textsc{BC} are provided in Alg. \ref{alg:HiTLPlanDistillation}, \ref{alg:teacherRollouts}, and \ref{alg:timewarpBC}, respectively.

\begin{algorithm}[h]
\caption{Human-in-the-Loop Plan Distillation}
\label{alg:HiTLPlanDistillation}
\begin{algorithmic}[1]
\INPUT goal dataset $\mathcal{D}_\text{goal}$, planning environment $\mathcal{E}_p$
% $\in\{\mathcal{E}_1,\cdots\mathcal{E}_k\}$
\STATE Initialize planning dataset $\mathcal{D}_\text{plan} \leftarrow \emptyset$
\FOR{each goal and desired outcome $(g,a)\in\mathcal{D}_\text{goal}$}
\STATE Reset planning environment $\mathcal{E}_P\leftarrow\textsc{Reset}(\mathcal{E}_p)$ \hfill $\triangleright$ {a single reference version}
\STATE Sample draft plan $\hat{p} \sim \Pi_{\mathrm{plan}}(\cdot \mid g,a)$ \hfill $\triangleright$ {drafts a high-level execution plan}
\STATE  Human refined plan: $p^* \leftarrow H(\hat{p},\mathcal{E}_P)$ \hfill $\triangleright$ {verifies once per goal, version independent}
\STATE Append planned task:
$\mathcal{D}_\text{plan} \leftarrow \mathcal{D}_\text{plan}\cup\{(g,p^*)\}$
\ENDFOR
\STATE \textbf{return} $\mathcal{D}_\text{plan}$ \hfill $\triangleright$ {one refined plan per goal, reusable across all versions}
\end{algorithmic}
\end{algorithm}

\begin{algorithm}[h]
\caption{Teacher Rollouts across Versions}
\label{alg:teacherRollouts}
\begin{algorithmic}[1]
\INPUT planning dataset $\mathcal{D}_\text{plan}$, environments $\{\mathcal{E}_1,\cdots,\mathcal{E}_k\}$
\STATE Initialize trajectory dataset $\mathcal{D}_\tau \leftarrow \emptyset$
\FOR{each goal and plan $(g,p^*)\in\mathcal{D}_\text{plan}$}
\FOR{each environment variant $\mathcal{E} \in \{\mathcal{E}_1,\dots,\mathcal{E}_k\}$}
\STATE Reset environment $\mathcal{E}\leftarrow\textsc{Reset}(\mathcal{E})$
\STATE Sample teacher trajectory
$\hat{\tau} \sim \Pi_T(\cdot|g,a,p^*,\mathcal{E})$ \hfill $\triangleright$ {plan-conditioned teacher execution}
\STATE Evaluate trajectory with judge evaluator, $R \leftarrow J_{\phi}(\hat{\tau}, g)$ \hfill $\triangleright$ {binary reward}
\IF{$R = 1$}
    \STATE Append trajectory $\mathcal{D}_\tau \leftarrow \mathcal{D}_\tau \cup \{\hat{\tau}\}$ \hfill $\triangleright$ {failed rollouts are discarded}
\ENDIF
\ENDFOR
\ENDFOR
\STATE \textbf{return} $\mathcal{D}_\tau$ \hfill $\triangleright$ {up to $k$ trajectories per goal, one per version}
\end{algorithmic}
\end{algorithm}

\begin{algorithm}[h]
\caption{\timewarp\ Behavior Cloning}
\label{alg:timewarpBC}
    \begin{algorithmic}[1]
    \INPUT student policy $\pi_\theta$, environments $\{\mathcal{E}_1,\cdots,\mathcal{E}_k\}$, goal dataset $\mathcal{D}_\text{goal}$
  \STATE Initialize student policy $\pi_\theta$ with pretrained parameters $\theta \leftarrow \theta_{\mathrm{pre}}$
  \STATE Collect plans $\mathcal{D}_\text{plan}\leftarrow \textsc{PlanDistill}(\mathcal{D}_\text{goal},\mathcal{E}_p)$, where  $\mathcal{E}_p\in\{\mathcal{E}_1,\cdots,\mathcal{E}_k\}$
  \hfill $\triangleright$ Algorithm \ref{alg:HiTLPlanDistillation}
  \STATE Collect teacher trajectories
$\mathcal{D}_\tau\leftarrow$\textsc{TeacherRollout} ($\mathcal{D}_\text{plan}$) \hfill $\triangleright$ Algorithm \ref{alg:teacherRollouts}
\STATE Train student policy $\pi_\theta$ using supervised learning on the observation history $h$ and response $y$:\\
% [REFINE: symbol] The expectation sampled $(h,a)$ but the log-likelihood is of $y$; Eq.~\ref{eq:timewarpBC} samples $(h,y)$.
% ORIG: \quad $\mathcal{L}_\text{TW-BC}=-\E_{(h,a)\sim\mathcal{D}_\tau}[\log\pi_\theta(y|h)]$ \hfill $\triangleright$ Equation \ref{eq:timewarpBC}
\quad {$\mathcal{L}_\text{TW-BC}=-\E_{(h,y)\sim\mathcal{D}_\tau}[\log\pi_\theta(y|h)]$} \hfill $\triangleright$ Equation \ref{eq:timewarpBC}
\STATE \textbf{return} $\pi_\theta$
  % \STATE Collect plans $\mathcal{D}_\text{plan}\leftarrow\text{PlanDistill}(\mathcal{D}_\text{task}, E_P)$ via Alg. \ref{alg:HiTLPlanDistillation}.
    \end{algorithmic}
\end{algorithm}

\subsection{Motivation behind \timetraj}
\label{sec:timetrajMotivation}
While the motivation behind \timetraj\ has been briefly established in \S\ref{sec:introduction}, \ref{sec:relatedWork}, in this section, we add to that discussion. A prerequisite for any web agentic training pipeline is the Behavior Cloning (BC) phase \cite{qi2024webrl,webagentR1-shi-etal-2024-direct}. Human-annotated trajectories have been treated as the gold standard, but collecting trajectories manually across multiple versions for each task goal is both resource-intensive and time-consuming. Automated trajectory collection \cite{pahuja-etal-2025-explorer, trabucco2025insta} offers a viable solution, but tends to produce simpler, incremental tasks that do not match the complexity of realistic test tasks (\S\ref{sec:trajectoryCollectionComparison}). Additionally, automated trajectories can be used alongside expert trajectories on the training set to achieve better gains. 

Another option is to directly use a teacher policy to automatically collect trajectories on the given training tasks as a form of distillation. % ORIG: While these automated policies have come a long way, they are far from perfect
{While these automated policies have improved substantially, they are far from perfect}, \textit{e.g.}, OpenAI's Operator reaches 58\% on WebArena \cite{zhou2023webarena}, compared to 78\% for humans \cite{Song2025BEARCUBSABU}. Failed trajectories are generally filtered, resulting in the loss of a significant amount of rich training data. This also leads to \textit{cascading errors}, \textit{i.e.}, agents that perform better on certain versions will end up oversampling trajectories from those versions, and the student model trained on this imbalanced dataset will inherit the same bias and perform worse on particular versions.

\begin{figure}[ht]
    \centering
    \includegraphics[width=\linewidth]{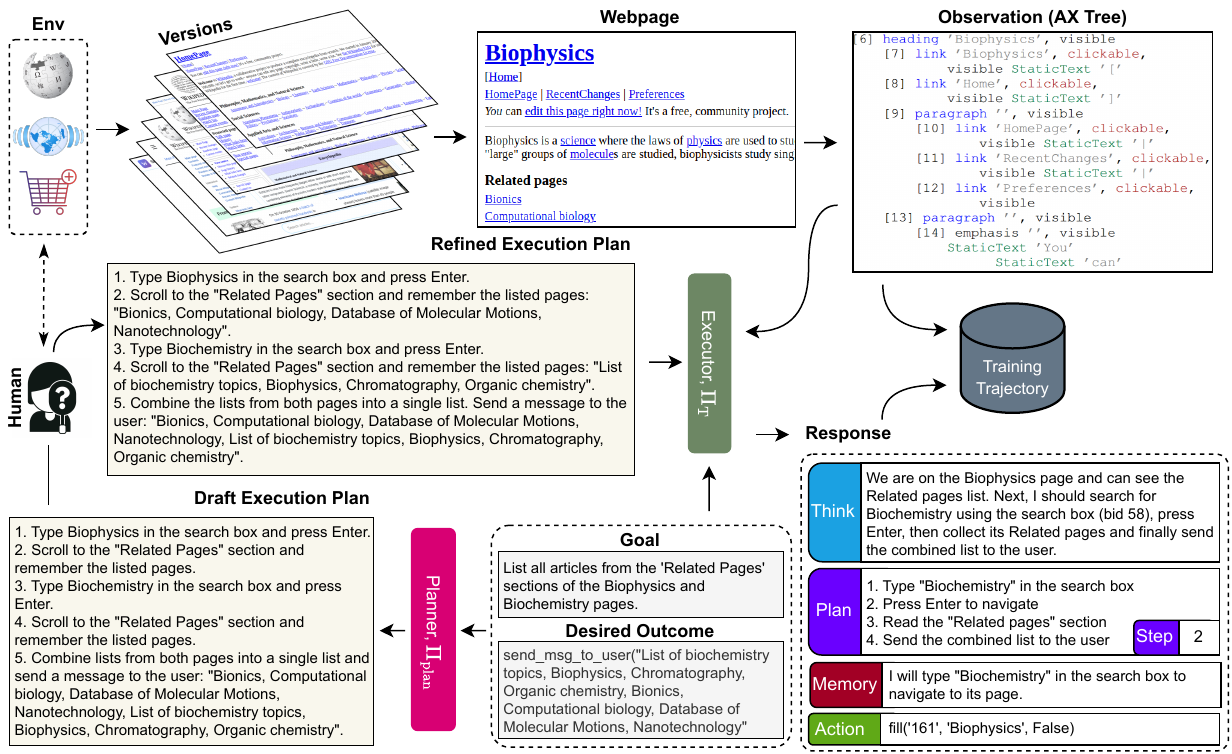}
    \caption{Execution of a single step of a positive (accepted) trajectory collected by \timetraj.}
    \label{fig:trajectoryexpanded}
\end{figure}

\subsection{\timetraj\ Execution Diagram}
\label{sec:singleStepExecution}

A detailed execution diagram of a positive (accepted) trajectory collected using \timetraj\ is provided in Fig. \ref{fig:trajectoryexpanded}. For each task goal and desired outcome, the planner module generates a draft execution plan, which a human annotator refines by adding execution checkpoints. The refined execution plan is subsequently passed to the executor agent along with the task goal, and an observation from the environment in the form of HTML, Accessibility Tree (AX Tree), UI Screenshot, or Set-of-Marks (see \S\ref{sec:ObservationSpace} for examples). Our execution diagram uses AX Tree as an example of the observation. The agent generates a response that contains thinking, planning, memory, and action tokens. The planning tokens also include the step token, which is encapsulated in a separate $<$Step$>$ identifier. The observation and full response are added to the training dataset, provided that the evaluator considers the final desired outcome correct. Note that Fig. \ref{fig:trajectoryexpanded} shows a single step of a positive (accepted) trajectory, \textit{i.e.}, more steps will be generated until the agent exits or reaches the maximum number of steps ($30$ per episode). For each task goal and desired outcome, trajectories are collected across all versions of the \timewarp\ environment.

\subsection{Performance of \timetraj}
\label{sec:timeTrajPerformance}
With human refinement, \timetraj\ using a GPT-5 executor agent achieved $757/768=98.57\%$ success rate in collecting training trajectories, which is significantly higher than a GPT-5 web agent in a zero-shot setting or with a planner. Our toy experiments on a subset of \timewarp\ revealed that without human refinement, our method achieves 60--70\% across versions. Following \S\ref{sec:Results}, as \textsc{TimeWarp-BC}'s performance scales somewhat linearly with the number of training samples, removing human refinement would result in a 30--40\% loss of trajectories and lead to significant performance degradation.

\subsection{Human Effort required by \timetraj}
\label{sec:TimeTrajHumanEffort}

While \timetraj\ outperforms fully automated trajectory collection methods, it requires human refinement of the execution plans. Here we quantify the extent of this involvement.
We reiterate that in \timetraj, humans only need to add checkpoints to the execution plan generated by the planner (Fig. \ref{fig:trajectoryexpanded}), which is considerably easier than collecting full trajectories from scratch. Furthermore, \timetraj’s human refinement is required only once and can be scaled across any number of versions, unlike manual collection, which must be repeated for each version. 

To quantify this concretely, we re-annotated $20$ goals, \textit{i.e.,} $20\times6=120$ trajectories, and found that standard annotation took 1 min 57 secs per goal on average compared to just 27 secs with \timetraj. Manual trajectory collection also required 1.3 retries/task on average, {whereas \timetraj\ does not have this concern, since the final trajectory is generated by the teacher agent. Finally, if we also required human annotators to write plans, reasoning traces, and memory traces, the overall time would increase several-fold, and it is also not scalable.} We can hence conclude that the human involvement in \timetraj\ is minimal.

\subsection{Comparison of Trajectory Collection Methods}
\label{sec:trajectoryCollectionComparison}
In this section, we compare existing trajectory collection methods, including both manual and automated ones. Automated methods typically generate both the training task and its trajectory. Following Table \ref{tab:trajectoryCollectionComparison}, we observe that the automatically generated tasks are relatively easier than human-annotated ones, and are not usually aligned with the dataset's test distribution. While the human annotation can generate more complex tasks, it lacks scalability across multiple versions of the environment. % ORIG: \timetraj\ takes a hybrid approach that takes the dataset specificity of human-annotated training tasks while ensuring version-scalability
{\timetraj\ takes a hybrid approach that keeps the dataset specificity of human-annotated training tasks while ensuring version scalability} by automatically collecting trajectories across versions via a single annotated plan for each task goal. 

% we observe several automatic trajectory collection algorithms producing tasks with have moderate complexity but aren't specific to the test distribution of the dataset. These methods generally create the tasks and the trajectories while interacting with the web environment. The human-annotated tasks appear to be more complex, as seen in both WebArena \cite{zhou2023webarena} and \timewarp. These training tasks are also specific to the distribution of the test data. The trajectories are, however, collected from human annotators, which, as previously mentioned, is tedious and resource-intensive across multiple versions in \timewarp\  (\S\ref{sec:timetrajMotivation}). \timewarp, therefore, employs a hybrid trajectory collection algorithm that takes the benefit of dataset specificity of human annotated tasks while using automatic methods for collecting trajectories across versions. \timewarp, also promotes scalability across versions, \textit{i.e.}, plans collected once for a single version can be reused to collect trajectories across any number of versions. 

\subsection{\textsc{TimeWarp-BC}\ Details}
\label{subsec:timewarpBCdetails}

\subsubsection{Preliminaries: Standard BC}
\label{sec:vanillaStandardBC}
\sectionvspace
In the standard behavior cloning (BC) setting, we assume access to a fixed dataset of expert trajectories $\mathcal{D}_\tau$ collected from the environment. We sample history-response pairs from the dataset $(h_{i,t}, y_{i,t})\sim \mathcal{D}_\tau$, and obtain the action via a parser $a_{i,t}=\phi(y_{i,t})$. Each $(h_{i,t}, y_{i,t})$ is viewed as a supervised learning example. The goal is to learn a parameterized policy $\pi_\theta(a \mid h)$ by imitating the actions of the expert or teacher policy $\Pi_T$ using $\mathcal{D}_\tau$. Standard BC disregards the non-action tokens in the agent's full response and minimizes the negative log-likelihood of the actions only:
% \begin{equation}
%     \mathcal{L}_{\text{BC}}(\theta)
%     = - \sum_{i=1}^N \sum_{t=1}^{T_i}
%         \log \pi_\theta(a_{i,t} \mid s_{i,t}).
%     \label{eq:bc_loss}
% \end{equation}
% Equivalently,
% \vspace{-0.15cm}
\begin{equation}
\label{eq:vanillaBCLoss}
%\vspace{-0.15cm}
    \mathcal{L}_{\text{BC}}(\theta)
    = - \mathbb{E}_{(h,\phi(y)) \sim \mathcal{D}_\tau} \big[ \log \pi_\theta(a \mid h) \big].
% \vspace{-0.05cm}
\end{equation}
%where the expectation is over all state--action pairs in the dataset. 
% Standard BC optimizes the supervised loss $\mathcal{L}_\text{BC}(\theta)$ using the expert trajectories collected $\mathcal{D}_\tau$ over the environment $E$. During training, the non-action tokens in an agent's full response $y_t$ are disregarded. 

\subsubsection{Implementation Details}

In \timewarp-\textsc{BC}, each model used its associated conversational template, and we applied the standard language model masking strategy, \textit{i.e.}, the loss is computed only on the assistant response, while the system prompt is excluded from the loss. The conversational training data is provided in the ShareGPT format, with responses ordered in the sequence: thinking, planning, memory, and action tokens (\textit{cf.} \S\ref{sec:prompt}). The same masking strategy is applied across all response tokens.

\definecolor{greenTraj}{HTML}{1B5E20}  % dark green
\definecolor{redTraj}{HTML}{B71C1C}    % dark red
\definecolor{yellowTraj}{HTML}{8A6D00} 

\begin{table}[]
  \centering
    \caption{Comparison of different trajectory collection methods across datasets based on task complexity and automation type. Dataset specificity (DS) refers to whether these methods can be applied to generate dataset-specific or general tasks. }
  \label{tab:trajectoryCollectionComparison}
       \vspace{4pt}
  \small
  \begin{tabularx}{\textwidth}{@{}l X c c c@{}}
    \toprule
    \textbf{Dataset/Method} & \textbf{Example Goal} & \textbf{Type} & \textbf{Complexity} & \textbf{DS} \\
    \midrule
    \multirow{2}{*}{AgentTrek \cite{xu2024agenttrek}} &
    Find the return policy for any men's football apparel on Under Armour's website. &
    \multirow{2}{*}{\textcolor{greenTraj}{\textbf{Auto}}} & \multirow{2}{*}{\textcolor{yellowTraj}{\textbf{Medium}}} & \multirow{2}{*}{\textbf{\xmark}} \\ \midrule
    \multirow{2}{*}{Explorer \cite{pahuja-etal-2025-explorer}} &
    Purchase a three-seat fabric sofa, specifically the UPPLAND Sofa, from IKEA’s website. &
    \multirow{2}{*}{\textcolor{greenTraj}{\textbf{Auto}}} & \multirow{2}{*}{\textcolor{yellowTraj}{\textbf{Medium}}} & \multirow{2}{*}{\textbf{\xmark}} \\ \midrule
   \multirow{1}{*}{ InSTA \cite{trabucco2025insta}} &
    Retrieve a recent podcast episode about space exploration. &
    \textcolor{greenTraj}{\textbf{Auto}} & \textcolor{yellowTraj}{\textbf{Medium}} & \textbf{\xmark} \\ \midrule
    \multirow{4}{*}{WebArena \cite{zhou2023webarena}} &
    Given the following locations, [`Carnegie Mellon University', `apple store shadyside', `starbucks on craig street'], what would be the optimal route to travel through them all in order to minimize total travel time? Please note the journey begins at the first place listed. &
    \multirow{4}{*}{\textcolor{redTraj}{\textbf{Manual}}} & \multirow{4}{*}{\textcolor{greenTraj}{\textbf{High}}} & \multirow{4}{*}{\textbf{\cmark}} \\ \midrule
    \multirow{5}{*}{\timetraj\ (ours)} &
    I love the Matrix movie series. Please find related Wiki articles, news, and products from the Shop. If there are multiple entries on each site, pick the one you think is the most relevant. If you cannot find any product or article, mention that in your answer. Share the titles of these items in the following format: \texttt{<site name>: <article/news/product title>}. &
    \multirow{5}{*}{\textcolor{yellowTraj}{\textbf{Hybrid}}} & \multirow{5}{*}{\textcolor{greenTraj}{\textbf{High}}} & \multirow{5}{*}{\textbf{\cmark}} \\
    \bottomrule
  \end{tabularx}

\end{table}

% [THEORY] Section added by /theory. It formalizes the single design choice that Tab.~\ref{tab:generalization} and Fig.~\ref{fig:trainingSamplesBCTWContextLength}(a) isolate: training on trajectories from several versions of the same website instead of one. Proposition~\ref{prop:versionCount} bounds the risk on a fresh version in terms of the number of training versions, and Proposition~\ref{prop:coverage} bounds the risk on a fixed unseen version by its discrepancy to the training versions. Everything below is red; the two priority comments follow the closing brace.
{
\section{A Theoretical Perspective on Multi-Version Training}
\label{sec:theory}
We give a simple account of why training on more versions helps a web agent on versions it has not seen. The analysis concerns only the design choice isolated by Tab.~\ref{tab:generalization}, namely training on trajectories from $k$ versions of the same website instead of one, and abstracts away the rest of the pipeline. All the theoretical results have been formally verified in Lean 4, and the scripts are available in the shared code repository.

\paragraph{Setting.}
Let $\mathcal{V}=\{1,\dots,n\}$ index the versions $\{\mathcal{E}_1,\dots,\mathcal{E}_n\}$ of \S\ref{sec:problem}, and let $\Theta$ be the set of student policies $\pi_\theta$ under consideration. Following \S\ref{sec:versionGeneralizationExpSetting}, $\mathcal{D}_{\tau,v}$ denotes the distribution of history-response pairs $(h,y)$ produced by the teacher on version $v$, from which the collected trajectories of \S\ref{sec:teacherRolloutsMethod} are sampled. The per-sample loss of \S\ref{sec:timewarpBC} is $\ell(\theta;h,y)=-\log\pi_\theta(y\mid h)$, and the risk of $\theta$ on version $v$ is
\begin{equation}
\label{eq:versionRisk}
\mathcal{L}_v(\theta)=\mathbb{E}_{(h,y)\sim\mathcal{D}_{\tau,v}}\big[\ell(\theta;h,y)\big],
\end{equation}
so that Eq.~\ref{eq:timewarpBC} evaluated on version-$v$ data is the empirical counterpart of $\mathcal{L}_v$.
% [PROOF-CHECK] Proposition~\ref{prop:versionCount} draws the $k$ training versions i.i.d., so they can repeat. Indexing Eq.~\ref{eq:pooledRisk} by a set of size $k$ then under-counts, because a repeated version leaves the sum with fewer than $kM$ terms, which is not the average the proof concentrates. The versions are now listed with repetition, and $\mathcal{V}_{\mathrm{tr}}$ is kept for the set of distinct versions, which is what Proposition~\ref{prop:coverage} uses.
% ORIG: A set of training versions $\mathcal{V}_{\mathrm{tr}}\subseteq\mathcal{V}$ with $|\mathcal{V}_{\mathrm{tr}}|=k$ contributes $M$ history-response pairs per version, and \timewarp-\textsc{BC} minimizes the pooled empirical risk
% [PROOF-CHECK] Eq.~\ref{eq:pooledRisk} and Eq.~\ref{eq:versionCountBound} divide by $k$ and $kM$, so both must be positive. The formal statement carries $k,M\ge1$ explicitly. Also restored a verb: ``Let $v_1,\dots,v_k$ for'' had none after ``Write'' was replaced on Overleaf.
% ORIG: {Let $v_1,\dots,v_k$ for the $k$ training versions, allowing repetitions, and let $\mathcal{V}_{\mathrm{tr}}=\{v_1,\dots,v_k\}\subseteq\mathcal{V}$ be the set of distinct versions among them. Each of the $k$ training versions contributes $M$ history-response pairs, and \timewarp-\textsc{BC} minimizes the pooled empirical risk}
{Let $v_1,\dots,v_k$ denote the $k\ge1$ training versions, allowing repetitions, and let $\mathcal{V}_{\mathrm{tr}}=\{v_1,\dots,v_k\}\subseteq\mathcal{V}$ be the set of distinct versions among them. Each of the $k$ training versions contributes $M\ge1$ history-response pairs, and \timewarp-\textsc{BC} minimizes the pooled empirical risk}
\begin{equation}
\label{eq:pooledRisk}
% ORIG:     \hat{\mathcal{L}}_{\mathrm{tr}}(\theta)=\frac{1}{kM}\sum_{v\in\mathcal{V}_{\mathrm{tr}}}\sum_{j=1}^{M}\ell\big(\theta;h_{v,j},y_{v,j}\big).
    {\hat{\mathcal{L}}_{\mathrm{tr}}(\theta)=\frac{1}{kM}\sum_{i=1}^{k}\sum_{j=1}^{M}\ell\big(\theta;h_{i,j},y_{i,j}\big).}
\end{equation}
Single-version behavior cloning is the special case $k=1$.
% ORIG: The analysis treats history-response pairs as the unit of data and indexes them by version and sample, $(h_{v,j},y_{v,j})$, rather than by trajectory and step as in Eq.~\ref{eq:trajectory}.
{The analysis treats history-response pairs as the unit of data and indexes them by training version and sample, $(h_{i,j},y_{i,j})$, rather than by trajectory and step as in Eq.~\ref{eq:trajectory}.} Treating the pairs as independent ignores the dependence between steps of one trajectory, which is the usual simplification in behavior-cloning analyses.

\begin{assumption}[Bounded loss and finite policy class]
\label{ass:bounded}
% [PROOF-CHECK] Nonemptiness of $\Theta$ is used wherever the section takes a minimum or an argmin over $\Theta$, and it keeps the supremum in Eq.~\ref{eq:discrepancy} well defined. The paper left it implicit.
% ORIG: There is a constant $B>0$ such that $0\le\ell(\theta;h,y)\le B$ for all $\theta\in\Theta$ and all $(h,y)$, and $\Theta$ is finite.
{There is a constant $B>0$ such that $0\le\ell(\theta;h,y)\le B$ for all $\theta\in\Theta$ and all $(h,y)$, and $\Theta$ is finite and nonempty.}
\end{assumption}
Both parts are mild in our setting. The response $y$ has bounded length, the vocabulary is finite, and the softmax logits of a language model are bounded in finite precision, so the negative log-likelihood is bounded. Finiteness of $\Theta$ holds for any fixed-precision parameterization and is used only to keep the union bound in the proof elementary.

\begin{definition}[Version discrepancy]
\label{def:discrepancy}
For two versions $u,v\in\mathcal{V}$, the discrepancy with respect to $\Theta$ can be defined as
\begin{equation}
\label{eq:discrepancy}
    \mathrm{disc}_\Theta(u,v)=\sup_{\theta\in\Theta}\big|\mathcal{L}_u(\theta)-\mathcal{L}_v(\theta)\big|.
\end{equation}
\end{definition}
% ORIG: The discrepancy is the largest gap in behavior-cloning loss that any policy in the class can exhibit between the two versions. It is symmetric, vanishes for $u=v$, and satisfies the triangle inequality. It is small when the two versions differ only in ways that no policy in $\Theta$ can exploit, and it is measured through the observation modality the policy consumes, so the same pair of versions can be close under \axt\ and far under \som.
The discrepancy is the largest gap in behavior-cloning loss that any policy in the class can exhibit between the two versions. {It uses the discrepancy distance of \cite{mansour2009domain} to the labeled loss of Eq.~\ref{eq:versionRisk}, and likewise, it compares two distributions only through the losses of a fixed policy class.} It is symmetric, vanishes for $u=v$, and satisfies the triangle inequality. It is small when the two versions differ only in ways that no policy in $\Theta$ can exploit, and it is measured through the observation modality the policy consumes, so the same pair of versions can be close under \axt\ and far under \som.

\subsection{Risk on a fresh version decreases with the number of training versions}
\label{sec:theoryVersionCount}
% ORIG: We first model the versions the agent will face as draws from an unknown distribution $\mathcal{P}$ over $\mathcal{V}$, as when a website is redesigned in ways the developer cannot anticipate. The quantity of interest is the expected risk on a fresh version,
We first model the versions the agent will face as draws from an unknown distribution $\mathcal{P}$ over $\mathcal{V}$, as when a website is redesigned in ways the developer cannot anticipate. {This is the setting of domain generalization, in which the training domains are themselves a sample from a distribution over domains and the number of domains plays the role of the sample size \cite{blanchard2011generalizing,muandet2013domain}.} The quantity of interest is the expected risk on a fresh version,
\begin{equation}
\label{eq:freshRisk}
    \mathcal{L}_{\mathcal{P}}(\theta)=\mathbb{E}_{v\sim\mathcal{P}}\big[\mathcal{L}_v(\theta)\big].
\end{equation}

\begin{proposition}[Version-count generalization bound]
\label{prop:versionCount}
% [PROOF-CHECK] Leftover of the sequence fix in the Setting. With repetitions allowed, a version drawn twice contributes $2M$ pairs, so the pairs are indexed by training slot $i$ as in Eq.~\ref{eq:pooledRisk}, not by distinct version $v$.
% ORIG: Let Assumption~\ref{ass:bounded} hold. Let the $k$ training versions be drawn i.i.d.\ from $\mathcal{P}$ and, given the versions, let the $M$ pairs of each version $v$ be drawn i.i.d.\ from $\mathcal{D}_{\tau,v}$. Then for any $\delta\in(0,1)$, with probability at least $1-\delta$ over the draw of versions and pairs, simultaneously for all $\theta\in\Theta$,
{Given Assumption~\ref{ass:bounded}, let the $k$ training versions $v_1,\dots,v_k$ be drawn i.i.d.\ from $\mathcal{P}$. Let the versions, and the $M$ pairs of each training version $v_i$ be drawn i.i.d.\ from $\mathcal{D}_{\tau,v_i}$. Then for any $\delta\in(0,1)$, with probability at least $1-\delta$ over the draw of versions and pairs, simultaneously for all $\theta\in\Theta$,}
\begin{equation}
\label{eq:versionCountBound}
    \mathcal{L}_{\mathcal{P}}(\theta)\;\le\;\hat{\mathcal{L}}_{\mathrm{tr}}(\theta)\;+\;\underbrace{B\sqrt{\frac{\log(4|\Theta|/\delta)}{2kM}}}_{\text{within versions}}\;+\;\underbrace{B\sqrt{\frac{\log(4|\Theta|/\delta)}{2k}}}_{\text{across versions}}.
\end{equation}
% [PROOF-CHECK] Eq.~\ref{eq:excessRisk} does not follow from Eq.~\ref{eq:versionCountBound} alone. It also needs the two lower-deviation events at $\theta^\star$, which the union bound in the proof does include, so both displays hold on one and the same event.
% ORIG: Consequently, any minimizer $\hat\theta\in\operatorname*{arg\,min}_{\theta\in\Theta}\hat{\mathcal{L}}_{\mathrm{tr}}(\theta)$ satisfies, with the same probability,
{On the same event, any minimizer $\hat\theta\in\operatorname*{arg\,min}_{\theta\in\Theta}\hat{\mathcal{L}}_{\mathrm{tr}}(\theta)$ also satisfies}
\begin{equation}
\label{eq:excessRisk}
    \mathcal{L}_{\mathcal{P}}(\hat\theta)\;\le\;\min_{\theta\in\Theta}\mathcal{L}_{\mathcal{P}}(\theta)\;+\;2B\sqrt{\frac{\log(4|\Theta|/\delta)}{2kM}}\;+\;2B\sqrt{\frac{\log(4|\Theta|/\delta)}{2k}}.
\end{equation}
\end{proposition}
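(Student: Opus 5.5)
We decompose the gap between the fresh-version risk and the pooled empirical risk through an intermediate quantity, the average of the true per-version risks over the sampled training versions,
\[
\bar{\mathcal{L}}_{\mathrm{tr}}(\theta)=\frac{1}{k}\sum_{i=1}^{k}\mathcal{L}_{v_i}(\theta),
\]
and write $\mathcal{L}_{\mathcal{P}}(\theta)-\hat{\mathcal{L}}_{\mathrm{tr}}(\theta)=\big(\mathcal{L}_{\mathcal{P}}(\theta)-\bar{\mathcal{L}}_{\mathrm{tr}}(\theta)\big)+\big(\bar{\mathcal{L}}_{\mathrm{tr}}(\theta)-\hat{\mathcal{L}}_{\mathrm{tr}}(\theta)\big)$. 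The first bracket is the across-versions term and the second the within-versions term of Eq.~\ref{eq:versionCountBound}. Each will be controlled by Hoeffding's inequality for a fixed $\theta$, and then by a union bound over $\Theta$, which is finite and nonempty by Assumption~\ref{ass:bounded}.

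For the across-versions term, fix $\theta$. The random variables $\mathcal{L}_{v_i}(\theta)$, $i=1,\dots,k$, are i.i.d.\ because the $v_i$ are i.i.d.\ from $\mathcal{P}$. They lie in $[0,B]$ since $0\le\ell\le B$, and their mean is $\mathcal{L}_{\mathcal{P}}(\theta)$ by Eq.~\ref{eq:freshRisk}. One-sided Hoeffding therefore gives $\Pr\big[\mathcal{L}_{\mathcal{P}}(\theta)-\bar{\mathcal{L}}_{\mathrm{tr}}(\theta)>t\big]\le e^{-2kt^2/B^2}$.

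For the within-versions term, we condition on the realized versions $v_1,\dots,v_k$. Given them, the $kM$ losses $\ell(\theta;h_{i,j},y_{i,j})$ are independent and bounded in $[0,B]$. Their conditional mean is $\mathcal{L}_{v_i}(\theta)$, so the mean of their average is exactly $\bar{\mathcal{L}}_{\mathrm{tr}}(\theta)$. Hoeffding for independent but not identically distributed bounded variables then gives the conditional bound $e^{-2kMt^2/B^2}$. Since this bound does not depend on the conditioning, it also holds unconditionally. Repeated versions cause no difficulty, because the pairs are indexed by training slot $i$ rather than by distinct version.

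We allocate probability $\delta/(4|\Theta|)$ to each of four events per $\theta$: the upper and lower deviations of each of the two terms. Solving for $t$ gives the radii $B\sqrt{\log(4|\Theta|/\delta)/(2k)}$ and $B\sqrt{\log(4|\Theta|/\delta)/(2kM)}$. A union bound over the $4|\Theta|$ events yields a single event of probability at least $1-\delta$ on which all deviations are bounded. Summing the two upper deviations proves Eq.~\ref{eq:versionCountBound} for every $\theta$.

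For Eq.~\ref{eq:excessRisk}, let $\theta^\star\in\arg\min_\Theta\mathcal{L}_{\mathcal{P}}$, which exists because $\Theta$ is finite and nonempty. On the same event, we chain
\[
\mathcal{L}_{\mathcal{P}}(\hat\theta)\le\hat{\mathcal{L}}_{\mathrm{tr}}(\hat\theta)+\varepsilon\le\hat{\mathcal{L}}_{\mathrm{tr}}(\theta^\star)+\varepsilon\le\mathcal{L}_{\mathcal{P}}(\theta^\star)+2\varepsilon,
\]
where $\varepsilon$ is the sum of the two radii. The last step uses the lower-deviation events at $\theta^\star$, which is exactly why those events were included in the union bound.

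The main obstacle is the conditioning step in the within-versions term. We must argue carefully that the conditional Hoeffding bound, which is uniform in the realized versions, integrates to an unconditional bound. We must also check that $\bar{\mathcal{L}}_{\mathrm{tr}}$ is the correct centering when versions repeat. Everything else is routine bookkeeping.
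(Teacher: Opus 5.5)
Your proposal is correct and follows the paper's proof step for step. It uses the same intermediate average $\bar{\mathcal{L}}_{\mathrm{tr}}(\theta)$, Hoeffding across versions, conditional-then-unconditional Hoeffding within versions indexed by training slot, and the same chain through a fixed $\theta^\star$ for Eq.~\ref{eq:excessRisk}. The only difference is the union-bound bookkeeping: you charge lower-deviation events at every $\theta$ ($4|\Theta|$ events, total $\delta$), whereas the paper charges them only at $\theta^\star$ (total at most $\delta/2+\delta/(2|\Theta|)\le\delta$). Both give the same $\log(4|\Theta|/\delta)$ factor in the radii, and yours also yields two-sided uniform deviation bounds.
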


\emph{Takeaway: the gap between the training loss and the loss on a fresh version has two parts. The within-version part shrinks with the total number of pairs $kM$, but the across-version part shrinks only with the number of versions $k$. Adding trajectories to a single version ($k=1$) leaves the across-version part at its maximum, whereas adding versions shrinks both.}

\begin{proof}
% ORIG: Fix $\theta\in\Theta$ and write $\bar{\mathcal{L}}_{\mathrm{tr}}(\theta)=\frac{1}{k}\sum_{v\in\mathcal{V}_{\mathrm{tr}}}\mathcal{L}_v(\theta)$ for the average population risk over the sampled versions.
% [REFINE: grammar] ``Fix $\theta$ and $\bar{\mathcal{L}}_{\mathrm{tr}}(\theta)=\dots$ for the average'' has no verb after ``write'' was removed on Overleaf.
% ORIG: {Fix $\theta\in\Theta$ and $\bar{\mathcal{L}}_{\mathrm{tr}}(\theta)=\frac{1}{k}\sum_{i=1}^{k}\mathcal{L}_{v_i}(\theta)$ for the average population risk over the sampled versions.} Then
{Fix $\theta\in\Theta$ and let $\bar{\mathcal{L}}_{\mathrm{tr}}(\theta)=\frac{1}{k}\sum_{i=1}^{k}\mathcal{L}_{v_i}(\theta)$ denote the average population risk over the sampled versions.} Then
\begin{equation*}
    \mathcal{L}_{\mathcal{P}}(\theta)-\hat{\mathcal{L}}_{\mathrm{tr}}(\theta)
    =\big(\mathcal{L}_{\mathcal{P}}(\theta)-\bar{\mathcal{L}}_{\mathrm{tr}}(\theta)\big)
    +\big(\bar{\mathcal{L}}_{\mathrm{tr}}(\theta)-\hat{\mathcal{L}}_{\mathrm{tr}}(\theta)\big).
\end{equation*}
\emph{Across versions.}
% ORIG: The $k$ values $\mathcal{L}_v(\theta)$, $v\in\mathcal{V}_{\mathrm{tr}}$, are i.i.d., lie in $[0,B]$ by Assumption~\ref{ass:bounded}, and have mean $\mathcal{L}_{\mathcal{P}}(\theta)$ by Eq.~\ref{eq:freshRisk}.
{The $k$ values $\mathcal{L}_{v_1}(\theta),\dots,\mathcal{L}_{v_k}(\theta)$ are i.i.d., lie in $[0,B]$ by Assumption~\ref{ass:bounded}, and have mean $\mathcal{L}_{\mathcal{P}}(\theta)$ by Eq.~\ref{eq:freshRisk}.} Hoeffding's inequality \cite{hoeffding1963probability} gives, for every $\epsilon>0$,
\begin{equation*}
    \Pr\big[\,\mathcal{L}_{\mathcal{P}}(\theta)-\bar{\mathcal{L}}_{\mathrm{tr}}(\theta)\ge\epsilon\,\big]\le\exp\!\big(-2k\epsilon^2/B^2\big),
\end{equation*}
and the same bound holds for the deviation in the other direction.

\emph{Within versions.}
% ORIG: Conditionally on the sampled versions, the $kM$ losses $\ell(\theta;h_{v,j},y_{v,j})$ are independent, lie in $[0,B]$, and satisfy $\mathbb{E}[\ell(\theta;h_{v,j},y_{v,j})\mid v]=\mathcal{L}_v(\theta)$ by Eq.~\ref{eq:versionRisk}. Hence $\mathbb{E}[\hat{\mathcal{L}}_{\mathrm{tr}}(\theta)\mid\mathcal{V}_{\mathrm{tr}}]=\bar{\mathcal{L}}_{\mathrm{tr}}(\theta)$, and Hoeffding's inequality gives
{Conditionally on the drawn versions $v_{1:k}$, the $kM$ losses $\ell(\theta;h_{i,j},y_{i,j})$ are independent, lie in $[0,B]$, and satisfy $\mathbb{E}[\ell(\theta;h_{i,j},y_{i,j})\mid v_i]=\mathcal{L}_{v_i}(\theta)$ by Eq.~\ref{eq:versionRisk}. Hence $\mathbb{E}[\hat{\mathcal{L}}_{\mathrm{tr}}(\theta)\mid v_{1:k}]=\bar{\mathcal{L}}_{\mathrm{tr}}(\theta)$, and Hoeffding's inequality \cite{hoeffding1963probability} gives}
\begin{equation*}
% ORIG:     \Pr\big[\,\bar{\mathcal{L}}_{\mathrm{tr}}(\theta)-\hat{\mathcal{L}}_{\mathrm{tr}}(\theta)\ge\epsilon\;\big|\;\mathcal{V}_{\mathrm{tr}}\,\big]\le\exp\!\big(-2kM\epsilon^2/B^2\big),
    {\Pr\big[\,\bar{\mathcal{L}}_{\mathrm{tr}}(\theta)-\hat{\mathcal{L}}_{\mathrm{tr}}(\theta)\ge\epsilon\;\big|\;v_{1:k}\,\big]\le\exp\!\big(-2kM\epsilon^2/B^2\big),}
\end{equation*}
% ORIG: again in both directions. The right-hand side does not depend on $\mathcal{V}_{\mathrm{tr}}$, so the same bound holds unconditionally.
{again in both directions. The right-hand side does not depend on $v_{1:k}$, so the same bound holds unconditionally.}

\emph{Union bound.} Set $\epsilon_1=B\sqrt{\log(4|\Theta|/\delta)/(2k)}$ and $\epsilon_2=B\sqrt{\log(4|\Theta|/\delta)/(2kM)}$, so that each one-sided event above has probability at most $\delta/(4|\Theta|)$. The $2|\Theta|$ upper-deviation events (both terms, all $\theta$) have total probability at most $\delta/2$, and the two lower-deviation events for a fixed minimizer $\theta^\star\in\operatorname*{arg\,min}_{\Theta}\mathcal{L}_{\mathcal{P}}$ have total probability at most $\delta/2$. On the complement, which has probability at least $1-\delta$, Eq.~\ref{eq:versionCountBound} holds for every $\theta$, and in addition $\hat{\mathcal{L}}_{\mathrm{tr}}(\theta^\star)\le\mathcal{L}_{\mathcal{P}}(\theta^\star)+\epsilon_1+\epsilon_2$.

\emph{Excess risk.} On that event, $\mathcal{L}_{\mathcal{P}}(\hat\theta)\le\hat{\mathcal{L}}_{\mathrm{tr}}(\hat\theta)+\epsilon_1+\epsilon_2\le\hat{\mathcal{L}}_{\mathrm{tr}}(\theta^\star)+\epsilon_1+\epsilon_2\le\mathcal{L}_{\mathcal{P}}(\theta^\star)+2\epsilon_1+2\epsilon_2$, where the middle step uses that $\hat\theta$ minimizes $\hat{\mathcal{L}}_{\mathrm{tr}}$. This is Eq.~\ref{eq:excessRisk}.
\end{proof}

\begin{remark}[What Proposition~\ref{prop:versionCount} predicts]
\label{rem:versionCountBridge}
% [PROOF-CHECK] Same unit fix as in (ii) below. The Setting fixes history-response pairs as the unit of data, and $M$ counts pairs.
% ORIG: Two predictions follow, and both are visible in the experiments. (i) With a single training version, the across-version term equals $B\sqrt{\log(4|\Theta|/\delta)/2}$ and does not depend on $M$, so adding trajectories from the same version cannot tighten the guarantee on other versions. Fig.~\ref{fig:trainingSamplesBCTWContextLength}(a) shows single-version \bc\ saturating with more samples while multi-version \tw\ keeps improving, although the two settings also differ in the response tokens they train on (\S\ref{sec:SampleAblationDetails}).
Two predictions follow, and both are visible in the experiments. (i) With a single training version, the across-version term equals $B\sqrt{\log(4|\Theta|/\delta)/2}$ and does not depend on $M$, so {adding history-response pairs from the same version} cannot tighten the guarantee on other versions. Fig.~\ref{fig:trainingSamplesBCTWContextLength}(a) shows single-version \bc\ saturating with more samples while multi-version \tw\ keeps improving, although the two settings also differ in the response tokens they train on (\S\ref{sec:SampleAblationDetails}). 
% [PROOF-CHECK] The Setting fixes history-response pairs, not trajectories, as the unit of data, and the budget in Eq.~\ref{eq:versionCountBound} is the $kM$ pairs.
% ORIG: (ii) Spreading a fixed budget of $kM$ trajectories over more versions tightens the across-version term
{(ii) Spreading a fixed budget of $kM$ history-response pairs over more versions tightens the across-version term} without changing the within-version term, which predicts better held-out performance from multi-version training (Tab.~\ref{tab:generalization}, rows v$_{1:5}$ versus v$_1$ and v$_5$) and more consistent outcomes across versions (the lower flip rate in Tab.~\ref{tab:flipRate}). We note that the bound controls the behavior-cloning loss rather than the success rate, and relating the two requires the standard imitation-learning argument that the task cost grows with the per-step imitation error \cite{ross2011reduction}. The i.i.d.\ assumption on versions is an idealization, since historical versions form a sequence. Proposition~\ref{prop:coverage} covers the case of a fixed, non-random target version.
\end{remark}

\subsection{Risk on a fixed unseen version is controlled by its distance to the training versions}
\label{sec:theoryCoverage}
When the target is a particular version $u\notin\mathcal{V}_{\mathrm{tr}}$, for instance, the next redesign of a site, no distribution over versions is needed.

\begin{proposition}[Coverage bound]
\label{prop:coverage}
% [PROOF-CHECK] Nonemptiness is needed. The right-most expression of Eq.~\ref{eq:coverageBound} divides by $k=|\mathcal{V}_{\mathrm{tr}}|$ and the middle one is a minimum over $\mathcal{V}_{\mathrm{tr}}$, so both are undefined when $\mathcal{V}_{\mathrm{tr}}$ is empty, and the printed inequality fails there.
% ORIG: For every $\theta\in\Theta$, every set of training versions $\mathcal{V}_{\mathrm{tr}}$, and every version $u\in\mathcal{V}$,
% [PROOF-CHECK] Two fixes. The Setting uses $k$ for the number of draws with repetition, while this proposition counts distinct versions, so the two differ whenever a draw repeats and a separate letter $m$ is used. And Assumption~\ref{ass:bounded} is what makes the supremum in Eq.~\ref{eq:discrepancy} a finite number, which the statement needs but did not invoke.
% ORIG: {For every $\theta\in\Theta$, every nonempty set of training versions $\mathcal{V}_{\mathrm{tr}}$ with $k=|\mathcal{V}_{\mathrm{tr}}|$, and every version $u\in\mathcal{V}$,}
{Given Assumption~\ref{ass:bounded}, the discrepancy of Eq.~\ref{eq:discrepancy} is finite. For every $\theta\in\Theta$, every nonempty set of training versions $\mathcal{V}_{\mathrm{tr}}$ with $m=|\mathcal{V}_{\mathrm{tr}}|$, and every version $u\in\mathcal{V}$,}
\begin{equation}
\label{eq:coverageBound}
    \mathcal{L}_u(\theta)\;\le\;\min_{v\in\mathcal{V}_{\mathrm{tr}}}\Big\{\mathcal{L}_v(\theta)+\mathrm{disc}_\Theta(u,v)\Big\}
% ORIG: \;\le\;\frac{1}{k}\sum_{v\in\mathcal{V}_{\mathrm{tr}}}\mathcal{L}_v(\theta)\;+\;\frac{1}{k}\sum_{v\in\mathcal{V}_{\mathrm{tr}}}\mathrm{disc}_\Theta(u,v).
    \;\le\;{\frac{1}{m}}\sum_{v\in\mathcal{V}_{\mathrm{tr}}}\mathcal{L}_v(\theta)\;+\;{\frac{1}{m}}\sum_{v\in\mathcal{V}_{\mathrm{tr}}}\mathrm{disc}_\Theta(u,v).
\end{equation}
The middle expression is non-increasing in $\mathcal{V}_{\mathrm{tr}}$ with respect to set inclusion.
\end{proposition}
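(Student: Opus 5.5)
The proof is a deterministic, pointwise argument. It needs no probability and uses only Definition~\ref{def:discrepancy} together with Assumption~\ref{ass:bounded}.

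\emph{Finiteness of the discrepancy.} Under Assumption~\ref{ass:bounded}, every loss value lies in $[0,B]$. Hence each risk $\mathcal{L}_v(\theta)$ lies in $[0,B]$, and each difference $|\mathcal{L}_u(\theta)-\mathcal{L}_v(\theta)|$ is at most $B$. Since $\Theta$ is finite and nonempty, the supremum in Eq.~\ref{eq:discrepancy} is a maximum of finitely many numbers in $[0,B]$. So $\mathrm{disc}_\Theta(u,v)\in[0,B]$ is finite and well defined, and every expression in Eq.~\ref{eq:coverageBound} is a real number.

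\emph{First inequality.} Fix $\theta\in\Theta$ and an arbitrary $v\in\mathcal{V}_{\mathrm{tr}}$. Because the supremum in Eq.~\ref{eq:discrepancy} ranges over all of $\Theta$, it dominates the term at our particular $\theta$:
\[
\mathcal{L}_u(\theta)-\mathcal{L}_v(\theta)\le|\mathcal{L}_u(\theta)-\mathcal{L}_v(\theta)|\le\mathrm{disc}_\Theta(u,v).
\]
Rearranging gives $\mathcal{L}_u(\theta)\le\mathcal{L}_v(\theta)+\mathrm{disc}_\Theta(u,v)$ for every $v\in\mathcal{V}_{\mathrm{tr}}$. The set $\mathcal{V}_{\mathrm{tr}}$ is finite and nonempty, so the minimum over it exists and is attained. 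Applying the bound at a minimizing $v$ yields the left inequality.

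\emph{Second inequality.} Set $f(v)=\mathcal{L}_v(\theta)+\mathrm{disc}_\Theta(u,v)$. A minimum of finitely many reals is at most their arithmetic mean, so
\[
\min_{v\in\mathcal{V}_{\mathrm{tr}}}f(v)\le\frac1m\sum_{v\in\mathcal{V}_{\mathrm{tr}}}f(v).
\]
Splitting the sum by linearity gives the right-hand expression. Here $m=|\mathcal{V}_{\mathrm{tr}}|$ counts distinct versions, which matches summing over the set $\mathcal{V}_{\mathrm{tr}}$ itself.

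\emph{Monotonicity.} If $\mathcal{V}_{\mathrm{tr}}\subseteq\mathcal{V}'_{\mathrm{tr}}$, then the minimum of the same function $f$ over the larger set is at most its minimum over the smaller set, because every candidate in the smaller set is still available. Hence the middle expression is non-increasing under inclusion.

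\emph{Main obstacle.} There is no real obstacle; the only points requiring care are bookkeeping. First, nonemptiness of both $\Theta$ and $\mathcal{V}_{\mathrm{tr}}$ is needed for the supremum, the minimum, and the division by $m$ to make sense. Second, the averaged bound is not monotone in $\mathcal{V}_{\mathrm{tr}}$: adding a far-away version can raise the average. So the monotonicity claim must be stated only for the middle expression, exactly as the proposition does.
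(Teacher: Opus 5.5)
Your proposal is correct and matches the paper's proof step for step. The first inequality comes from the supremum in Definition~\ref{def:discrepancy} dominating the gap at the fixed $\theta$, monotonicity from minimizing over a larger finite set, and the second inequality from a minimum being at most an average; your explicit check that $\mathrm{disc}_\Theta(u,v)\in[0,B]$ under Assumption~\ref{ass:bounded} is extra bookkeeping that the paper states in the proposition rather than proving.
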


{\emph{Takeaway: the loss on an unseen version is at most the loss on the closest training version plus their discrepancy, so adding a training version can only tighten that minimum, and a training version at discrepancy zero from $u$ makes it equal to $\mathcal{L}_u(\theta)$. The right-most expression bounds the same quantity by the average risk over the training versions plus the average discrepancy to them. When the $k$ training versions are distinct, $m=k$ and that average is the population counterpart of the pooled objective $\hat{\mathcal{L}}_{\mathrm{tr}}$ that \timewarp-\textsc{BC} minimizes, and replacing one by the other costs the within-version term of Proposition~\ref{prop:versionCount}.}}

\begin{proof}
By Definition~\ref{def:discrepancy}, $\mathcal{L}_u(\theta)-\mathcal{L}_v(\theta)\le\mathrm{disc}_\Theta(u,v)$ for every $v\in\mathcal{V}_{\mathrm{tr}}$, which gives the first inequality after taking the minimum over $v$. A minimum over a finite set can only decrease when the set grows, which gives monotonicity. The second inequality holds because a minimum is at most an average.
\end{proof}

\begin{remark}[What Proposition~\ref{prop:coverage} predicts]
\label{rem:coverageBridge}
If the discrepancy grows with the amount of change between versions, the bound for a policy trained on v$_1$ alone grows with the distance of the test version from v$_1$. This is the pattern in Tab.~\ref{tab:generalization}, where Qwen-3 4B trained on v$_1$ degrades monotonically from v$_1$ to v$_5$, and it is the sense in which \S\ref{sec:Results} states that failure tracks the amount of change rather than its presence. Training on v$_{1:5}$ places every historical version at discrepancy zero from some training version and leaves only the held-out v$_6$ term. Because the discrepancy is measured through the observation modality, the same pair of versions can be close under \html\ or \axt\ yet far under \sshot\ or \som, which is consistent with the larger version sensitivity of visual agents in Tab.~\ref{tab:mainBenchmarkVisual}. The discrepancy term cannot be estimated without data from $u$, so the bound should be read as directional. Tab.~\ref{tab:versionValidation} provides observable proxies for it, such as the variation in accessibility-tree tokens and interactive elements across versions.
\end{remark}
}

\section{Experiment Details}
\label{sec:experimentDetailsAppendix}

Following \S\ref{sec:Results}, we provide additional details on the baselines selected for our benchmark (\S\ref{sec:baselineSelection}), human baseline (\S\ref{sec:humanBaseline}), training details (\S\ref{sec:trainingDetails}), checkpoints of the models used in our benchmark (\S\ref{sec:checkpoints}), and experimental settings (\S\ref{sec:AdditionalExperimentalSettings}).

\subsection{Baseline Selection}
\label{sec:baselineSelection}
We reviewed existing benchmarks to identify commonly used open-source models and found Qwen 2.5 \cite{qwen2025qwen25technicalreport} and Llama-3.1 \cite{grattafiori2024llama} to be the {most popular} ones. Based on the technical report, Qwen-3 \cite{yang2025qwen3} surpassed Qwen-2.5 across benchmarks, and also performed better in our toy experiments. We hence selected multiple variants of the Qwen-3 family models. Similarly, for Vision Language Models (VLMs), we selected the Qwen-3VL family models and also included Gemma-3 \cite{team2025gemma} due to its strong performance on agentic and visual benchmarks. We deliberately restrict the baselines to open-source models: a central part of our study is training (behavior cloning and its ablations), which requires access to model weights and is therefore not possible with closed-source frontier models. The selected models are recent and widely adopted in web-agent training pipelines.

% We looked into other benchmarks and identified what were the models used there. We mostly found researchers using the Qwen 2.5 and Llama-3.1 models as open-source models. Then we evaluated benchmarks on UI understanding and agentic capabilities of these models. We found that the latter Qwen-3 models surpass their 2.5 counterparts and adopted them for our methods. Given the wide popularity of the Llama-3.1 8B model as a web agent, we used it as a baseline for training as well. Unsurprisingly, the trained llama model far outperformed its zero-shot counterparts in our experiments, indicating that the llama-3.1 family of models is a strong baseline for training web agents.
% For VLMs, we chose the Qwen-3 family of VL models. For another family of models, we chose Gemma-3 due to stronger agentic and grounding benchmark results at the same parameter size. 
\subsection{Human Baseline}
\label{sec:humanBaseline}
We provide details of the human baseline in Tab. \ref{tab:mainBenchmarkVisual}. We evaluate human performance using a two-stage pipeline. First, participants attempt all tasks in the dataset while interacting with a \textit{single version} of the environment, randomly picked for each task. This allows us to evaluate which tasks participants find easier to solve. {Second,} participants retry a subset of tasks across all six versions of the environment, which includes all incorrectly answered tasks from the previous stage and correctly solved tasks that we consider more challenging. Participants are not informed of the correctness of their previous attempt. The final success rate of humans is based on the results from these two stages.

% Based on the results from these two stages, we report the final human success rate by aggregating across environments, website versions, and tasks.

\begin{minipage}[t]{0.48\textwidth}
    \vspace{0pt} % Aligns tops
    \subsection{Training Details}
    \label{sec:trainingDetails}
    The models were trained on a cluster of four Nvidia H200 GPUs. We use Llamafactory \cite{zheng2024llamafactory} to train our models with the hyperparameters reported in Tab \ref{tab:hyperparams}. The remaining hyperparameters are set to their default values. To improve memory efficiency, we use DeepSpeed ZeRO-3 \cite{rajbhandari2020zero} and gradient checkpointing \cite{chen2016training} during training. The sequence length or context window during training and the number of training epochs have been tuned based on the empirical results from \S\ref{sec:ContextLengthAblation}, \ref{sec:epochResults}.

\end{minipage}
\hfill
\begin{minipage}[t]{0.49\textwidth}
    \vspace{0pt} % Aligns tops
    \centering
    \captionof{table}{Training Hyperparameters of the \timewarp-\textsc{BC} models.}
    \label{tab:hyperparams}
    \vspace{-4pt}
    \small
    \begin{tabular}{ll}
    \toprule
    \textbf{Hyperparameter} & \textbf{Value} \\
    \midrule
    Attention Mechanism & Flash Attention 2 \\
    Sequence Length & 65,536 \\
    Max Samples & 10,000 \\
    Per-device Batch Size & 1 \\
    Grad. Accum. Steps & 4 \\
    Learning Rate & $1.0 \times 10^{-5}$ \\
    LR Scheduler & Cosine \\
    Warmup Ratio & 0.1 \\
    Training Epochs & 3 \\
    Precision & BF16 \\
    \bottomrule
    \end{tabular}
\end{minipage}

\subsection{Model Checkpoints}
\label{sec:checkpoints}
The checkpoints of the models used for trajectory collection, evaluation, and benchmarking are reported in Tab \ref{tab:modelCheckpoints}. 

\begin{table}[h]
\centering
\caption{Checkpoints of various models used in our benchmark.}
\label{tab:modelCheckpoints}
       \vspace{4pt}
\small
\begin{tabular}{lll}
\toprule
\textbf{Module} & \textbf{Model} & \textbf{Checkpoint (OpenAI/HuggingFace)} \\
\midrule
Judge           & GPT-5.1       & \texttt{openai/gpt-5.1-2025-11-13}                  \\
Judge (Open-Source) & Gemma-4 12B & \texttt{google/gemma-4-12b-it}                  \\
Executor        & GPT-5        & \texttt{openai/gpt-5-2025-08-07}                  \\ \midrule
\multirow{3}{*}{LLM Web Agent}   & Qwen-3 4B       & \texttt{Qwen/Qwen3-4B-Instruct-2507} \\
 & Qwen-3 4B Thinking & \texttt{Qwen/Qwen3-4B-Thinking-2507} \\
& Llama-3.1 8B & \texttt{meta-llama/Llama-3.1-8B-Instruct} \\ \midrule
\multirow{3}{*}{VLM Web Agent}   & Qwen-3 VL 8B       & \texttt{Qwen/Qwen3-VL-8B-Instruct} \\
& Qwen-3 VL 8B Thinking      & \texttt{Qwen/Qwen3-VL-8B-Thinking} \\
& Gemma-3 12B & \texttt{google/gemma-3-12b-it} \\
\bottomrule
\end{tabular}

\end{table}

\subsection{Additional Experimental Settings}
\label{sec:AdditionalExperimentalSettings}
% ORIG: We provide details to the \S\ref{sec:experimentalSetup}'s experimental setting on \timetraj\ (\S\ref{sec:timeTrajExperimentalSetting}), experiments on version generalization (\S\ref{sec:versionGeneralizationExpSetting}),
{We provide details on the experimental settings of \S\ref{sec:experimentalSetup} for \timetraj\ (\S\ref{sec:timeTrajExperimentalSetting}), version generalization (\S\ref{sec:versionGeneralizationExpSetting}),} cross-dataset learning (\S\ref{sec:datasetContinualLearningExpSetting}), version continual learning (\S\ref{sec:versionCLExpSetting}), sample count (\S\ref{sec:SampleAblationDetails}), and ablations (\S\ref{sec:atmpAblationExperimentalSetting}-\ref{sec:trainingEpochExpSetting}).

% For this section we introduce new notation.
% We denote the trajectory dataset corresponding to version $v$ as $\mathcal{D}_{\tau,v}$, and across all versions as $\mathcal{D}_{\tau,1:6}$ (or simply $\mathcal{D}_{\tau}$, as used previously). 

\subsubsection{Trajectory Collection (\timetraj)}
\label{sec:timeTrajExperimentalSetting}
The planner was implemented as an LLM council\footnote{\url{https://github.com/karpathy/llm-council}}, using GPT-5, Gemini-3 Pro, and Grok 4.1. As the plan generated by the planner is subsequently refined by humans, the choice of the planner model is not a crucial design decision but simply a human preference. The executor uses the GPT-5 model. During inference, the executor uses thinking, planning, memory, response history, and the accessibility tree as its observation mode. % ORIG: The inferences were similarly done on BrowserGym, and all the observation modes
{Inference was likewise run on BrowserGym, and all the observation modes} (HTML, accessibility tree, UI Screenshot, and Set of Marks) were collected along with the full response (action, thinking, planning, and memory tokens) of the executor agent. 

\subsubsection{Version Generalization}
\label{sec:versionGeneralizationExpSetting}
To evaluate the generalization capabilities of the web agents on {held-out} versions, we consider the performance of the zero-shot and \timewarp-\textsc{BC} agents, trained on trajectories from version v$_1$, version v$_5$, and versions v$_1$ to v$_5$, denoted as $\mathcal{D}_{\tau,1}$, $\mathcal{D}_{\tau,5}$, and $\mathcal{D}_{\tau,1:5}$, respectively. 

\subsubsection{Cross-Dataset Learning}
\label{sec:datasetContinualLearningExpSetting}
We analyze the effect of additional training samples from other datasets during fine-tuning. We first finetune our \timewarp-\textsc{BC} agents on the training samples from WebArena-Lite \cite{Liu2024VisualAgentBenchTLBG}, then continually finetune on the \timewarp\ training trajectories. The trained agents are evaluated on version v$_6$ across all environments and tasks. 

\subsubsection{Version Continual Learning}
\label{sec:versionCLExpSetting}
We assess whether our \timewarp-\textsc{BC} agents can be continually fine-tuned on new versions. We first train the models on v$_6$ training trajectories, $\mathcal{D}_{\tau,6}$, and evaluate their performance on the corresponding v$_6$ environments. We then continually finetune on v$_1$ training trajectories, $\mathcal{D}_{\tau,1}$, and re-evaluate the performance on the v$_6$ environments. The models are fully fine-tuned with $64$k training context, but for a single epoch to prevent overfitting on the smaller training dataset.

\subsubsection{Sample Count}
\label{sec:SampleAblationDetails}
%%% Finalized by Farhan
For this experiment, we consider two settings: sampling $k\%$ of the total trajectories $|\mathcal{D}_{\tau,1:6}|$ from (i) a single version v$_6$, \textit{i.e.}, $\{\tau_i\}_{i=1}^{k\%\cdot|\mathcal{D}_{\tau,1:6}|} \sim \mathcal{D}_{\tau,6}$  and (ii) from all versions v$_{1:6}$, \textit{i.e.}, $\{\tau_i\}_{i=1}^{k\%\cdot|\mathcal{D}_{\tau,1:6}|} \sim \mathcal{D}_{\tau,1:6}$. We fully fine-tune the Qwen-3 4B model's behavior cloning and \timewarp-\textsc{BC} variants with a $64$k context window for $3$ epochs. The models are evaluated on v$_6$ environments and tasks.

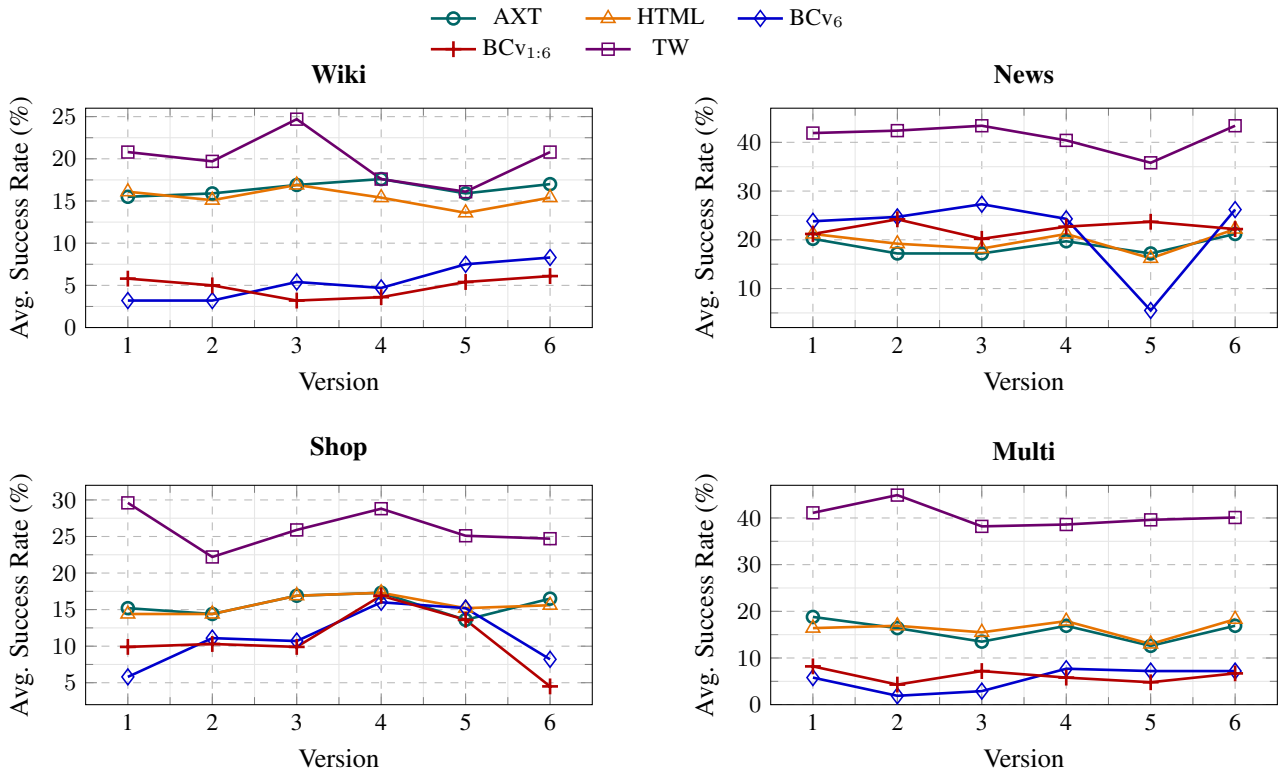
\begin{figure}[ht]
\begin{minipage}{0.99\textwidth}
\begin{center}
    {\NoHyper\ref{sharedlegend}}
\end{center}
\vspace{-0.4cm}
    \begin{tikzpicture}
\begin{axis}[
    title={\textbf{Wiki}},
    width=0.49\columnwidth,
    height=4.5cm,
    xlabel={Version},
    ylabel={Avg. Success Rate (\%)},
    ymin=0,
    ymax=26,
    ytick={0,5,10,15,20,25},
    grid=both,
    major grid style={dashed, gray!55},
    minor grid style={gray!20},
    minor tick num=1,
    symbolic x coords={1,2,3,4,5,6},
    xtick=data,
    tick label style={font=\small},
    legend style={
        at={(0.5,0.5)},
        anchor=north,
        legend columns=3,
        font=\small,
        draw=none,
        /tikz/every even column/.append style={column sep=10pt}
    },
    legend image post style={line width=1pt},
    legend to name=sharedlegend,
]
\addplot[
    color=teal!80!black,
    solid,
    line width=1pt,
    mark=o,
    mark size=2.25pt,
    mark options={solid, fill=teal!80!black}
] coordinates { (1,15.5) (2,15.9) (3,16.9) (4,17.6) (5,15.9) (6,17.0) };
\addlegendentry{AXT}

% =========================
% HTML (orange triangles)
% =========================
\addplot[
    color=orange!90!black,
    solid,
    line width=1pt,
    mark=triangle,
    mark size=3pt,
    mark options={fill=none, line width=0.6pt}
] coordinates { (1,16.1) (2,15.1) (3,16.9) (4,15.4) (5,13.6) (6,15.4) };
\addlegendentry{HTML}

% =========================
% BCv6 (blue diamonds)
% =========================
\addplot[
    color=blue!80!black,
    solid,
    line width=1pt,
    mark=diamond,
    mark size=3pt,
    mark options={fill=none, line width=0.6pt}
] coordinates { (1,3.2) (2,3.2) (3,5.4) (4,4.7) (5,7.5) (6,8.3) };
\addlegendentry{BCv$_6$}

% =========================
% BCv1:6 (red plus)
% =========================
\addplot[
    color=red!70!black,
    solid,
    line width=1pt,
    mark=+,
    mark size=3pt
] coordinates { (1,5.8) (2,5.0) (3,3.2) (4,3.6) (5,5.4) (6,6.1) };
\addlegendentry{BCv$_{1:6}$}

\addplot[
    color=cyan!70!black,
    solid,
    line width=1pt,
    mark=asterisk,
    mark size=3pt
] coordinates { (1,13.6) (2,13.3) (3,11.9) (4,13.6) (5,10.0) (6,17.6) };
\addlegendentry{BCTv$_{1:6}$}

% =========================
% TW (violet squares)
% =========================
\addplot[
    color=violet!85!black,
    solid,
    line width=1pt,
    mark=square,
    mark size=2.25pt,
    mark options={fill=none, line width=0.6pt}
] coordinates { (1,20.8) (2,19.7) (3,24.7) (4,17.6) (5,16.1) (6,20.8) };
\addlegendentry{TW}

\end{axis}
\end{tikzpicture}
\hfill
   \begin{tikzpicture}
\begin{axis}[
    title={\textbf{News}},
    width=0.49\columnwidth,
    height=4.5cm,
    xlabel={Version},
    ylabel={Avg. Success Rate (\%)},
    ymin=2,
    ymax=47,
    ytick={0,10,20,30,40},
    grid=both,
    major grid style={dashed, gray!55},
    minor grid style={gray!20},
    minor tick num=1,
    symbolic x coords={1,2,3,4,5,6},
    xtick=data,
    tick label style={font=\small},
]

% Series 1 (teal circles)
\addplot[
    color=teal!80!black,
    solid,
    line width=1pt,
    mark=o,
    mark size=2.25pt,
    mark options={solid, fill=teal!80!black}
] coordinates { (1,20.2) (2,17.2) (3,17.2) (4,19.7) (5,17.2) (6,21.2) };

% Series 2 (orange triangles)
\addplot[
    color=orange!90!black,
    solid,
    line width=1pt,
    mark=triangle,
    mark size=3pt,
    mark options={fill=none, line width=0.6pt}
] coordinates { (1,21.2) (2,19.2) (3,18.2) (4,21.2) (5,16.2) (6,22.2) };

% Series 3 (blue diamonds)
\addplot[
    color=blue!80!black,
    solid,
    line width=1pt,
    mark=diamond,
    mark size=3pt,
    mark options={fill=none, line width=0.6pt}
] coordinates { (1,23.8) (2,24.7) (3,27.3) (4,24.3) (5,5.5) (6,26.2) };

% Series 4 (red plus)
\addplot[
    color=red!70!black,
    solid,
    line width=1pt,
    mark=+,
    mark size=3pt
] coordinates { (1,21.2) (2,24.2) (3,20.2) (4,22.7) (5,23.7) (6,22.2) };

\addplot[
    color=cyan!70!black,
    solid,
    line width=1pt,
    mark=asterisk,
    mark size=3pt
] coordinates { (1,31.3) (2,31.3) (3,31.3) (4,29.8) (5,34.9) (6,31.3) };

% Series 5 (violet squares)
\addplot[
    color=violet!85!black,
    solid,
    line width=1pt,
    mark=square,
    mark size=2.25pt,
    mark options={fill=none, line width=0.6pt}
] coordinates { (1,41.9) (2,42.4) (3,43.4) (4,40.4) (5,35.8) (6,43.4) };

\end{axis}
\end{tikzpicture}

\vspace{1em}
    \begin{tikzpicture}
\begin{axis}[
    title={\textbf{Shop}},
    width=0.49\columnwidth,
    height=4.5cm,
    xlabel={Version},
    ylabel={Avg. Success Rate (\%)},
    ymin=2,
    ymax=32,
    ytick={5,10,15,20,25,30},
    grid=both,
    major grid style={dashed, gray!55},
    minor grid style={gray!20},
    minor tick num=1,
    symbolic x coords={1,2,3,4,5,6},
    xtick=data,
    tick label style={font=\small},
]

% Series 1 (teal circles)
\addplot[
    color=teal!80!black,
    solid,
    line width=1pt,
    mark=o,
    mark size=2.25pt,
    mark options={solid, fill=teal!80!black}
] coordinates { (1,15.2) (2,14.4) (3,16.9) (4,17.3) (5,13.6) (6,16.5) };

% Series 2 (orange triangles)
\addplot[
    color=orange!90!black,
    solid,
    line width=1pt,
    mark=triangle,
    mark size=3pt,
    mark options={fill=none, line width=0.6pt}
] coordinates { (1,14.4) (2,14.4) (3,16.9) (4,17.3) (5,15.2) (6,15.6) };

% Series 3 (blue diamonds)
\addplot[
    color=blue!80!black,
    solid,
    line width=1pt,
    mark=diamond,
    mark size=3pt,
    mark options={fill=none, line width=0.6pt}
] coordinates { (1,5.8) (2,11.1) (3,10.7) (4,16.0) (5,15.2) (6,8.2) };

% Series 4 (red plus)
\addplot[
    color=red!70!black,
    solid,
    line width=1pt,
    mark=+,
    mark size=3pt
] coordinates { (1,9.9) (2,10.3) (3,9.9) (4,16.9) (5,13.6) (6,4.5) };

\addplot[
    color=cyan!70!black,
    solid,
    line width=1pt,
    mark=asterisk,
    mark size=3pt
] coordinates { (1,24.3) (2,27.6) (3,25.5) (4,26.3) (5,28.0) (6,23.9) };

% Series 5 (violet squares)
\addplot[
    color=violet!85!black,
    solid,
    line width=1pt,
    mark=square,
    mark size=2.25pt,
    mark options={fill=none, line width=0.6pt}
] coordinates { (1,29.6) (2,22.2) (3,25.9) (4,28.8) (5,25.1) (6,24.7) };
\end{axis}
\end{tikzpicture}
\hfill
    \begin{tikzpicture}
\begin{axis}[
    title={\textbf{Multi}},
    width=0.49\columnwidth,
    height=4.5cm,
    xlabel={Version},
    ylabel={Avg. Success Rate (\%)},
    ymin=0,
    ymax=47,
    ytick={0,10,20,30,40},
    grid=both,
    major grid style={dashed, gray!55},
    minor grid style={gray!20},
    minor tick num=1,
    symbolic x coords={1,2,3,4,5,6},
    xtick=data,
    tick label style={font=\small},
]

% Series 1 (teal circles)
\addplot[
    color=teal!80!black,
    solid,
    line width=1pt,
    mark=o,
    mark size=2.25pt,
    mark options={solid, fill=teal!80!black}
] coordinates { (1,18.8) (2,16.4) (3,13.5) (4,16.9) (5,12.6) (6,16.9) };

% Series 2 (orange triangles)
\addplot[
    color=orange!90!black,
    solid,
    line width=1pt,
    mark=triangle,
    mark size=3pt,
    mark options={fill=none, line width=0.6pt}
] coordinates { (1,16.4) (2,16.9) (3,15.5) (4,17.9) (5,13.0) (6,18.3) };

% Series 3 (blue diamonds)
\addplot[
    color=blue!80!black,
    solid,
    line width=1pt,
    mark=diamond,
    mark size=3pt,
    mark options={fill=none, line width=0.6pt}
] coordinates { (1,5.8) (2,1.9) (3,2.9) (4,7.7) (5,7.2) (6,7.2) };

% Series 4 (red plus)
\addplot[
    color=red!70!black,
    solid,
    line width=1pt,
    mark=+,
    mark size=3pt
] coordinates { (1,8.2) (2,4.3) (3,7.2) (4,5.8) (5,4.8) (6,6.7) };

\addplot[
    color=cyan!70!black,
    solid,
    line width=1pt,
    mark=asterisk,
    mark size=3pt
] coordinates { (1,30.4) (2,36.2) (3,34.8) (4,33.8) (5,29.4) (6,32.4) };

% Series 5 (violet squares)
\addplot[
    color=violet!85!black,
    solid,
    line width=1pt,
    mark=square,
    mark size=2.25pt,
    mark options={fill=none, line width=0.6pt}
] coordinates { (1,41.1) (2,44.9) (3,38.2) (4,38.6) (5,39.6) (6,40.1) };

\end{axis}
\end{tikzpicture}
\end{minipage}
\caption{Average success rate (\%) of the text models (Qwen-3 4B, Qwen-3 4B Thinking, and Llama-3.1 8B) across different versions, environments, observations: \html, and Accessibility Tree (\axt), and training settings: Behavior Cloning (\bc) on v$_1$ and v$_{1:6}$, Behavior Cloning using Thinking traces on v$_{1:6}$ (\bct), and \timewarp-\textsc{BC}(\tw). }
\label{fig:envPerformanceTextModels}
\end{figure}
% \vspace{-0.5cm}

\subsubsection{Training Token Ablation}
\label{sec:atmpAblationExperimentalSetting}
%% Wriute this formally

We analyze the contribution of different types of tokens in the agent's response and ablate across combinations of the non-action tokens, \textit{i.e.}, thinking, memory, and planning. Following \S\ref{sec:teacherRolloutsMethod}, we denote the agent's response as a 4-tuple $y=\langle a,c,p,m\rangle$, where $a$, $c$, $p$, and $m$ {represent} the action, chain-of-thought reasoning, planning, and memory tokens, respectively. We consider the modified response $y'$ across eight settings, each a combination of the non-action tokens. % ORIG: In the settings where the full-response $y$ isn't used, a parser will extract the modified response
{In the settings where the full response $y$ is not used, a parser extracts the modified response}, \textit{i.e.}, $y'=\phi(y)$, and the agent is trained using the behavior cloning loss on the modified response, following Eq.~\ref{eq:timewarpBC}:

\begin{equation}
\label{eq:timewarpBCATMP}
    \mathcal{L}_{\text{BC}}(\theta)
    = - \mathbb{E}_{(h,\phi(y)) \sim D} \big[ \log \pi_\theta(y' \mid h) \big],
\end{equation}
where $y'\in\{\langle a\rangle,\langle a,c\rangle,\langle a,p\rangle,\langle a,m\rangle,\langle a,c,p\rangle,\langle a,c,m\rangle,\langle a,p,m\rangle\}$. The models are fully finetuned with $64$k training context and evaluated on v$_6$ across all environments and tasks.

\subsubsection{Training Context Window Length Ablation}
\label{sec:trainingContextExpSetting}
To analyze the impact of the training context window length on the model's performance, we fully fine-tune our best-performing agent, Qwen-3 4B, using \timewarp-\textsc{BC} for $3$ epochs % [REFINE: story] The listed values start at 4096, so the range is 4k to 64k, not 8k.
% ORIG: by varying the \texttt{Sequence Length} hyperparameter in LlamaFactory from 8k to 64k.
{by varying the \texttt{Sequence Length} hyperparameter in LlamaFactory from 4k to 64k.} The hyperparameter is tuned in multiples of $2$, specifically, these values: $4096$, $8192$, $16384$, $32768$, and $65536$, corresponding to $4$k, $8$k, $16$k, $32$k, and $64$k, respectively. The evaluation is performed on v$_6$ environments and tasks.

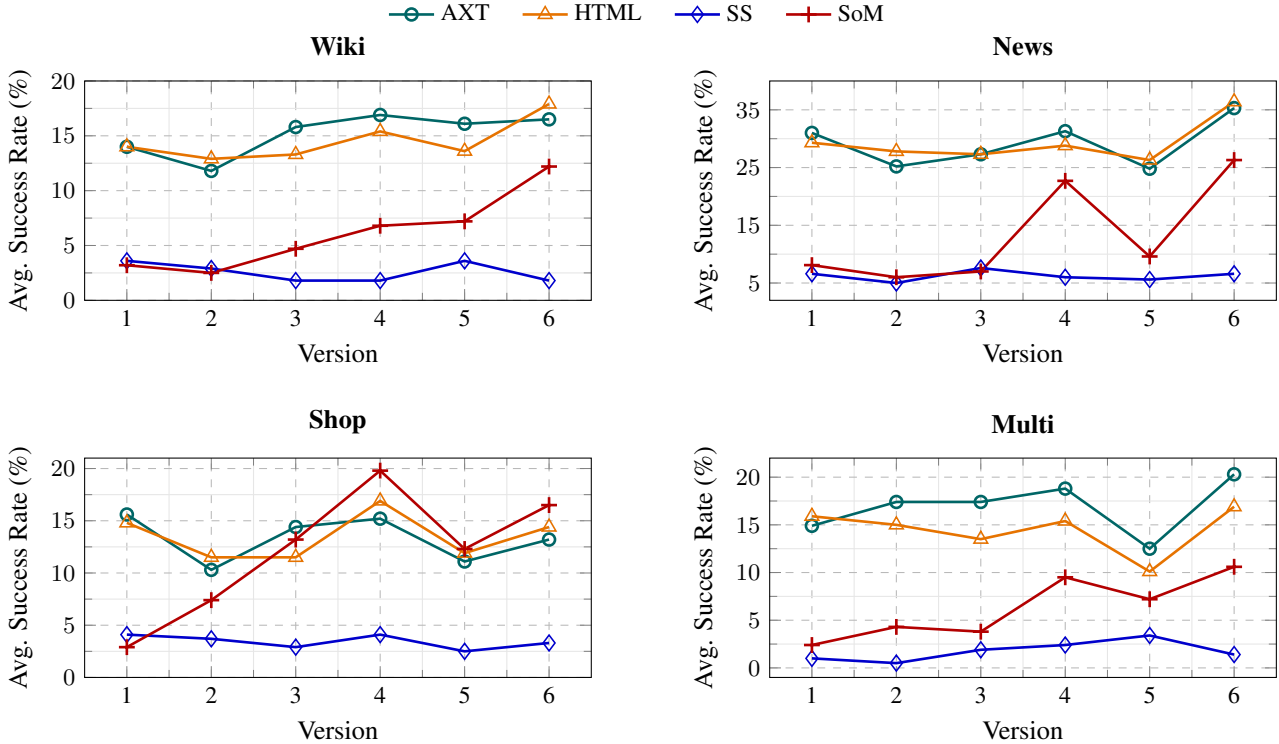
\begin{figure}[ht]
\begin{minipage}{0.99\textwidth}
\begin{center}
    {\NoHyper\ref{sharedlegend2}}
\end{center}
\vspace{-0.3cm}
    \begin{tikzpicture}
\begin{axis}[
    title={\textbf{Wiki}},
    width=0.49\columnwidth,
    height=4.5cm,
    xlabel={Version},
    ylabel={Avg. Success Rate (\%)},
    ymin=0,
    ymax=20,
    ytick={0,5,10,15,20},
    grid=both,
    major grid style={dashed, gray!55},
    minor grid style={gray!20},
    minor tick num=1,
    symbolic x coords={1,2,3,4,5,6},
    xtick=data,
    tick label style={font=\small},
    legend style={
        at={(0.4,0.5)},
        anchor=north,
        legend columns=4,
        font=\small,
        draw=none,
        /tikz/every even column/.append style={column sep=10pt}
    },
    legend image post style={line width=1pt},
    legend to name=sharedlegend2,
]

% =========================
% AXT (teal circles)
% =========================
\addplot[
    color=teal!80!black,
    solid,
    line width=1pt,
    mark=o,
    mark size=2.25pt,
    mark options={solid, fill=teal!80!black}
] coordinates { (1,14.0) (2,11.8) (3,15.8) (4,16.9) (5,16.1) (6,16.5) };
\addlegendentry{AXT}

% =========================
% HTML (orange triangles)
% =========================
\addplot[
    color=orange!90!black,
    solid,
    line width=1pt,
    mark=triangle,
    mark size=3pt,
    mark options={fill=none, line width=0.6pt}
] coordinates { (1,14.0) (2,12.9) (3,13.3) (4,15.4) (5,13.6) (6,17.9) };
\addlegendentry{HTML}

% =========================
% SS (blue diamonds)
% =========================
\addplot[
    color=blue!80!black,
    solid,
    line width=1pt,
    mark=diamond,
    mark size=3pt,
    mark options={fill=none, line width=0.6pt}
] coordinates { (1,3.6) (2,2.9) (3,1.8) (4,1.8) (5,3.6) (6,1.8) };
\addlegendentry{SS}

% =========================
% SoM (red plus)
% =========================
\addplot[
    color=red!70!black,
    solid,
    line width=1pt,
    mark=+,
    mark size=3pt
] coordinates { (1,3.2) (2,2.5) (3,4.7) (4,6.8) (5,7.2) (6,12.2) };
\addlegendentry{SoM}

\end{axis}
\end{tikzpicture}
\hfill
\begin{tikzpicture}
\begin{axis}[
    title={\textbf{News}},
    width=0.49\columnwidth,
    height=4.5cm,
    xlabel={Version},
    ylabel={Avg. Success Rate (\%)},
    ymin=2,
    ymax=40,
    ytick={5, 15, 25, 35},
    grid=both,
    major grid style={dashed, gray!55},
    minor grid style={gray!20},
    minor tick num=1,
    symbolic x coords={1,2,3,4,5,6},
    xtick=data,
    tick label style={font=\small},
]

% =========================
% AXT (teal circles)
% =========================
\addplot[
    color=teal!80!black,
    solid,
    line width=1pt,
    mark=o,
    mark size=2.25pt,
    mark options={solid, fill=teal!80!black}
] coordinates { (1,31.0) (2,25.2) (3,27.3) (4,31.3) (5,24.8) (6,35.3) };

% =========================
% HTML (orange triangles)
% =========================
\addplot[
    color=orange!90!black,
    solid,
    line width=1pt,
    mark=triangle,
    mark size=3pt,
    mark options={fill=none, line width=0.6pt}
] coordinates { (1,29.3) (2,27.8) (3,27.3) (4,28.8) (5,26.3) (6,36.4) };

% =========================
% SS (blue diamonds)
% =========================
\addplot[
    color=blue!80!black,
    solid,
    line width=1pt,
    mark=diamond,
    mark size=3pt,
    mark options={fill=none, line width=0.6pt}
] coordinates { (1,6.6) (2,5.0) (3,7.6) (4,6.0) (5,5.6) (6,6.6) };

% =========================
% SoM (red plus)
% =========================
\addplot[
    color=red!70!black,
    solid,
    line width=1pt,
    mark=+,
    mark size=3pt
] coordinates { (1,8.1) (2,6.0) (3,7.0) (4,22.7) (5,9.6) (6,26.3) };

\end{axis}
\end{tikzpicture}

\vspace{1em}
\begin{tikzpicture}
\begin{axis}[
    title={\textbf{Shop}},
    width=0.49\columnwidth,
    height=4.5cm,
    xlabel={Version},
    ylabel={Avg. Success Rate (\%)},
    ymin=0,
    ymax=21,
    ytick={0,5,10,15,20},
    grid=both,
    major grid style={dashed, gray!55},
    minor grid style={gray!20},
    minor tick num=1,
    symbolic x coords={1,2,3,4,5,6},
    xtick=data,
    tick label style={font=\small},
]

% =========================
% AXT (teal circles)
% =========================
\addplot[
    color=teal!80!black,
    solid,
    line width=1pt,
    mark=o,
    mark size=2.25pt,
    mark options={solid, fill=teal!80!black}
] coordinates { (1,15.6) (2,10.3) (3,14.4) (4,15.2) (5,11.1) (6,13.2) };

% =========================
% HTML (orange triangles)
% =========================
\addplot[
    color=orange!90!black,
    solid,
    line width=1pt,
    mark=triangle,
    mark size=3pt,
    mark options={fill=none, line width=0.6pt}
] coordinates { (1,14.8) (2,11.5) (3,11.5) (4,16.9) (5,11.9) (6,14.4) };

% =========================
% SS (blue diamonds)
% =========================
\addplot[
    color=blue!80!black,
    solid,
    line width=1pt,
    mark=diamond,
    mark size=3pt,
    mark options={fill=none, line width=0.6pt}
] coordinates { (1,4.1) (2,3.7) (3,2.9) (4,4.1) (5,2.5) (6,3.3) };

% =========================
% SoM (red plus)
% =========================
\addplot[
    color=red!70!black,
    solid,
    line width=1pt,
    mark=+,
    mark size=3pt
] coordinates { (1,2.9) (2,7.4) (3,13.2) (4,19.8) (5,12.3) (6,16.5) };

\end{axis}
\end{tikzpicture}
\hfill
\begin{tikzpicture}
\begin{axis}[
    title={\textbf{Multi}},
    width=0.49\columnwidth,
    height=4.5cm,
    xlabel={Version},
    ylabel={Avg. Success Rate (\%)},
    ymin=-1,
    ymax=22,
    ytick={0,5,10,15,20},
    grid=both,
    major grid style={dashed, gray!55},
    minor grid style={gray!20},
    minor tick num=1,
    symbolic x coords={1,2,3,4,5,6},
    xtick=data,
    tick label style={font=\small},
]

% =========================
% AXT (teal circles)
% =========================
\addplot[
    color=teal!80!black,
    solid,
    line width=1pt,
    mark=o,
    mark size=2.25pt,
    mark options={solid, fill=teal!80!black}
] coordinates { (1,14.9) (2,17.4) (3,17.4) (4,18.8) (5,12.5) (6,20.3) };

% =========================
% HTML (orange triangles)
% =========================
\addplot[
    color=orange!90!black,
    solid,
    line width=1pt,
    mark=triangle,
    mark size=3pt,
    mark options={fill=none, line width=0.6pt}
] coordinates { (1,15.9) (2,15.0) (3,13.5) (4,15.4) (5,10.1) (6,16.9) };

% =========================
% SS (blue diamonds)
% =========================
\addplot[
    color=blue!80!black,
    solid,
    line width=1pt,
    mark=diamond,
    mark size=3pt,
    mark options={fill=none, line width=0.6pt}
] coordinates { (1,1.0) (2,0.5) (3,1.9) (4,2.4) (5,3.4) (6,1.4) };

% =========================
% SoM (red plus)
% =========================
\addplot[
    color=red!70!black,
    solid,
    line width=1pt,
    mark=+,
    mark size=3pt
] coordinates { (1,2.4) (2,4.3) (3,3.8) (4,9.5) (5,7.2) (6,10.6) };

\end{axis}
\end{tikzpicture}

\end{minipage}
\caption{Average success rate (\%) of the vision-language models {(Qwen-3 VL 8B, Qwen-3 VL 8B Thinking, and Gemma-3 12B)} across different versions, environments, and observations: \html, Accessibility Tree (\axt), Screenshots (\sshot), and Set-of-Marks (\som). }
\label{fig:envPerformanceVisionModels}
\end{figure}

\subsubsection{LoRA Ablation}
\label{sec:loraExpSetting}
We investigate the performance of training models using LoRA \cite{hu2022lora} and verify whether it can provide benefits over full finetuning. We train the Qwen 3 4B model with a $64$k training context window using \timewarp-\textsc{BC} for varying adapter ranks and epochs. The agents are similarly evaluated on v$_6$ across all environments and tasks.

\subsubsection{Training Epoch Ablation}
\label{sec:trainingEpochExpSetting}
While many of the previous ablations have been conducted at varying epochs (\S\ref{sec:trainingContextExpSetting},\ref{sec:loraExpSetting}), in this section, we further assess the performance of our best model, Qwen-3 4B, fully fine-tuned on the full response using $64$k training context. The model is trained for {$1$ to $7$} epochs and also evaluated on v$_6$ across all environments and tasks.

% We denote the trajectory dataset for a version $v$ as $D_{\tau,v}$ and the full trajectory dataset as $D_{\tau,1:6}$ or simply $D_{\tau}$ as seen previously in the paper. For the experiment of the success rate against the number of samples, we sample $k\%$ of the original trajectory size \textit{i.e.}, $k\%$ of $|D_{\tau}|$ from the full dataset and a single version v$_6$, \textit{i.e.} , where $v=6$ for the first setting and $v=1:6$ for the second. We then fully fine-tune the Qwen-3 4B model in the standard setting, \textit{i.e.}, using $64k$ context for $3$ epochs.

%%%%%%%%% REVISE FROM HERE!!!!!!

\section{Additional Result Analysis}
\label{sec:additionalResultAnalysis}
We extend the discussion in \S\ref{sec:Results} by providing additional discussion on the flip rate (\S\ref{sec:flipRate}), the performance of the agents across the environments (\S\ref{sec:performanceEnvironment}), efficiency of the models (\S\ref{sec:modelEfficiency}), and ablations on training tokens (\S\ref{sec:trainingToken}), LoRA finetuning (\S\ref{sec:LoRAResults}), training context length (\S\ref{sec:ContextLengthAblation}), and number of training epochs (\S\ref{sec:epochResults}).

\subsection{Flip Rate: Isolating Version-Induced Failures}
\label{sec:flipRate}
While the performance range $\Delta$v captures how much success rates vary across versions, it does not separate version-induced failures from goal difficulty. We quantify this with the \textit{flip rate} (FR): the fraction of goals that an agent solves in at least one version but fails in at least one other. Since a goal solvable in one version should, in principle, be solvable in every other version, a flipped goal indicates that the version, rather than the goal, introduced the failure. We also define the per-version flip rate (FR$_{\text{v}}$) as the fraction of goals the agent solves in some other version but fails in version v. For both metrics, lower values are better, indicating that outcomes stay consistent across versions.

Table~\ref{tab:flipRate} reports the flip rates for the Qwen-3 4B agent. \timewarp-BC training lowers the flip rate overall (0.753 to 0.638) and in five of the six versions, i.e., multi-version training makes outcomes more consistent across versions rather than merely raising average success. For the zero-shot AXT baseline, flips concentrate on v$_1$ and v$_5$: v$_1$ lacks quality-of-life features such as a table of contents and standard search placement (Figs.~\ref{fig:tableofcontent}, \ref{fig:searchDifferences}), while v$_5$ introduces modern complexity and bloat such as icon-gated search and pop-up advertisements (\S\ref{sec:variationsVersions}, Fig.~\ref{fig:popUpAdsErrorQualAnalysis}).

\begin{table}[ht]
\centering
\caption{\textbf{Flip rates of the Qwen-3 4B agent.} FR is the fraction of goals solved in at least one version but failed in at least one other. FR$_{\text{v}}$ is the fraction of goals solved in some other version but failed in version v. Lower is better (best per column in bold). \timewarp-BC training (TW) lowers the flip rate overall and in five of the six versions relative to the zero-shot AXT baseline.}
\label{tab:flipRate}
\vspace{4pt}
\small
\setlength{\tabcolsep}{5pt}
\begin{tabular}{lccccccc}
\toprule
\textbf{Model} & \textbf{FR} & \textbf{FR$_{\text{v}_1}$} & \textbf{FR$_{\text{v}_2}$} & \textbf{FR$_{\text{v}_3}$} & \textbf{FR$_{\text{v}_4}$} & \textbf{FR$_{\text{v}_5}$} & \textbf{FR$_{\text{v}_6}$} \\
\midrule
Qwen-3 4B (AXT, zero-shot) & 0.753 & 0.459 & 0.435 & 0.412 & \textbf{0.412} & 0.494 & 0.365 \\
Qwen-3 4B (TW) & \textbf{0.638} & \textbf{0.362} & \textbf{0.406} & \textbf{0.304} & 0.420 & \textbf{0.333} & \textbf{0.333} \\
\bottomrule
\end{tabular}
\end{table}

\subsection{Performance across Environments}
\label{sec:performanceEnvironment}
In this section, we provide a detailed breakdown of agent performance by environment category. 
% [REFINE: plain] Two "respectively" in one sentence.
% ORIG: We extend performance averages across environments and versions from Tab \ref{tab:mainBenchmarkTextual} and Tab \ref{tab:mainBenchmarkVisual} in \S\ref{sec:textEnvVersionsResults} and \S\ref{sec:vlmEnvVersionResults}, respectively, by providing a version-wise breakdown for each environment category aggregated by the textual and visual models, respectively.
{We extend the averages in Tab.~\ref{tab:mainBenchmarkTextual} and Tab.~\ref{tab:mainBenchmarkVisual} with a version-wise breakdown per environment category, aggregated over the textual models (\S\ref{sec:textEnvVersionsResults}) and the visual models (\S\ref{sec:vlmEnvVersionResults}).}

% The Qwen-3 4B and Llama-3.1 8B \tw agents produces strong gain for across the versions. For Llama-3.1, the gains are consistent across versions, while the Qwen-3 4B model achieves more signinifcant gains in the multi-site tasks. The \bc models also achived noticeable uplifts in the News tasks for the Qwen-3 4B model. Training generally degraded the performance for the Qwen-3 4B Thinking model with minor improvements in the News and Multi-site tasks. The performance of the visual models remain somewhat consistent across the environments when using the textual observations (\html and \axt) but varies more when using the visual observation (\sshot and \som).

\begin{figure}[ht]
    \centering
\begin{minipage}{0.53\textwidth}
    \begin{tikzpicture}
\begin{axis}[
    width=0.77\textwidth, height=8cm,
    xlabel={Version},
    ylabel={Model},
    title={\textbf{Success Rate (\%) (Bubble Size) and}\\
    \textbf{Trajectory Length $|\tau|$ across Versions}},
    title style={align=center,at={(0.475,1.00)}, anchor=south},
    xmin=0.5, xmax=6.5,
    ymin=0.5, ymax=6.5,
    xtick={1,2,3,4,5,6},
    ytick={1,2,3,4,5,6},
    yticklabels={
        \html, 
        \axt, 
        \colorbox{red!15}{BCv$_6$}, 
        \colorbox{red!15}{BCv$_{1:6}$}, 
        \bct,
        \tw
    },
    scatter,
    only marks,
    % Colorbar settings for trajectory
    colorbar,
    colorbar style={
        xlabel={$|\tau|$}, % Changed title to xlabel
        xlabel style={at={(0.5, -0.02)}, anchor=north}, % Positioned at the bottom
        at={(1.05,0.5)}, 
        anchor=west,
    },
    colormap name=viridis,
    visualization depends on={value \thisrow{perf} \as \perfsiz},
    scatter/use mapped color={draw=black, fill=mapped color},
    scatter/@pre marker code/.append style={
        /tikz/mark size=\perfsiz*0.27 
    },
]

\addplot[
    scatter,
    point meta=\thisrow{traj}, 
] table [x=x, y=y] {
x y perf traj
1 1 16.8 7.398
2 1 16.2 7.828
3 1 16.8 7.582
4 1 17.7 6.958
5 1 14.4 6.997
6 1 17.6 6.932
1 2 17.2 6.841
2 2 15.9 7.608
3 2 16.2 6.359
4 2 17.8 7.188
5 2 14.8 6.358
6 2 17.7 6.843
1 3 8.9 14.848
2 3 9.6 16.812
3 3 10.9 14.356
4 3 12.5 12.912
5 3 9.1 16.550
6 3 11.9 16.052
1 4 10.7 14.450
2 4 10.3 16.508
3 4 9.5 16.660
4 4 11.6 15.686
5 4 11.3 14.023
6 4 9.3 17.566
1 5 23.0 13.045
2 5 27.2 13.285
3 5 25.3 13.081
4 5 25.5 13.252
5 5 30.4 12.333
6 5 32.7 13.806
1 6 32.1 9.078
2 6 30.9 10.463
3 6 32.0 9.243
4 6 30.1 8.987
5 6 27.9 9.938
6 6 31.0 9.680
};

\end{axis}
\end{tikzpicture}
    \caption*{(a)}
\end{minipage}
\begin{minipage}{0.46\textwidth}
    \begin{tikzpicture}
\begin{axis}[
    title={\textbf{Success Rate (\%) (Bubble Size) and}\\
    \textbf{Trajectory Length $|\tau|$ across Env}},
    title style={align=center,at={(0.45,1.00)}, anchor=south},
    width=0.78\textwidth, height=8cm,
    xlabel={Environment},
    ylabel={Model},
    xmin=0.5, xmax=4.5,
    ymin=0.5, ymax=6.5,
    xtick={1,2,3,4},
    xticklabels={Wiki, News, Shop, Multi},
    ytick={1,2,3,4,5,6},
    yticklabels={
        \html, 
        \axt, 
        \colorbox{red!10}{BCv$_6$}, 
        \colorbox{red!10}{BCv$_{1:6}$}, 
        \bct,
        \tw
    },
    scatter,
    only marks,
    colorbar,
    colorbar style={
        xlabel={$|\tau|$}, % Changed title to xlabel
        xlabel style={at={(0.5, -0.02)}, anchor=north}, % Positioned at the bottom
        at={(1.05,0.5)}, 
        anchor=west,
    },
    colormap name=viridis,
    visualization depends on={value \thisrow{perf} \as \perfsiz},
    scatter/use mapped color={draw=black, fill=mapped color},
    scatter/@pre marker code/.append style={
        /tikz/mark size=\perfsiz*0.27 
    },
]

\addplot[
    scatter,
    point meta=\thisrow{traj}, 
] table [x=x, y=y] {
x y perf traj
1 1 15.4 9.24
2 1 19.7 6.02
3 1 15.6 5.65
4 1 16.3 7.77
1 2 16.5 8.61
2 2 18.8 6.08
3 2 15.6 5.21
4 2 15.9 7.21
1 3 5.4 17.01
2 3 22.0 10.15
3 3 11.2 14.46
4 3 5.5 18.71
1 4 4.9 17.45
2 4 22.4 10.93
3 4 10.8 14.72
4 4 6.2 19.58
1 5 14.2 13.22
2 5 36.6 9.43
3 5 25.5 14.49
4 5 38.4 14.96
1 6 20.0 9.03
2 6 41.2 6.99
3 6 26.1 10.08
4 6 40.4 12.14
};
\end{axis}
\end{tikzpicture}
    \caption*{(b)}
\end{minipage}
\caption{Performance of the Qwen-3 4B model (success rate (\%) corresponds to the bubble size) against the trajectory length $|\tau|$, varying observation and training settings: HTML (\html), Accessibility Tree (\axt), Behavior Cloning (\bc) on v$_6$ and v$_{1:6}$, reasoning traces on v$_{1:6}$ (\bct), and \textsc{TimeWarp-BC} (\tw), averaged across (a) versions and (b) environments.}
\label{fig:trajectoryPerformanceDiagram}

\end{figure}

\subsubsection{Textual Models across Env Versions}
\label{sec:textEnvVersionsResults}
Fig.~\ref{fig:envPerformanceTextModels} presents a version-wise breakdown of performance across environments, averaged over the textual models: Qwen-3 4B, Llama-3.1 8B, and Qwen-3 4B Thinking. The \tw\ method consistently outperforms the other approaches across both environments and versions. Performance across versions is also relatively stable, indicating high robustness. {In contrast,} \bc\ shows significant variations across versions, \textit{e.g.}, \bc v$_6$ in News v$_5$ gets only a $\sim5\%$ success rate, compared to $>20\%$ in other versions. Similar inconsistencies can be observed in the Shop environment.

\subsubsection{Vision-Language Models across Env Versions}
\label{sec:vlmEnvVersionResults}
Fig. \ref{fig:envPerformanceVisionModels} provides a similar breakdown for the % ORIG: Vision Language Models (VLMs): Qwen-3 8B, Qwen-3 8B Thinking, and Gemma-3 12B.
{Vision Language Models (VLMs): Qwen-3 VL 8B, Qwen-3 VL 8B Thinking, and Gemma-3 12B.} The VLMs appear more robust across versions when using the textual observations (\html\ and \axt). For visual observations, the performance using \sshot\ remains relatively stable across versions but is generally outperformed by \som. However, \som\ shows substantial variations across versions \textit{e.g.}, a success rate of $<3\%$ in Shop v$_1$ \textit{vs.} $\sim20\%$ in Shop v$_4$. This inconsistency is observed across all environments, making \som\ unsuitable for deployment.

\subsection{Efficiency of Models}
\label{sec:modelEfficiency}
\textit{Is there a correlation between performance and trajectory length?} For the Qwen-3 4B model, higher performance tends to {coincide with} shorter trajectories across versions (Fig. \ref{fig:trajectoryPerformanceDiagram}(a)) and environments (Fig. \ref{fig:trajectoryPerformanceDiagram}(b)). This is consistent with previous research findings, where models get stuck in the middle \cite{qi2024webrl} and agents {fail} slowly \cite{yoran2407assistantbench, Song2025BEARCUBSABU}. When agents are unable to make meaningful progress toward solving the task, they tend to take redundant or irrelevant actions until the maximum step limit of $30$ is reached. The trajectory length also varies significantly across environments more than versions, \textit{e.g.}, in Fig. \ref{fig:trajectoryPerformanceDiagram}(b), % [REFINE: symbol] Category is "Multi" in Fig.~\ref{fig:trajectoryPerformanceDiagram}(b); "\tw performance" also lost its space.
% ORIG: while the \tw performance in News and Multi shop is similar, the multi-shop tasks require substantially more steps than the news tasks.
{while the \tw\ performance in News and Multi is similar, the multi-site tasks require substantially more steps than the news tasks.}

\begin{figure}[ht]
    \centering
    \subfigure[\textbf{LoRA Ablation.} LoRA significantly underperforms full fine-tuning (FFT) at adapter ranks $r \in \{4,8,16\}$ for the Qwen-3 4B model trained with a $64$k context window, across varying epochs on the v$_6$ environments.]{
        \begin{tikzpicture}
\begin{axis}[
    width=0.48\columnwidth,
    height=5.25cm,
    xlabel={Epoch},
    ylabel={Success Rate (\%)},
    ymin=0,
    ymax=43,
    ytick={0,10,20,30,40},
    grid=both,
    major grid style={dashed, gray!55},
    minor grid style={gray!20},
    minor tick num=1, % adds one minor grid between majors
    symbolic x coords={1,3,5,7},
    xtick=data,
    tick label style={font=\small},
    legend style={
        at={(0.4,.0)},
        anchor=south,
        legend columns=4,
        font=\small,
        line width=0.5pt    
    },
    line width=1pt,
    axis line style={line width=0.5pt},
    mark options={solid},
]
% 4
\addplot[
    color=teal!90,
    mark=*,
    solid,
    line width=1.25pt,
    mark options={fill=white}
] coordinates {
    (1,17.48)
    (3,22.00)
    (5,19.43)
    (7,21.36)
};

% 8
\addplot[
    color=blue!70,
    mark=square*,
    solid,
    line width=1.25pt,
    mark options={fill=white}
] coordinates {
    (1,15.21)
    (3,21.03)
    (5,20.39)
    (7,17.46)
};

% 16
\addplot[
    color=orange!70,
    mark=triangle*,
    solid,
    line width=1.25pt,
    mark size=2.5pt,
    mark options={fill=white}
] coordinates {
    (1,22.00)
    (3,20.71)
    (5,17.16)
    (7,14.88)
};

% FFT (highlighted)
\addplot[
    color=purple!95!black,
    line width=1.5pt,
    dashed,
    mark=star,
    mark size=3.2pt
] coordinates {
    (1,31.38)
    (3,38.54)
    (5,30.10)
    (7,30.75)
};

\legend{$r=4$, $r=8$, $r=16$, FFT}

\end{axis}
\end{tikzpicture}
    }
    \hfill
    \subfigure[Average trajectory length of the Qwen-3 4B model on the v$_6$ environment when trained using (i) full fine-tuning (FFT) with context lengths in $\{4,8,16,32,64\}$k and (ii) LoRA adapters with rank $r\in\{4,8,16\}$ and a $64$k context window. The trendlines show LoRA producing shorter average trajectories than FFT.]{
        \begin{tikzpicture}
\begin{axis}[
    width=0.48\columnwidth,
    height=5cm,
    xlabel={Epoch},
    ylabel={Avg. Trajectory Length},
    ymin=0,
    ymax=16,
    ytick={0,3,6,9,12,15},
    grid=both,
    major grid style={dashed, gray!55},
    minor grid style={gray!20},
    minor tick num=1,
    symbolic x coords={1,3,5,7},
    xtick=data,
    tick label style={font=\small},
    legend style={
        at={(0.5,1.18)},
        anchor=north,
        legend columns=4,
        font=\small,
        draw=none,
        /tikz/every even column/.append style={column sep=5pt}
    }
]

% -------------------------
% LoRA points (individual)
% -------------------------
\addplot[
    only marks,
    color=teal!60!black,
    mark=o,
    mark size=2.2pt,
    line width=0.6pt,
    xshift=-3pt
] coordinates { (1,3.94) (3,4.07) (5,4.11) (7,4.52) };
\addlegendentry{LoRA}

\addplot[forget plot, only marks, color=teal!60!black, mark=o, mark size=2.2pt, line width=0.6pt, xshift=0pt] 
coordinates { (1,4.07) (3,3.83) (5,3.97) (7,4.34) };

\addplot[forget plot, only marks, color=teal!60!black, mark=o, mark size=2.2pt, line width=0.6pt, xshift=3pt] 
coordinates { (1,3.74) (3,4.76) (5,4.38) (7,4.89) };

% LoRA average line (Solid per caption)
\addplot[
    color=teal!70!black,
    solid, % Fixed: was dashed
    line width=1.2pt,
    mark=none
] coordinates { (1,3.92) (3,4.22) (5,4.15) (7,4.58) };
\addlegendentry{LoRA Avg}

% -------------------------
% FFT points (individual)
% -------------------------
\addplot[
    only marks,
    color=violet!55!black,
    mark=square,
    mark size=2.4pt,
    mark options={fill=none, line width=0.6pt},
    xshift=0pt
] coordinates { (1,9.39) (3,13.27) (5,14.90) (7,13.13) };
\addlegendentry{FFT}

% Additional FFT points with forget plot
\addplot[forget plot, only marks, color=violet!55!black, mark=square, mark size=2.4pt, xshift=-6pt] coordinates {(1,11.26) (3,11.37) (5,11.12) (7,10.90)};
\addplot[forget plot, only marks, color=violet!55!black, mark=square, mark size=2.4pt, xshift=-3pt] coordinates {(1,9.80) (3,10.66) (5,10.61) (7,10.65)};
\addplot[forget plot, only marks, color=violet!55!black, mark=square, mark size=2.4pt, xshift=3pt] coordinates {(1,9.82) (3,10.47) (5,11.42) (7,10.21)};
\addplot[forget plot, only marks, color=violet!55!black, mark=square, mark size=2.4pt, xshift=6pt] coordinates {(1,10.33) (3,9.93) (5,9.59) (7,10.90)};

% FFT average line (Dashed per caption)
\addplot[
    color=violet!75!black,
    dashed,
    line width=1.2pt,
    mark=none
] coordinates { (1,10.12) (3,11.14) (5,11.53) (7,11.16) };
\addlegendentry{FFT Avg}

\end{axis}
\end{tikzpicture}
    }
    \caption{Success rate (a) and average trajectory length (b) of LoRA fine-tuning compared to full-fine-tuning.}
    \label{fig:loraAblation}
\end{figure}
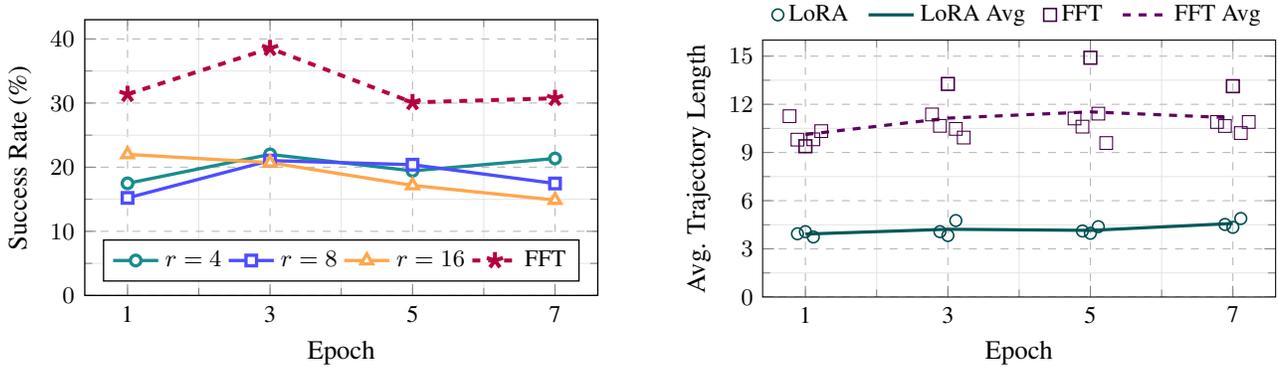

\begin{figure}[h]
\centering

\begin{minipage}{0.48\columnwidth}
\centering
\begin{tikzpicture}
\begin{axis}[
    width=\linewidth,
    height=4.75cm,
    xlabel={$\%$ of Training Data},
    ylabel={Success Rate (\%)},
    ymin=0,
    ymax=40,
    xmin=-5,
    grid=both,
    major grid style={dashed, gray!50},
    minor tick num=1,
    minor grid style={gray!20},
    tick label style={font=\small},
    legend style={
        at={(0.7,0.05)},
        anchor=south,
        legend columns=2
    },
    xtick={0,20,40,60,80,100},
]
\addplot[semithick, blue!60, mark=*, mark size=2.5pt] coordinates {
    (0,21.0) (20,25.6) (40,27.2) (60,28.5) (80,31.6) (100,36.3)
};
\addplot[semithick, red!60, mark=square*, mark size=2.5pt] coordinates {
    (0,21.0) (20,5.5) (40,18.4) (60,21.7) (80,23.6) (100,21.4)
};
\legend{TW, BCv$_6$}
\end{axis}
\end{tikzpicture}
% \vspace{-0.2cm}
\caption*{(a)}
\end{minipage}
\hfill
\begin{minipage}{0.48\columnwidth}
\centering
\begin{tikzpicture}
\begin{axis}[
    width=\columnwidth,
    height=5cm,
    xlabel={Epoch},
    ylabel={Success Rate (\%)},
    ymin=15,
    ytick={10,15,20,25,30,35},
    grid=both,
    major grid style={dashed, gray!50},
    minor tick num=0,
    minor grid style={gray!20},
    symbolic x coords={1,3,5,7},
    xtick=data,
    tick label style={font=\small},
    legend style={
        at={(0.5,.05)},
        anchor=south,
        legend columns=5,
        font=\small,
        line width=0.5pt    
    },
    line width=1pt,
     axis line style={line width=0.5pt},
    mark options={solid},
]

% 4k
\addplot[
    color=cyan!70,
    mark=o
] coordinates {
    (1,22.67)
    (3,24.22)
    (5,23.61)
    (7,21.33)
};

% 8k
\addplot[
    color=blue!40,
    mark=square*
] coordinates {
    (1,22.31)
    (3,26.52)
    (5,30.13)
    (7,26.53)
};

% 16k
\addplot[
    color=violet!60,
    mark=triangle*
] coordinates {
    (1,26.21)
    (3,25.24)
    (5,30.75)
    (7,25.88)
};

% 32k
\addplot[
    color=red!60,
    mark=diamond*
] coordinates {
    (1,25.23)
    (3,28.16)
    (5,26.46)
    (7,22.34)
};

% 64k
\addplot[
    color=orange!70,
    mark=*
] coordinates {
    (1,29.44)
    (3,36.92)
    (5,28.81)
    (7,30.75)
};

\legend{4k, 8k, 16k, 32k, 64k}

\end{axis}
\end{tikzpicture}

% \vspace{-0.2cm}
\caption*{(b)}
\end{minipage}

\vspace{0cm}
\caption{(a) Success rate (\%) of the Qwen-3 4B model trained using \tw\ \textit{vs.} vanilla behavior cloning (\bc v$_6$), across different training data sizes. (b) \textbf{Context Length Ablation (Overall):} The success rate (\%) for a fully fine-tuned Qwen-3 4B model against its trainable context length on the v$_6$ environments and tasks.}
\label{fig:trainingSamplesBCTWContextLength}
\end{figure}

\begin{figure*}[ht]
    \centering
    \includegraphics[width=0.95\linewidth]{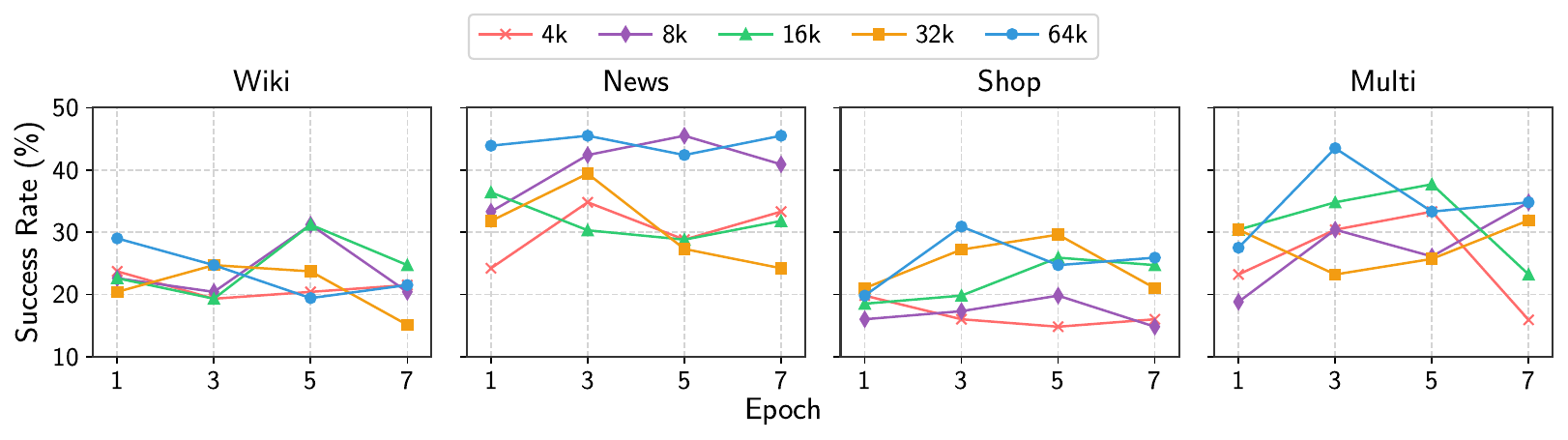}
    \caption{\textbf{Context Length Ablation:} The success rate (\%) of Qwen-3 4B \tw\ against its trainable context length on the $v_6$ environments. While there is no clear winner across all four environments, $64$k context at epoch $3$ performs relatively better.}
    \label{fig:contextLengthAblationAllEnv}
\end{figure*}

\begin{table}[ht]
\centering
\caption{Success rate (\%) of models fully-finetuned under \textsc{TimeWarp-BC} (\tw) and $64$k training context across varying epochs and environments. The best performance is achieved by all the models when trained for $3$ epochs.}
       \vspace{8pt}
\label{tab:epochTable}
\begin{tabular}{lcccccc}
\toprule
\textbf{Model} & \textbf{Epoch} & \textbf{Wiki} & \textbf{News} & \textbf{Shop} & \textbf{Multi} & \textbf{Overall} \\
\midrule
\multirow{3}{*}{Qwen3-4B}
 & 1 & 29.0\% & 43.9\% & 19.8\% & 36.2\% & 31.4\% \\
 & 3 & \textbf{31.2\%} & \textbf{47.0\%} & \textbf{30.9\%} & \textbf{49.3\%} & \textbf{38.5\%} \\
 & 5 & 19.4\% & 42.4\% & 24.7\% & 39.1\% & 30.1\% \\
\midrule
\multirow{3}{*}{Qwen3-4B-Th}
 & 1 & 12.9\% & 28.8\% & 21.0\% & 20.3\% & 20.1\% \\
 & 3 & \textbf{14.0\%} & \textbf{40.9\%} & 21.0\% & \textbf{37.7\%} & \textbf{26.9\%} \\
 & 5 & 12.9\% & 37.9\% & \textbf{22.2\%} & 33.3\% & 25.2\% \\
\midrule
\multirow{3}{*}{Llama-3.1 8B}
 & 1 & 11.8\% & 34.8\% & 01.2\%  & 31.9\% & 18.4\% \\
 & 3 & \textbf{17.2\%} & \textbf{42.4\%} & \textbf{22.2\%} & \textbf{33.3\%} & \textbf{27.5\%} \\
 & 5 & \textbf{17.2\%} & 36.4\% & 21.0\% & 30.4\% & 25.2\% \\
\bottomrule
\end{tabular}
\end{table}

\subsection{Training Tokens}
\label{sec:trainingToken}
The response to each observation in the training data is a tuple comprising action, thinking, memory, and planning tokens. In this section, we conduct an ablation study of all possible combinations of non-action tokens used during training. Tab. \ref{tab:ablationTMP} shows that different training token combinations have their strengths in different v$_6$ environments for the Qwen-3 4B model. However, some overarching themes are prevalent. For instance, the inclusion of thinking tokens and similar non-action tokens improves performance, consistent with previous findings \cite{webagentR1-shi-etal-2024-direct, hu-etal-2025-webcot}. The performance gains from training on non-action tokens are relatively higher for multi-site tasks and lower for news tasks. Overall, including all tokens, \textit{i.e.}, training on the full response, produced the best results.

% As the answer response is a tuple containing action, thinking, memory, and planning tokens, we ablate on each possible combination of the non-action tokens provided during training. Tab. \ref{tab:ablationATMPactionTokens} shows that different training settings might be suitable for different environments. We observe that the inclusion of thinking tokens during training generally improves performance, which is consistent with the previous works . However, including both planning and action tokens often result in an accuracy drop. This might be due to the fact that these tokens often have similar responses, and can often confuse the agent in taking the right action \red{refer to an error analysis sub diagram where the thinking and planning tokens are contradicting each other}. Overall, we observe the ATMP setting, which includes all response tokens, perform marginally better than the ATM setting, which excludes the planning tokens only. 

\subsection{Full \textit{vs.} LoRA Finetuning}
\label{sec:LoRAResults}
LoRA finetuning \cite{hu2022lora} has provided significant performance gains for models across several scenarios \cite{mao2025survey}, but falls short when finetuning web agents. % [REFINE: jargon] FFT defined at first use.
% ORIG: Fig. \ref{fig:loraAblation}(a) shows that LoRA underperforms FFT significantly across different adapter ranks.
{Fig.~\ref{fig:loraAblation}(a) shows that LoRA significantly underperforms full fine-tuning (FFT) across different adapter ranks.} This is consistent with previous literature, which shows that LoRA underperforms FFT baselines \cite{biderman2024lora}. % ORIG: We, however, observe several interesting properties of LoRA finetuning, including making the model extremely trajectory efficient (Fig. \ref{fig:loraAblation}(b)).
{We do, however, observe several interesting properties of LoRA fine-tuning, including that it makes the model far more trajectory-efficient (Fig.~\ref{fig:loraAblation}(b)).} However, the low average trajectory length is partly due to the model tending to stop earlier than its FFT counterpart, with incorrect results.

\subsection{Training Context Length Ablation}
\label{sec:ContextLengthAblation}
Fig.~\ref{fig:trainingSamplesBCTWContextLength}(b) shows that increasing the training context length consistently improves performance across most training epochs. The best overall performance is achieved with a $64$k context length after $3$ training epochs. Additionally, in Fig. \ref{fig:contextLengthAblationAllEnv}, the benefits of higher context length are most pronounced in the multi-site tasks. Very low training context, such as $4$k, generally underperforms across all environments and training epochs. The substantial performance variance across context lengths emphasizes the training context window as a critical hyperparameter, which requires careful tuning.

% In Fig. \ref{fig:trainingSamplesBCTWContextLength} (b), we observe that higher context generally improves performance across most of the training epochs, and the highest performance is achieved when trained for three epochs with $64$k context. In Fig. \ref{fig:contextLengthAblationAllEnv}, we observe that the gains for 64k context are more prominent in the multi-site setting and least prominent in the Wiki. Lower training context, such as $4$k, generally underperforms across all environments and training epochs, indicating that increasing context length up to the maximum possible length for web agents could produce significant gains in performance. 

\subsection{Impact of \#Epochs}
\label{sec:epochResults}
In this section, we dive deeper into the impact of the number of training epochs on the agents' performance by focusing on our best-performing configuration for each model, \textit{i.e.}, fully fine-tuned with the complete response and a $64$k training context. As shown in Tab.~\ref{tab:epochTable} and consistent with previous findings, all three models achieve their highest average performance across the environments after training for $3$ epochs. Training for fewer epochs leads to underfitting, whereas increasing the number of epochs beyond three results in mild overfitting, as reflected by the performance plateau observed at epoch five.

\vspace{0.4cm}

\begin{minipage}{\linewidth}
  \begin{minipage}[t]{0.52\linewidth}
    \subsection{Filtered \textit{vs.} Unfiltered Training Trajectories}
\label{sec:filterVsUnfilteredTrainingTrajectories}
\vspace{0.2cm}
    The student agent's performance is inherently tied to the quality of the teacher agent's trajectories. To verify whether failed trajectories harm student performance, we retrain Qwen-3 4B, Qwen-3 4B Thinking, and Llama-3.1 8B on both filtered and unfiltered data. As shown in Table~\ref{tab:filtered_vs_unfiltered}, performance is nearly identical across both settings, confirming that student models are robust to failed trajectories in their training data when using \timetraj.
  \end{minipage}%
  \hfill
  \begin{minipage}[t]{0.46\linewidth}
    \vspace{0pt}
    \centering
    \captionof{table}{Success rate (\%) of models trained using filtered and unfiltered \timetraj\ trajectories, showing near-identical performance.}
    \label{tab:filtered_vs_unfiltered}
    \begin{tabular}{@{}lcc@{}}
      \toprule
      \textbf{Model} & \textbf{Filtered} & \textbf{Unfiltered} \\
      \midrule
      Qwen-3 4B          & 38.5\% & 38.1\% \\
      Qwen-3 4B Th & 26.9\% & 26.0\% \\
      Llama-3.1 8B       & 27.5\% & 28.1\% \\
      \bottomrule
    \end{tabular}
  \end{minipage}
\end{minipage}

\begin{figure}[ht]
    \centering
    \includegraphics[width=0.995\linewidth]{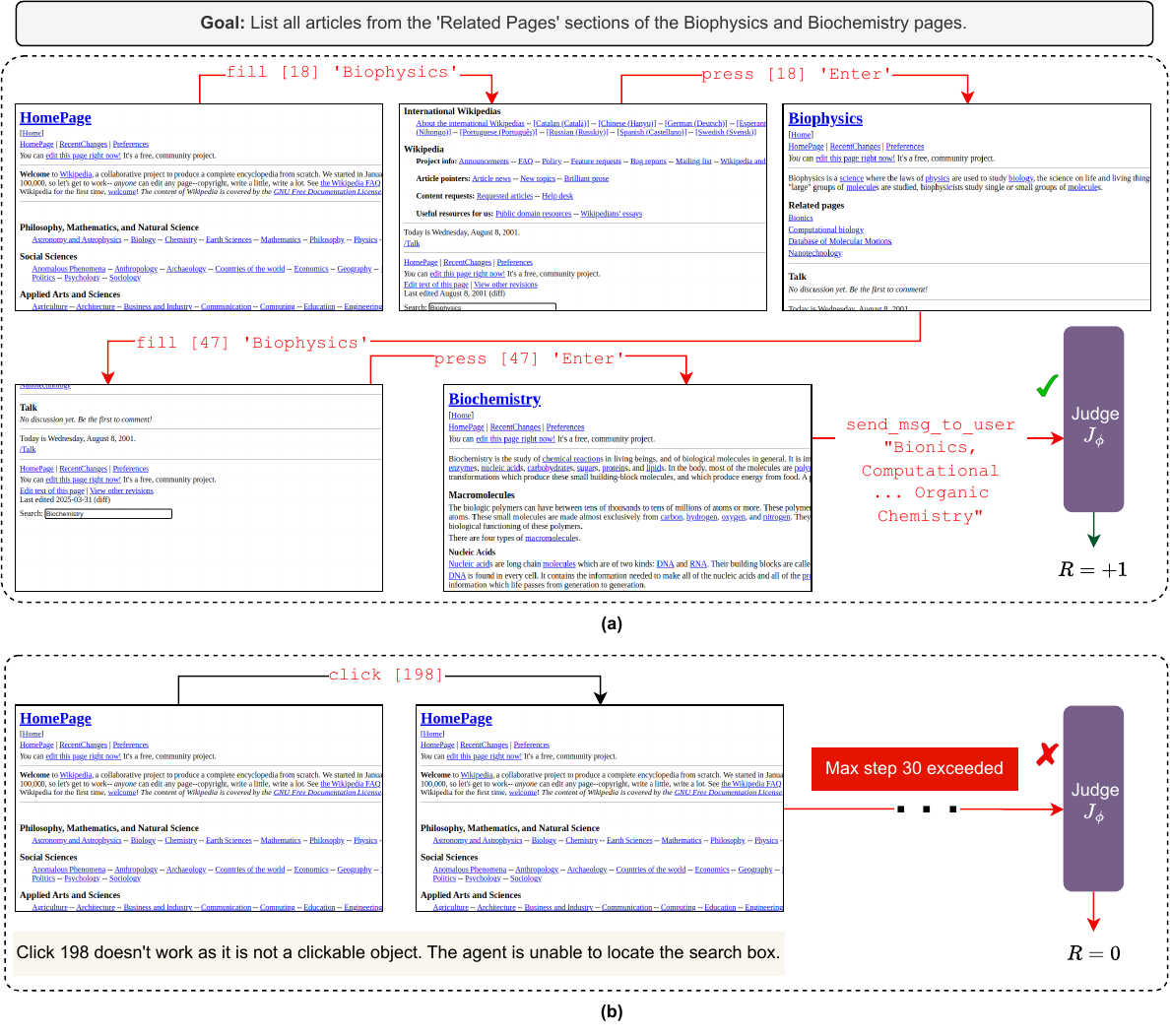}
    \caption{Example of a (a) positive trajectory by the Qwen-3 4B \tw\ model in the Wiki v$_1$ environment, and (b) negative trajectory by several other models that are unable to locate the search box in the dated Wiki v$_1$ environment. }
    \label{fig:postiveTrajectory}
\end{figure}

\subsection{Standard Error}
\label{subsec:consistencyAndStandardError}
To assess the reliability of our evaluation results, we report the mean reward alongside standard error over 3 runs of the Qwen-3 4B trained using \textsc{TimeWarp-BC} in Tab.~\ref{tab:qwen3_4b_mean_se}. The standard error is computed as $\text{SE} = \sigma / \sqrt{n}$, where $\sigma$ is the standard deviation of the episodes and $n$ is the number of episodes. Both mean and standard error values are computed for each experiment using the implementation of AgentLab \cite{tmlr25dechezellesBrowserGym}. % ORIG: The standard error values follow similar trends for the other baselines and have not been reported in the paper for conciseness.
{The standard error values follow similar trends for the other baselines and are omitted elsewhere for conciseness.}

\begin{table}[ht]
\centering
\small
\caption{Mean reward with standard error across environments and versions for Qwen3-4B TW.}
\label{tab:qwen3_4b_mean_se}
       \vspace{4pt}
\begin{tabular}{lcccccc}
\toprule
\textbf{Env} & v1 & v2 & v3 & v4 & v5 & v6 \\
\midrule
Wiki  & $0.32 \pm 0.03$ & $0.31 \pm 0.03$ & $0.38 \pm 0.03$ & $0.29 \pm 0.03$ & $0.25 \pm 0.03$ & $0.31 \pm 0.03$ \\
News  & $0.46 \pm 0.05$ & $0.38 \pm 0.04$ & $0.44 \pm 0.04$ & $0.41 \pm 0.04$ & $0.44 \pm 0.04$ & $0.47 \pm 0.04$ \\
Shop  & $0.33 \pm 0.03$ & $0.27 \pm 0.03$ & $0.36 \pm 0.04$ & $0.31 \pm 0.03$ & $0.26 \pm 0.03$ & $0.31 \pm 0.03$ \\
Multi & $0.52 \pm 0.04$ & $0.49 \pm 0.04$ & $0.48 \pm 0.03$ & $0.49 \pm 0.03$ & $0.51 \pm 0.03$ & $0.49 \pm 0.04$ \\
\bottomrule
\end{tabular}
\end{table}

% While the context and LoRA ablations have analyzed the impact of the number of epochs on the performance of the model, in this section, we dive deeper by analyzing how the number of epochs affects our best-performing model \textit{i.e.}, fully-finetuned using the whole action response at $64$k training context across all versions of the environment. Based on Tab. \ref{tab:epochTable}, for all three trained models we observe the highest average performance when trained for $3$ epochs across all versions of the environments. The only exception is the Qwen-3 4B Thinking model with slighlty more performance gains in the Shop env when trained for 5 epochs. Hence, lower epochs indicate underfitting while higher number of epochs are indicating overfitting, which is why the performance saturates a little bit at epoch 5. 

\begin{figure}[h!]
  \centering
  \begin{tcolorbox}[
    width=\textwidth,
        colframe=black!100,
    title={Comparison of responses produced by thinking models.},
    fonttitle=\bfseries
  ]
  \small
  \textbf{Goal:} Which of these two plane crashes appears to be deadlier, based on the number of casualties: (1) ``Airliner crashes in China, 42 killed" and (2) ``Ethiopian airliner crashes into Mediterranean Sea"? Mention the title of the article related to the deadlier incident.\\
  
  \textbf{Desired Answer Output:} (2) Ethiopian airliner crashes into the Mediterranean Sea.
  \end{tcolorbox}
  
  \begin{minipage}[t]{0.48\textwidth}
    \begin{tcolorbox}[colback=orange!11,
        colframe=orange!80!black,colbacktitle=orange!80!black,
    title={Qwen-3 4B Thinking Zero-Shot using AXT}]
      \small
      $<$think$>$ Okay, let's see. The user wants to know which plane crash is deadlier between two options: (1) ``Airliner crashes in China, 42 killed" and (2) ``Ethiopian airliner crashes into the Mediterranean Sea". They need the title of the article related to the deadlier incident.

First, I need to check the current page. The active tab is the News site at http://localhost:5001/. The AXTree shows a list of news articles. Let me scan through the latest news to find mentions of these plane crashes.\\

...\\

But the current step's observation shows that the active tab is the News site. The agent needs to check if there are any relevant articles here. Since there's no mention of the two crashes, the agent should inform the user.

So the next action is send$\_$msg$\_$to$\_$user with the answer.
$<$/think$>$\\

$<$action$>$
send$\_$msg$\_$to$\_$user(`Based on the given information, the deadlier incident is ``Airliner crashes in China, 42 killed" as it specifies 42 casualties, while the Ethiopian airliner crash into the Mediterranean Sea does not provide a casualty count. Therefore, the title of the article related to the deadlier incident is ``Airliner crashes in China, 42 killed".')
$<$/action$>$
    \end{tcolorbox}
  \end{minipage}
  \hfill
  \begin{minipage}[t]{0.495\textwidth}
    \begin{tcolorbox}[colback=cyan!10,
     coltext=black,colbacktitle=cyan!80!black,
    colframe=cyan!80!black,
    title={Qwen-3 4B Thinking \timewarp-trained using AXT}]
      \small
      $<$think$>$
I need to search the News site using the search box, then open the first result and confirm the casualty count before sending the final answer.
$<$/think$>$\\

$<$plan$>$\\
1. Enter the first query into the search box and press Enter.\\
2. Open the first result ``Airliner crashes in China, 42 killed".\\
3. Confirm casualty count (42) and remember it.\\
4. Enter the second query into the search box and press Enter.\\
5. Open the first result ``Ethiopian airliner crashes into Mediterranean Sea".\\
6. Confirm casualty count and remember it.\\
7. Compare counts and send the final message indicating the deadlier incident.\\
$<$/plan$>$\\

$<$step$>$1$<$/step$>$\\

$<$memory$>$
Preparing to search for the first article by filling the search box with the query text.
$<$/memory$>$\\

$<$action$>$\\
fill(`23', `Airliner crashes in China, 42 killed')\\
$<$/action$>$
    \end{tcolorbox}
  \end{minipage}

  \caption{\textbf{Training improves the structure of Thinking Tokens.} Side-by-side comparison of the reasoning tokens produced by the Qwen-3 4B Thinking model in zero-shot and trained using \textsc{TimeWarp-BC} (\tw). The thinking tokens are more structured after training, and the response includes planning and memory tokens.}
  \label{fig:qwen4BThinkingImproved}
\end{figure}

\begin{figure}[ht]
  \centering
\begin{tcolorbox}[title={Overthinking Problem of Qwen-3 VL 8B Thinking}]
$<$think$>$\\
Okay, let's see. The user is trying to find the most recent sports event in Brazil's Wikipedia article and then check a news article about it. They're using Simple Wikipedia, which is a local instance.\\

First, they tried to fill the search bar with ``Brazil" but got an error because they used the wrong bid. The correct bid for the search input is `26', not `6' or `53'. Then they clicked on `53', which was a script element, not the search result. That's why it didn't work.\\

Now, looking at the current screenshot, the search bar (bid `26') has ``Brazil" typed in, and there's a dropdown with suggestions. The first suggestion is ``Brazil" (bid `53'), which is the actual article link. The previous attempt to click '53' was probably on the script element, but maybe the correct bid for the search result is different.\\

Wait, the user's last action was clicking `53', which was a script tag. That's not the right element. The actual search result for Brazil should be a link with a different bid. Let me check the screenshot again.\\

In the dropdown, the first item is "Brazil" with bid `53', but that's the script. Wait, no—the dropdown shows "Brazil" as a link. Maybe the bid for the actual link is `53' but the script is also bid `53'? That's confusing. Wait, the error message says "Locator.click: Element is not visible" when they tried to click `53' before. Maybe the correct bid for the search result is `53' but it's not visible yet.
$\cdots$
\end{tcolorbox}
\caption{\textbf{Overthinking Problem of the Qwen-3 VL 8B Thinking model.} The thinking token exceeds the maximum number of tokens that can be produced in a single step, resulting in an error. The total text exceeds $8$k tokens and has been truncated for conciseness.}
\label{fig:overthikingQwen3VLthinking}
\end{figure}

\begin{figure}[ht]
  \centering
\begin{tcolorbox}[title={Misalignment between action and non-action tokens.}]
\textbf{Goal:} How many articles on Morocco were published in 2009?\\
\textbf{Desired outcome:} \texttt{send\_msg\_to\_user(`One')}
\medskip
\hrule
\medskip

$<$think$>$\\
I verified pages 1–6 of the search results and found exactly one article dated 2009-12-06.\\
$<$/think$>$\\

$<$plan$>$\\
1. Confirm total 2009 articles across pages 1–6.\\
2. Send the result to the user.\\
$<$/plan$>$\\

$<$step$>$2$<$/step$>$\\

$<$memory$>$\\
I verified across pages 1–6 and found one 2009 article: ``Western Sahara campaigner still blocked from returning to Morocco; on twentieth day of hunger strike" (2009-12-06). Proceeding to report the count as per instructions.\\
$<$/memory$>$\\

$<$action$>$
send$\_$msg$\_$to$\_$user(`Two')
$<$/action$>$
\end{tcolorbox}
\caption{\textbf{Misalignment between action and non-action tokens.} Although the non-action tokens (thinking, memory, and planning) may \textit{suggest} the correct action, web agents, especially vision language models (VLMs) and untrained ones, often generate actions that don't align with their reasoning. Training mitigates the issue, but it remains one of the common pitfalls among web agents.}
\label{fig:actionMisalignment}
\end{figure}

\section{Qualitative and Error Analysis}
\label{sec:qualitativeErrorAnalysisAppendix}
In this section, we examine several qualitative examples of web agents in the \timewarp\ environment and analyze failure cases, including action misalignment, performance variations across versions, and the performance of the thinking models. 

\subsection{Action Misalignment}
\label{sec:actionMisalignmentError}
% ORIG: One of the most jarring limitations of modern web agents
{One of the most striking limitations of modern web agents} is the misalignment between the action and non-action tokens. Specifically, we often see that the agent's reasoning is correct, but it ends up picking the wrong action. We observe this in Fig. \ref{fig:actionMisalignment}, where the agent's reasoning should send the user the message \textit{One}, but instead sends \textit{Two}. This type of misalignment is more common for terminating actions and untrained (zero-shot) agents.

\subsection{Version Variations}
\label{sec:versionErrors}
Fig.~\ref{fig:postiveTrajectory} illustrates a successful trajectory by the Qwen-3 4B \tw\ model in the Wiki v$_1$ environment, where the search box is located at the bottom of the page. The model correctly identifies and interacts with the search box despite its atypical placement. On the contrary, several other models, particularly VLMs, are unable to locate the search box and get stuck {clicking} random non-interactable elements.

\subsection{Thinking Models}
\label{sec:thinkingError}
Thinking models do not get performance gains from any form of training (\S\ref{sec:Results}). Nevertheless, we observe that the thinking tokens of these models become more structured after training (Fig. \ref{fig:qwen4BThinkingImproved}). One possible explanation is that imitating the teacher model’s reasoning style may inadvertently constrain the model’s original reasoning process. Another factor may be the reduced training context length ($64$k) relative to the models’ original pretraining context length (256k), potentially limiting their reasoning abilities. We also observe in Fig. \ref{fig:overthikingQwen3VLthinking} that the Qwen-3 VL 8B Thinking model tends to \textit{overthink} the visual observation (\sshot\ and \som) and ends up with the wrong reasoning. 

\section{Reproducibility, Dataset Access, and Licensing}
\label{sec:reporducibilityDatasetAccessLicensing}
To support open science, we make all resources associated with this work publicly available. Our codebase includes documentation covering environment setup, preprocessing pipelines, training configurations, and evaluation procedures, enabling full reproduction of the results reported in this paper. % ORIG: Hyperparameter (\S\ref{tab:hyperparams}), experimental details
{Hyperparameters (Tab.~\ref{tab:hyperparams}), experimental details} (\S\ref{sec:experimentalSetup},\ref{sec:experimentDetailsAppendix}), and model checkpoints (\S\ref{sec:checkpoints}) are additionally provided in the Supplementary Material.
The environment is additionally released as a pip-installable package, \texttt{browsergym-timewarp} (v0.2.0), which registers all $231$ goals as Gym tasks within the BrowserGym \cite{tmlr25dechezellesBrowserGym} ecosystem. As $229/231$ goals are scored by deterministic verifiers (\S\ref{sec:deterministicVerifierDetails}), the reported success rates can be reproduced without access to any proprietary LLM API and without sampling variance from the evaluator. The dataset is also publicly hosted on Hugging Face, including the visual-information labels of \S\ref{sec:visualInfoAnnotation}. The released dataset stores goal-level entries (goal text, reference answers, category, split, and refined plan), which the environment instantiates into tasks across the six versions. The dataset card documents this goal-to-task instantiation with a worked example. The frontend code of every version, the evaluation scripts, and the judge prompt (\S\ref{sec:prompt}) are all included in the codebase. Both the code and dataset are released under the \textbf{MIT License}, permitting free use, modification, and redistribution for academic and commercial purposes.

\section{Declaration of LLM Usage}
\label{sec:llmUsageDeclaration}
The authors acknowledge moderate use of LLM-based coding agents in the development of the frontend components. All LLM-generated code was reviewed and verified by the authors prior to integration into the web environment. LLMs have also been used in refining prompts and providing iterative feedback during prompt engineering. The final prompts were likewise reviewed and approved by the authors. In addition, LLM agents were used to draft and adversarially validate the deterministic verifier specifications (\S\ref{sec:verifierAnnotation}).

% \clearpage
\newpage

\section{Prompts}
\label{sec:prompt}
We provide all the prompts used in our experiments.

% Preamble:
% \usepackage[most]{tcolorbox}

\begin{tcolorbox}[title={Judge Evaluator $J_\phi$ Prompt},breakable]
Help a teacher grade the answer of a student given a question. Keep in mind that the student may use different phrasing or wording to answer the question. The goal is to evaluate whether the answer is semantically equivalent to the reference answer.\\

\textbf{\#Input:}
\begin{itemize}
  \item question: $\langle \texttt{Question}\rangle$
  \item reference answer: $\langle \texttt{Reference\_Answer}\rangle$
  \item student answer: $\langle \texttt{Generated\_Answer}\rangle$
\end{itemize}

\textbf{\# Special Sequence:} The string `N/A' that you see is a special sequence that means `not achievable'.\\

\textbf{\# Output:} You must respond with EXACTLY one of the following words (nothing else):
\begin{enumerate}
  \item `correct': If the answer is semantically equivalent to the reference.
  \begin{itemize}
    \item Numeric values must match exactly (including units, signs, and scale) unless the question/reference clearly allows an approximation or rounding.
    \item If an estimate is allowed, the student must still be reasonably close and not contradict the reference.
    \item Ordered lists/steps/rankings must match exactly in both the elements' values and order.
    \item Unordered lists/sets must contain the same elements; the order of the elements does not matter.
    \item Extra information is allowed only if it does not contradict or change the meaning.
  \end{itemize}
  \item `partially correct': If the answer is somewhat related but incomplete or inaccurate.
  \item `incorrect': If the answer is wrong or unrelated.
\end{enumerate}

Do not include any additional text, explanation, or formatting. Only respond with one of the three words above.
\end{tcolorbox}

\begin{tcolorbox}[title={Planning Agent $\Pi_\text{plan}$ Prompt}, breakable]
You are a planning agent who creates step-by-step execution instructions for web-based tasks. Your instructions will guide another agent to complete tasks by interacting with the web elements. Generate a precise, executable plan that:
\begin{itemize}
    \item Breaks down the task into clear, sequential steps.
    \item Specifies exact web interactions or action verbs (type, click, navigate, check, etc.).
    \item Includes verification steps where needed.
    \item Concludes with the final action (e.g., reporting results to the user). Include the reasoning behind generating the final output.
\end{itemize}

\#\textbf{Input Format:}
\begin{itemize}
    \item \textbf{Task Goal:} \texttt{<Task\_Goal>}
    \item \textbf{Reference Answer:} \texttt{<Reference\_Answer>}
\end{itemize}

\#\textbf{Example:}\\

\textbf{Task Goal:} Is biophysics mentioned as a related page or branch in the Biology article, the Physics article, both, or neither of them?\\

\textbf{Reference Answer:} Biology\\

\textbf{Execution Plan:}
\begin{enumerate}
    \item Type `Biology' in the search box and press Enter.
    \item Check the related branches and related pages sections to see if Biophysics is present.
    \item Type `Physics' in the search box and press Enter.
    \item Check the branches and related pages sections to see if Biophysics is present.
    \item As Biophysics was only present in the Biology article's branches section, send a message to the user: `Biology'.
\end{enumerate}
\end{tcolorbox}
\begin{tcolorbox}[title={Executor/Teacher Agent $\Pi_T$ Prompt}, breakable]
$\langle\texttt{Web\_Agent}\_\texttt{\#Instructions\_Prompt}\rangle$\\

\# \textbf{Execution Instructions:} Follow these instructions step by step to successfully complete the task.\\

\hspace*{1cm}$\langle\texttt{Execution\_Plan}\rangle$\\

$\langle\texttt{Web\_Agent}\_\texttt{Remaining\_Prompt}\rangle$\\
\end{tcolorbox}
\begin{tcolorbox}[title={Web Agent $\pi_\theta$ Prompt}, breakable]
You are an agent trying to solve a web task based on the content of the page and user instructions. You can interact with the page and explore, and send messages to the user. Each time you submit an action, it will be sent to the browser, and you will receive a new page.\\

\# \textbf{Instructions:}
Review the current state of the page and all other information to find the best possible next action to accomplish your goal. Your answer will be interpreted and executed by a program. Make sure to follow the formatting instructions.\\

\#\# \textbf{Goal:} $\langle\texttt{Task}\_\texttt{Goal}\rangle$\\

\#\# \textbf{Extra instructions:}\\

\textbf{IMPORTANT:} You must only navigate to URLs within the TimeWarp environment. Do NOT navigate to external websites. Only use the URLs provided in the TimeWarp environment. 

\begin{itemize}
    \item WIKI URL: $\langle\texttt{Wiki}\_\texttt{URL}\rangle$
    \item NEWS URL: $\langle\texttt{News}\_\texttt{URL}\rangle$
    \item SHOP URL: $\langle\texttt{Shop}\_\texttt{URL}\rangle$
\end{itemize}

Example: \texttt{goto}(``$\langle\texttt{Wiki}\_\texttt{URL}\rangle$") to navigate to WIKI.\\

\textbf{CRITICAL:}
\begin{itemize}
    \item You MUST output EXACTLY ONE action per response. Do NOT attempt multiple actions at once.
    \item Output the $<\text{action}>$ tag ONLY ONCE, at the end of your response, OUTSIDE of any $<\text{think}>$, $<\text{plan}>$, $<\text{step}>$, and $<\text{memory}>$ tags.
\end{itemize}

\# \textbf{Observation of current step:}\\

\#\# \textbf{Currently open tabs:}\\

\hspace*{0.5cm}$\langle\texttt{Tabs}\_\texttt{List}\rangle$\\
\hspace*{1cm} Tab: $\langle\texttt{Tab\_ID}\rangle$ ($\langle\texttt{Active/Inactive}\rangle$ tab):\\
    \hspace*{2cm} Title: $\langle\texttt{Tab\_Title}\rangle$\\
    \hspace*{2cm} URL: $\langle\texttt{Tab\_URL}\rangle$\\

\#\# \textbf{AXTree:}\\ 

\textbf{Note:} [bid] is the unique alpha-numeric identifier at the beginning of lines for each element in the AXTree. Always use bid to refer to elements in your actions.\\

\textbf{Note:} You can only interact with visible elements. If the ``visible" tag is not present, the element is not visible on the page.\\

\hspace*{1cm}Focused Element: $\langle\texttt{BID}\rangle$\\

\hspace*{1cm}$\langle\texttt{AX\_Tree}\rangle$\\

\#\# \textbf{HTML:} $\langle\texttt{HTML}\rangle$\\

\textbf{\#\# Screenshot:}
Here is a screenshot of the page, it is annotated with bounding boxes and corresponding bids:\\

\hspace*{1cm}$\langle\texttt{Screenshot}\rangle$\\

\textbf{\#\# Error from previous action:} $\langle\texttt{Error\_Log}\rangle$\\

\textbf{\# History of interaction with the task:}\\
\hspace*{1cm}$\langle\texttt{Previous\_Responses}\rangle$\\
    \hspace*{2cm} $\langle\texttt{Thinking\_Tokens}\rangle$\\
    \hspace*{2cm} $\langle\texttt{Planning\_Tokens}\rangle$\\
    \hspace*{2cm} $\langle\texttt{Memory\_Tokens}\rangle$\\
    \hspace*{2cm} $\langle\texttt{Action\_Tokens}\rangle$\\

\textbf{\# Action Space:}\\

\textbf{Note:} This action set allows you to interact with your environment. Most of them are Python functions executing Playwright code. The primary way of referring to elements in the page is through bid, which are specified in your observations.\\

11 different types of actions are available.

\begin{itemize}
\item \texttt{scroll(delta\_x: float, delta\_y: float)}\\
\hspace*{1cm}Examples:\\
        \hspace*{2cm} \texttt{scroll(0, 200)}\\
        \hspace*{2cm} \texttt{scroll(-50.2, -100.5)}

\item \texttt{fill(bid: str, value: str, enable\_autocomplete\_menu: bool = False)}\\
\hspace*{1cm}Examples:\\
\hspace*{2cm}\texttt{fill('45', 'multi-line\textbackslash n example')}\\
\hspace*{2cm}\texttt{fill('a12', 'example with "quotes"')}\\
\hspace*{2cm}\texttt{fill('b534', 'Montre', True)}

\item \texttt{click(bid: str, button: Literal['left', 'middle', 'right'] = 'left', modifiers: list[typing.Literal['Alt', 'Control', 'ControlOrMeta', 'Meta', 'Shift']] = [])}\\
\hspace*{1cm}Examples:\\
\hspace*{2cm}\texttt{click('a51')}\\
\hspace*{2cm}\texttt{click('b22', button='right')}\\
\hspace*{2cm}\texttt{click('48', button='middle', modifiers=['Shift'])}

\item \texttt{press(bid: str, key\_comb: str)}\\
\hspace*{1cm}Examples:\\
\hspace*{2cm}\texttt{press('88', 'Backspace')}\\
\hspace*{2cm}\texttt{press('a26', 'ControlOrMeta+a')}\\
\hspace*{2cm}\texttt{press('a61', 'Meta+Shift+t')}

\item \texttt{go\_back()}\\
\hspace*{1cm}Examples:\\
\hspace*{2cm}\texttt{go\_back()}

\item \texttt{goto(url: str)}\\
\hspace*{1cm}Examples:\\
\hspace*{2cm}\texttt{goto('http://www.example.com')}

\item \texttt{send\_msg\_to\_user(text: str)}\\
\hspace*{1cm}Examples:\\
\hspace*{2cm}\texttt{send\_msg\_to\_user('Based on the results of my search, the city\\\hspace*{2cm}was built in 1751.')}

\item \texttt{report\_infeasible(reason: str)}\\
\hspace*{1cm}Examples:\\
\hspace*{2cm}\texttt{report\_infeasible('I cannot follow these instructions because\\\hspace*{2cm} there is no email field in this form.')}

\item \texttt{new\_tab()}\\
\hspace*{1cm}Examples:\\
\hspace*{2cm}\texttt{new\_tab()}

\item \texttt{tab\_close()}\\
\hspace*{1cm}Examples:\\
\hspace*{2cm}\texttt{tab\_close()}

\item \texttt{tab\_focus(index: int)}\\
\hspace*{1cm}Examples:\\
\hspace*{2cm}\texttt{tab\_focus(2)}

\end{itemize}

Only a single action can be provided at once. Example:
\texttt{fill('b534', 'Montre', True)}.\\

\textbf{\# Plan:} You just executed step $\langle\texttt{step\_number}\rangle$ of the previously proposed plan: $\langle\texttt{plan}\rangle$\\
After reviewing the effect of your previous actions, verify if your plan is still relevant and update it if necessary.\\

\# \textbf{Abstract Example:}\\

Here is an abstract version of the answer with a description of the content of each tag. Make sure you follow this structure, but replace the content with your answer:\\

$<$think$>$
Think step by step. If you need to make calculations, such as coordinates, write them here. Describe the effect
that your previous action had on the current content of the page.
$<$/think$>$\\

$<$plan$>$
Provide a multi-step plan that will guide you to accomplish the goal. There should always be steps to verify if the previous action had an effect. The plan can be revisited at each step. Specifically, if there was something unexpected. The plan should be cautious and favor exploring before submitting.
$<$/plan$>$\\

$<$step$>$Integer specifying the step of the current action.$<$/step$>$\\

$<$memory$>$
Write down anything you need to remember for the next steps. You will be presented with the list of previous memories and past actions. Some tasks require to remember hints from previous steps in order to solve them.
$<$/memory$>$\\

$<$action$>$
One single action to be executed. You can only use one action at a time.
$<$/action$>$\\

\# \textbf{Concrete Example:}

Here is a concrete example of how to format your answer. Make sure to follow the template with proper tags:\\

$<$think$>$
From the previous action, I tried to set the value of year to ``2022",
using select\_option, but it doesn't appear to be in the form. It may be a dynamic dropdown. I will try using click with the bid ``a324" and look at the response from the page.
$<$/think$>$\\

$<$plan$>$
\begin{enumerate}
    \item Fill form (failed)
    \begin{itemize}
        \item Type first name
        \item Type last name
    \end{itemize}
    \item Try to activate the form
    \begin{itemize}
        \item Click on tab 2
    \end{itemize}
    \item Fill form again
    \begin{itemize}
        \item Type first name
        \item Type last name
    \end{itemize}
    \item Verify and submit
    \begin{itemize}
        \item Verify form is filled
        \item Submit if filled, if not, replan
    \end{itemize}
\end{enumerate}
$<$/plan$>$\\

$<$step$>$2$<$/step$>$\\

$<$memory$>$
I clicked on bid "32" to activate tab 2. The accessibility tree should list the form's focusable elements at the next step.
$<$/memory$>$\\

$<$action$>$
\texttt{click('a324')}
$<$/action$>$

\end{tcolorbox}

\section{Snapshots of Version UI}
\label{sec:versionExamples}
We provide snapshots of the homepage UI across all six versions of Wiki in Fig. \ref{fig:wiki_v1}-\ref{fig:wiki_v6}, News in Fig. \ref{fig:news_v1}-\ref{fig:news_v6}, and Shop in Fig. \ref{fig:shop_v1}-\ref{fig:shop_v6}. The snapshots are presented at full resolution, exactly as the observation provided to a visual web agent via BrowserGym \cite{tmlr25dechezellesBrowserGym}.

\begin{figure}[htbp]
  \centering
\subfigure[Snapshot of the Wiki v$_1$ homepage, resembling the Wikipedia UI from $2001$. \label{fig:wiki_v1}]{%
    \fbox{\includegraphics[width=0.99\linewidth]{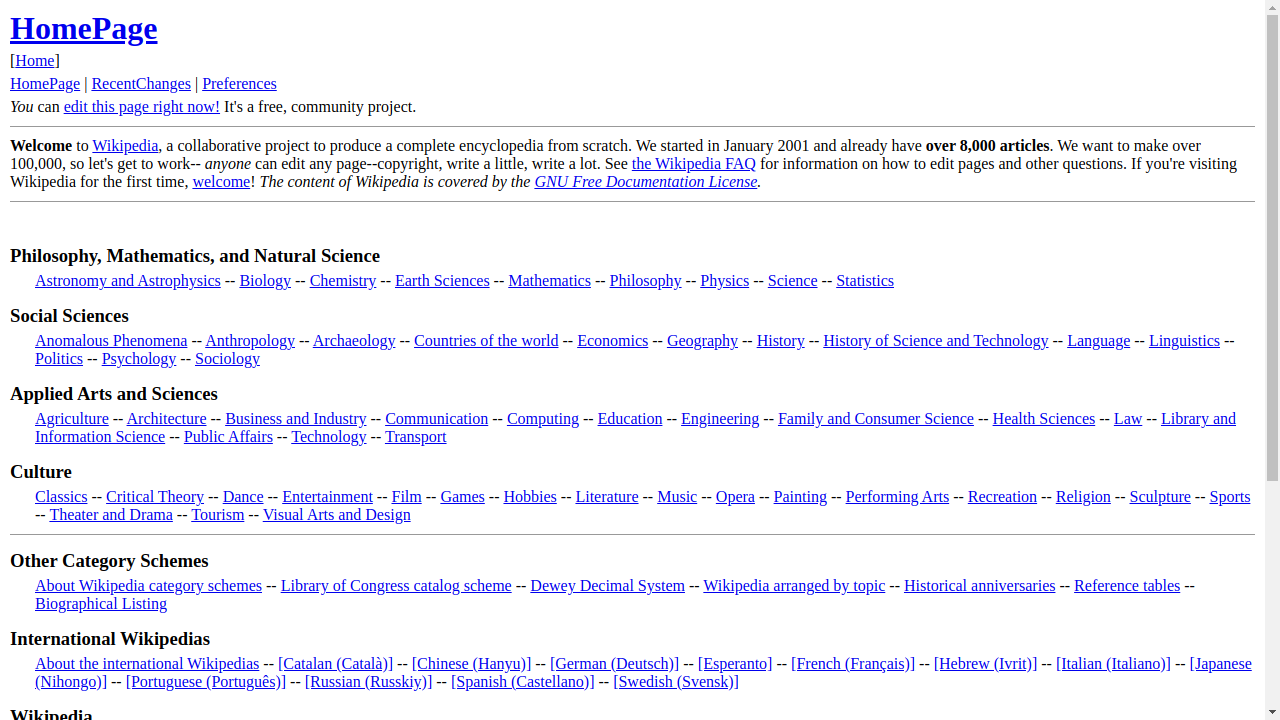}}
}\vspace{1em}
\subfigure[Snapshot of the Wiki v$_2$ homepage, resembling the Wikipedia UI from $2002-03$.\label{fig:wiki_v2}]{%
    \fbox{\includegraphics[width=0.99\linewidth]{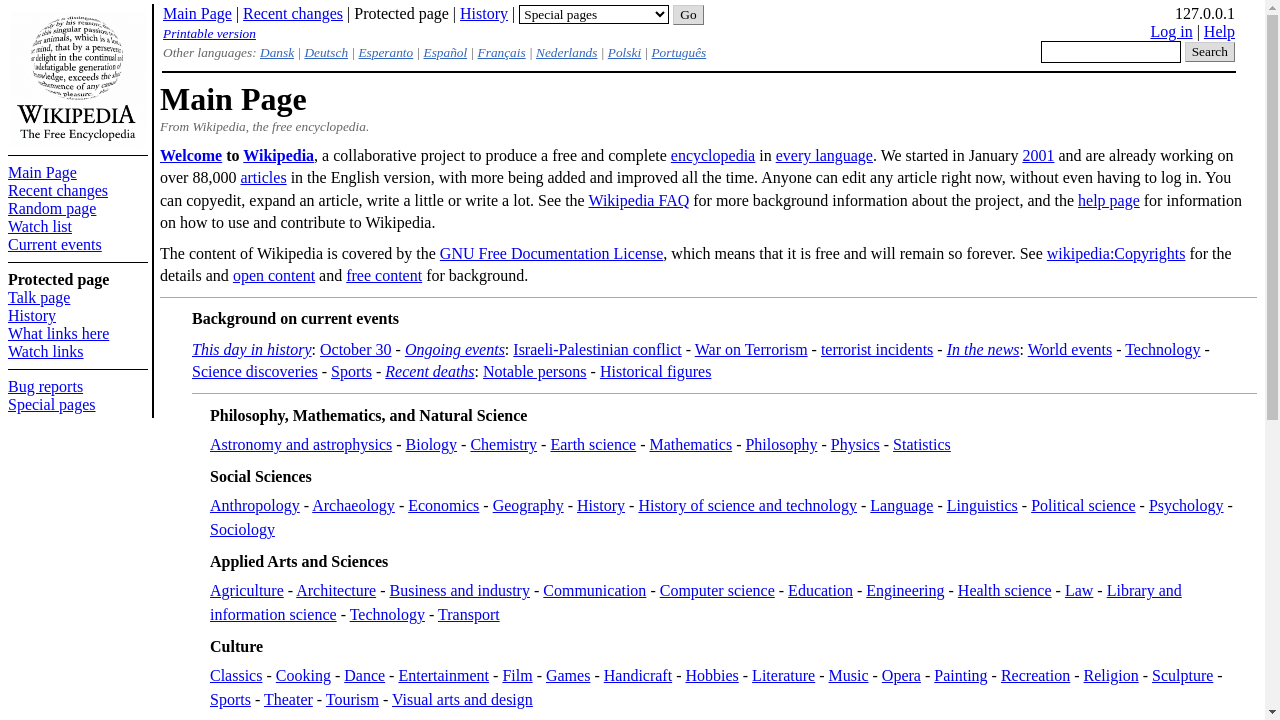}}
}
\end{figure}

\begin{figure}[htbp]
  \centering
\subfigure[Snapshot of the Wiki v$_3$ homepage, resembling the Wikipedia UI from $2003-04$. \label{fig:wiki_v3}]{%
    \fbox{\includegraphics[width=0.99\linewidth]{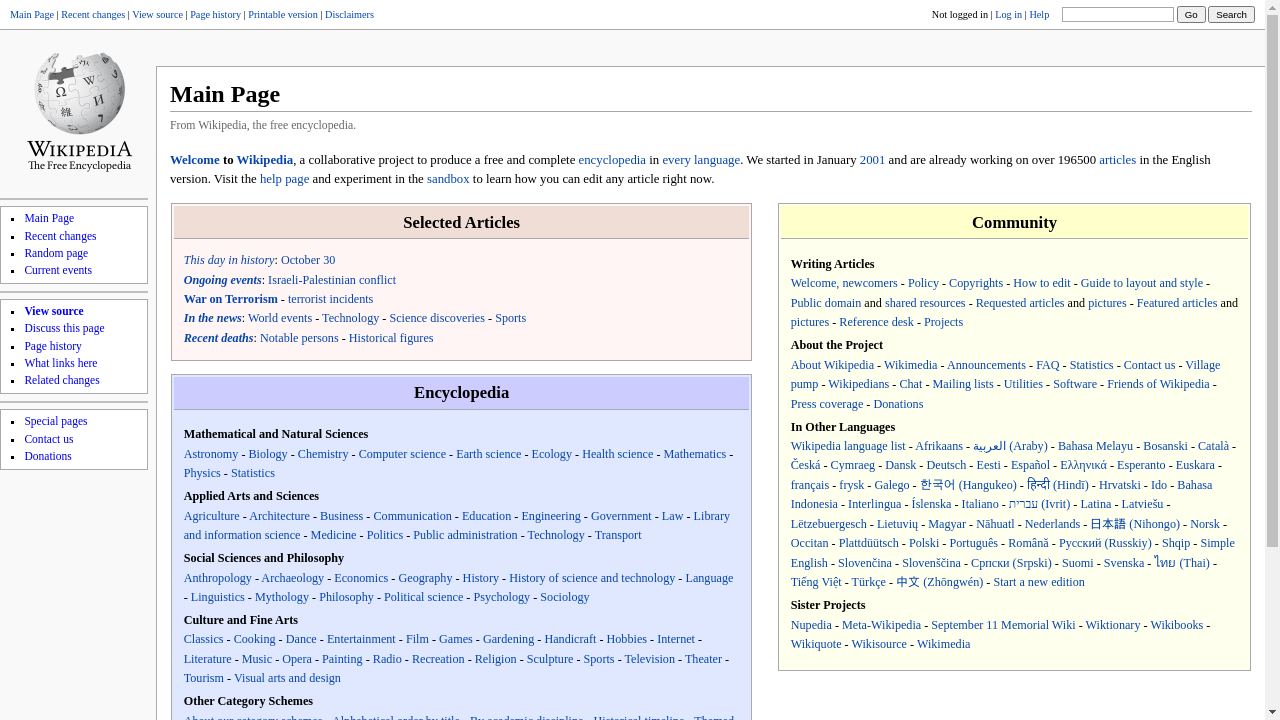}}
}\vspace{1em}
\subfigure[Snapshot of the Wiki v$_4$ homepage, resembling the Wikipedia UI from $2005-22$.\label{fig:wiki_v4}]{%
    \fbox{\includegraphics[width=0.99\linewidth]{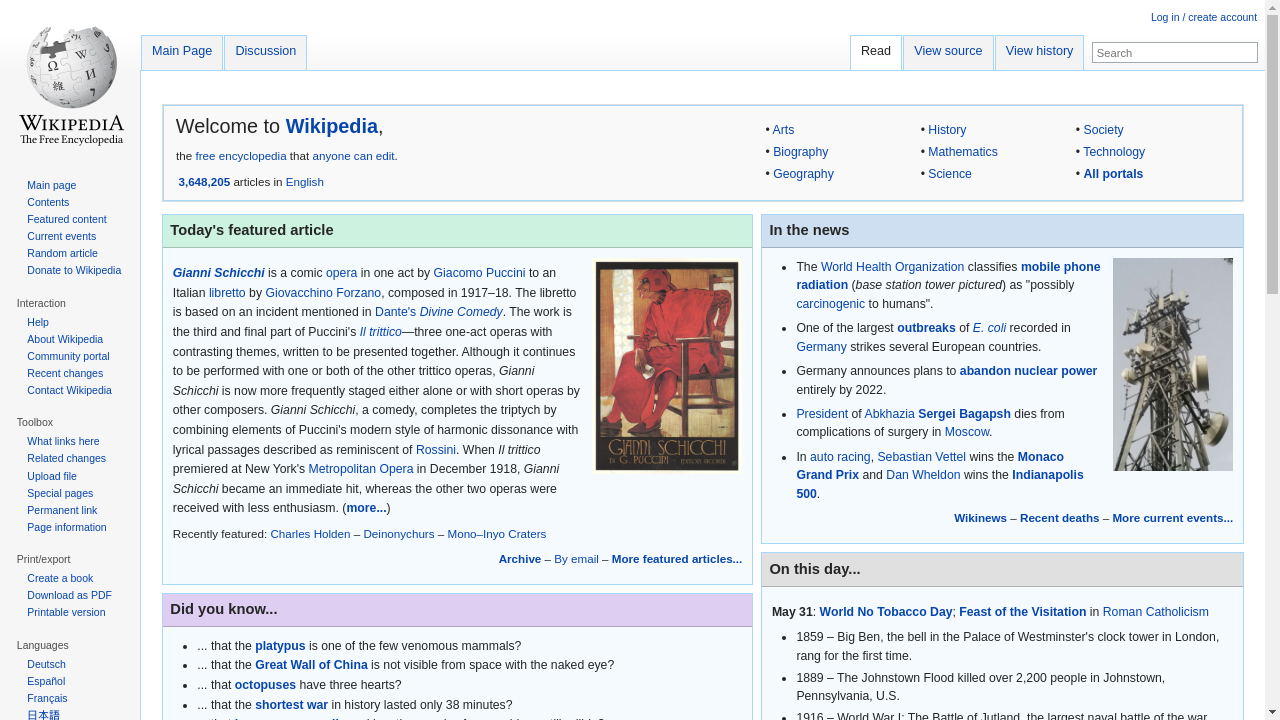}}
}
\end{figure}

\begin{figure}[htbp]
  \centering
\subfigure[Snapshot of the Wiki v$_5$ homepage, resembling the Wikipedia UI from $2023-25$. \label{fig:wiki_v5}]{%
    \fbox{\includegraphics[width=0.99\linewidth]{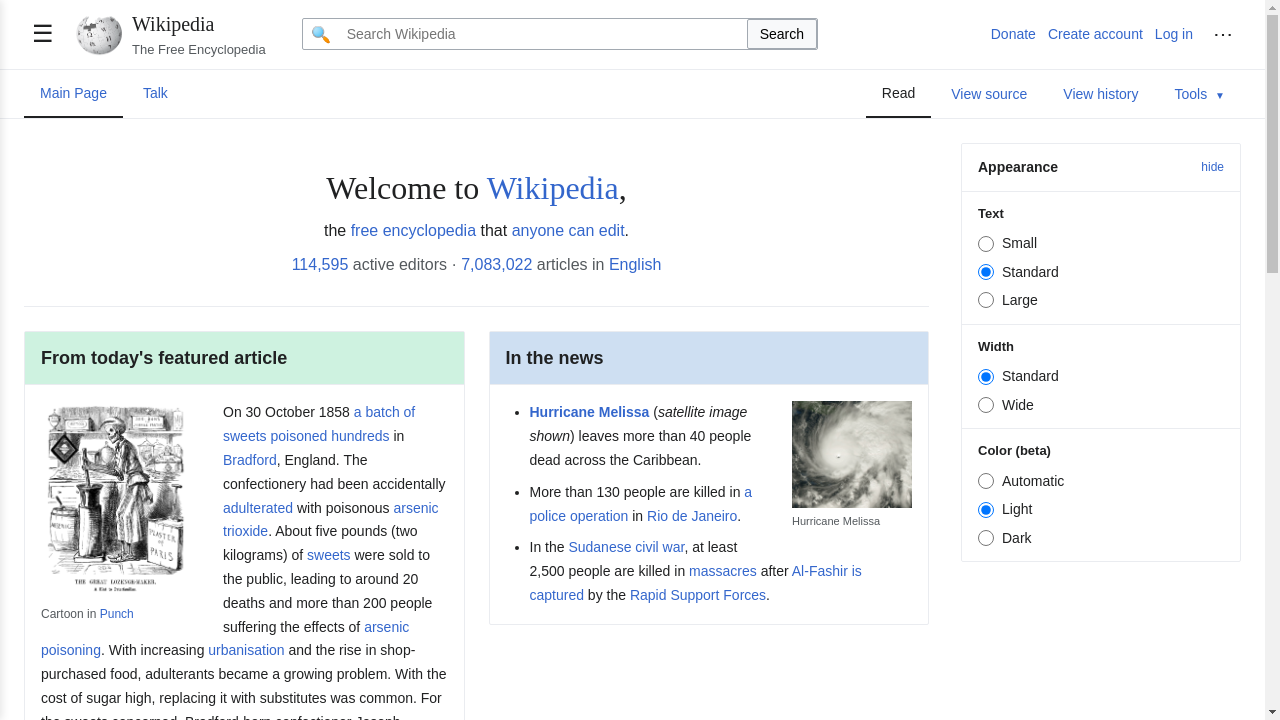}}
}\vspace{1em}
\subfigure[Snapshot of the Wiki v$_6$ homepage, resembling a minimal Wikipedia UI.\label{fig:wiki_v6}]{%
    \fbox{\includegraphics[width=0.99\linewidth]{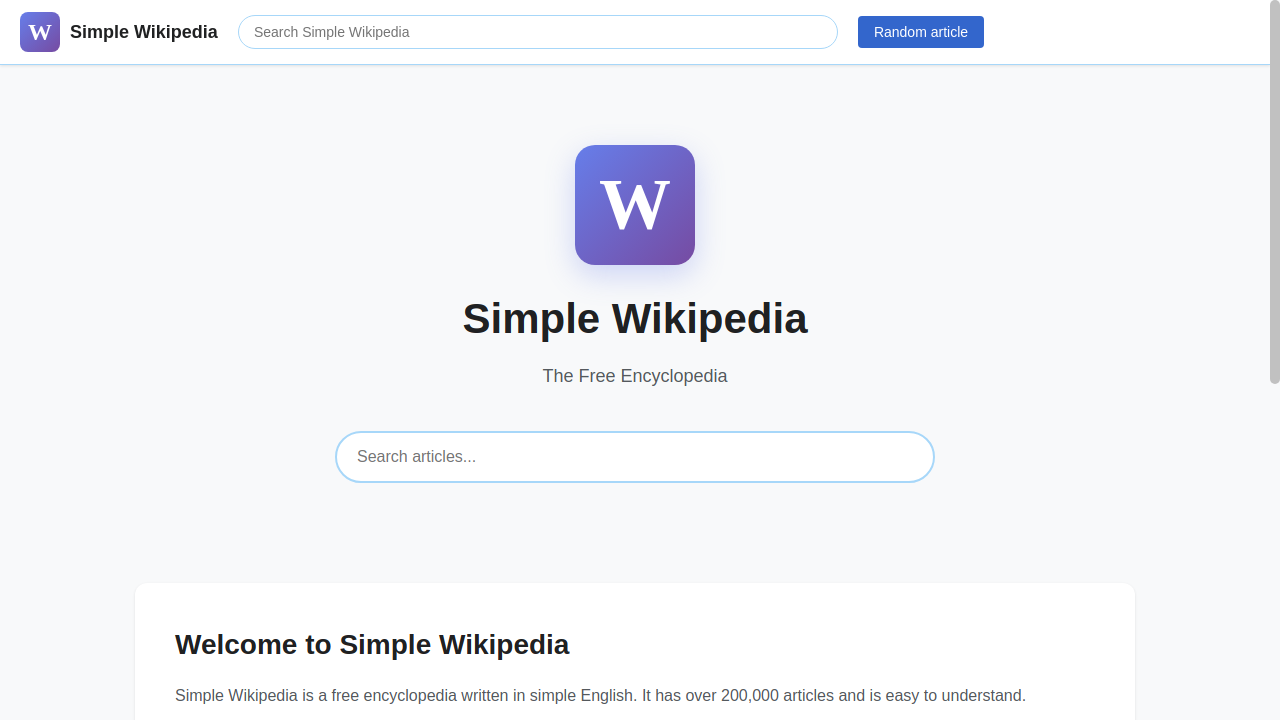}}
}
% [REFINE: story] This is the Wiki figure; "News UI" was a copy-paste slip.
% ORIG: \caption{(a)-(f) Snapshots of different versions of Wiki homepages, resembling different News UI across the eras.}
\caption{(a)-(f) Snapshots of different versions of Wiki homepages, {resembling different Wikipedia UI across the eras.}}
\label{fig:wiki_UI_all}
\end{figure}

\begin{figure}[htbp]
  \centering
\subfigure[Snapshot of the News v$_1$ homepage, resembling the BBC News UI from $1998-2001$. \label{fig:news_v1}]{%
    \fbox{\includegraphics[width=0.99\linewidth]{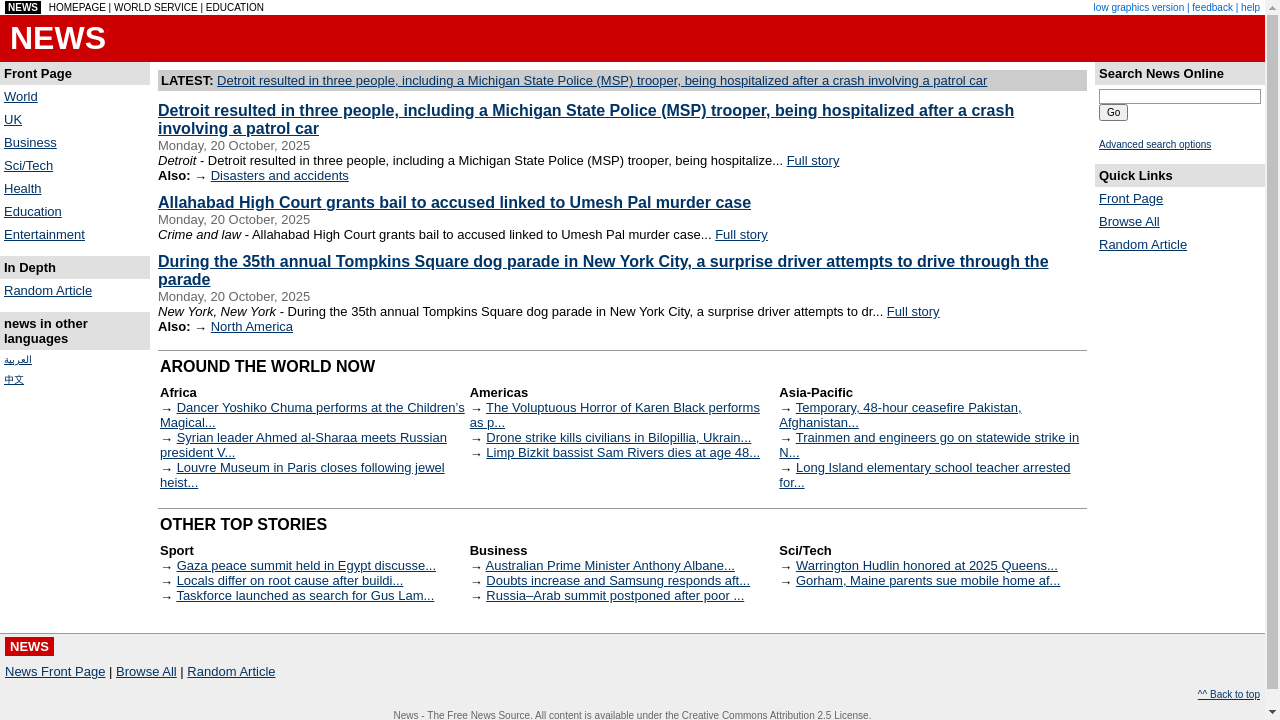}}
}\vspace{1em}
\subfigure[Snapshot of the News v$_2$ homepage, resembling the BBC News UI from $2002-07$.\label{fig:news_v2}]{%
    \fbox{\includegraphics[width=0.99\linewidth]{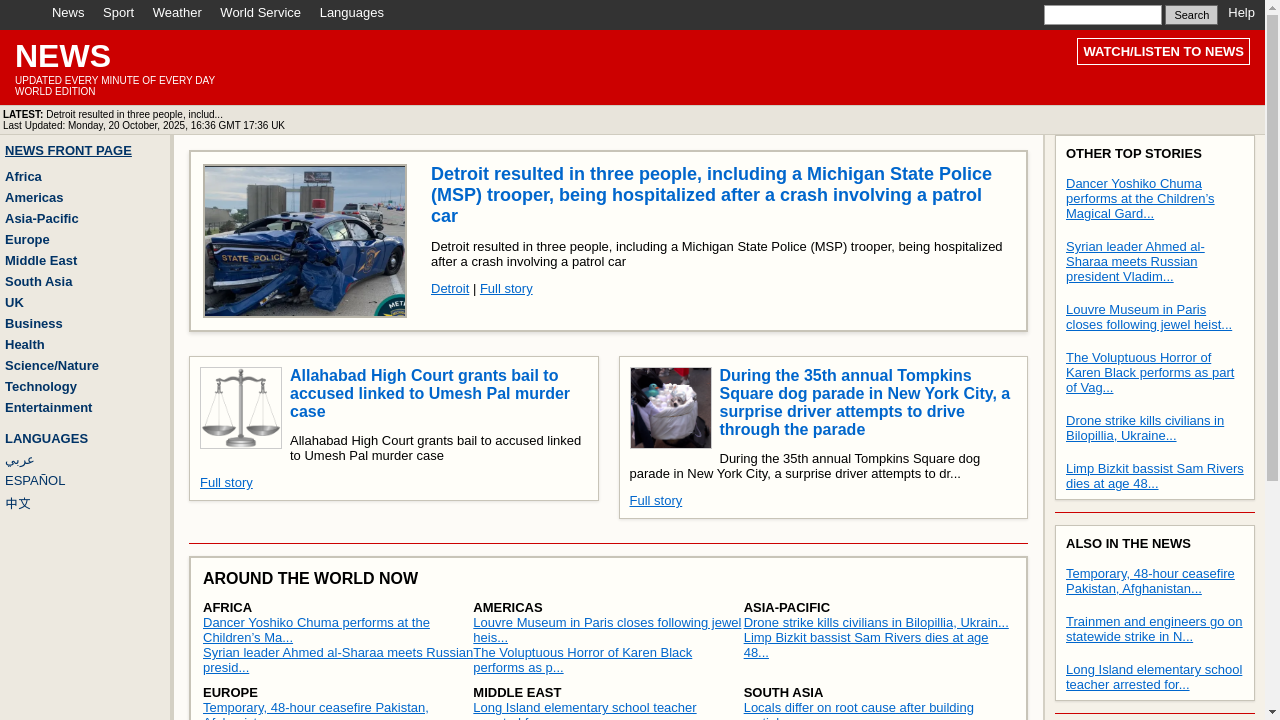}}
}
\end{figure}

\begin{figure}[htbp]
  \centering
\subfigure[Snapshot of the News v$_3$ homepage, resembling the BBC News UI from $2008-2015$. \label{fig:news_v3}]{%
    \fbox{\includegraphics[width=0.99\linewidth]{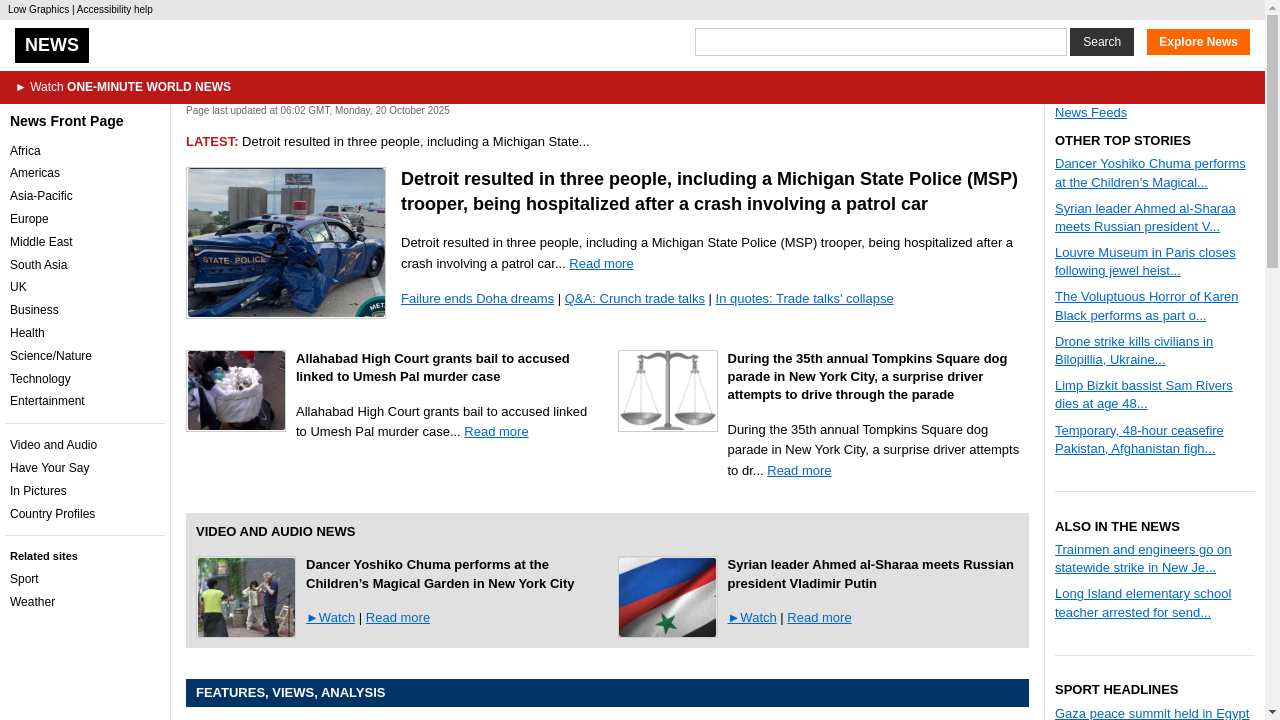}}
}\vspace{1em}
\subfigure[Snapshot of the News v$_4$ homepage, resembling the BBC News UI from $2016-22$.\label{fig:news_v4}]{%
    \fbox{\includegraphics[width=0.99\linewidth]{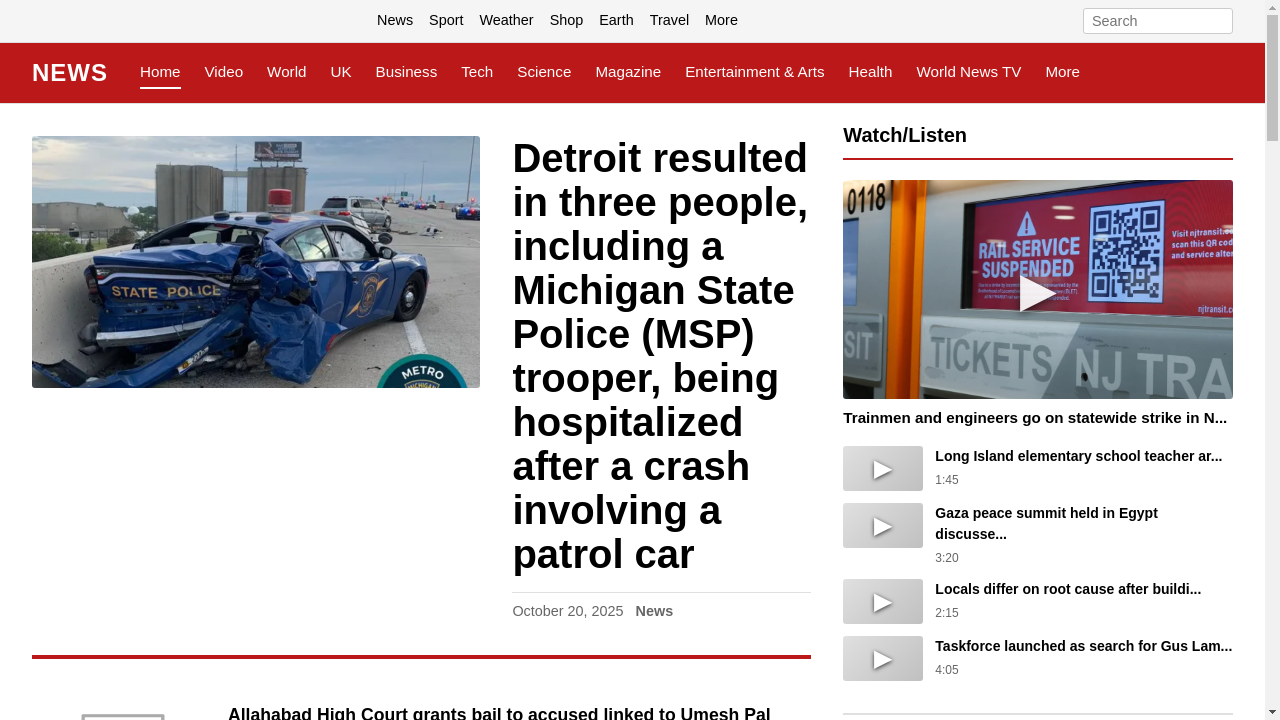}}
}
\end{figure}

\begin{figure}[htbp]
  \centering
\subfigure[Snapshot of the News v$_5$ homepage, resembling the BBC News UI from $2023-2025$. \label{fig:news_v5}]{%
    \fbox{\includegraphics[width=0.99\linewidth]{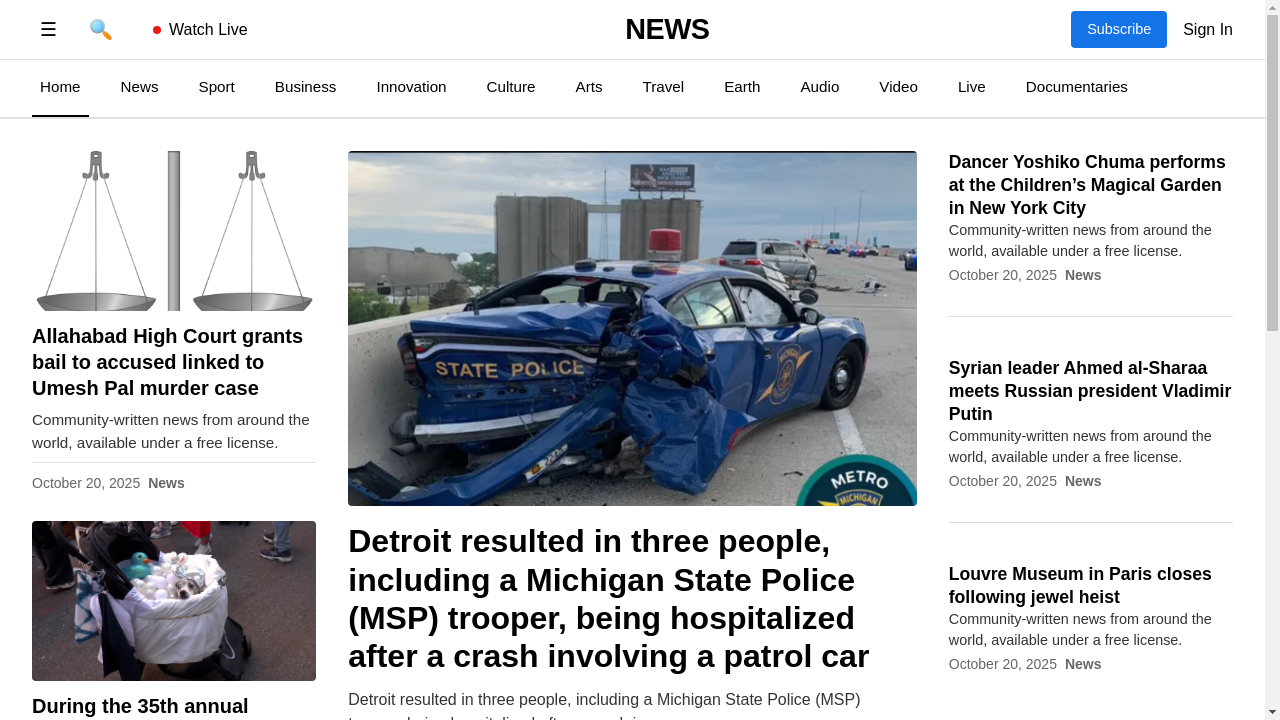}}
}\vspace{1em}
\subfigure[Snapshot of the News v$_6$ homepage, resembling a minimal News UI.\label{fig:news_v6}]{%
    \fbox{\includegraphics[width=0.99\linewidth]{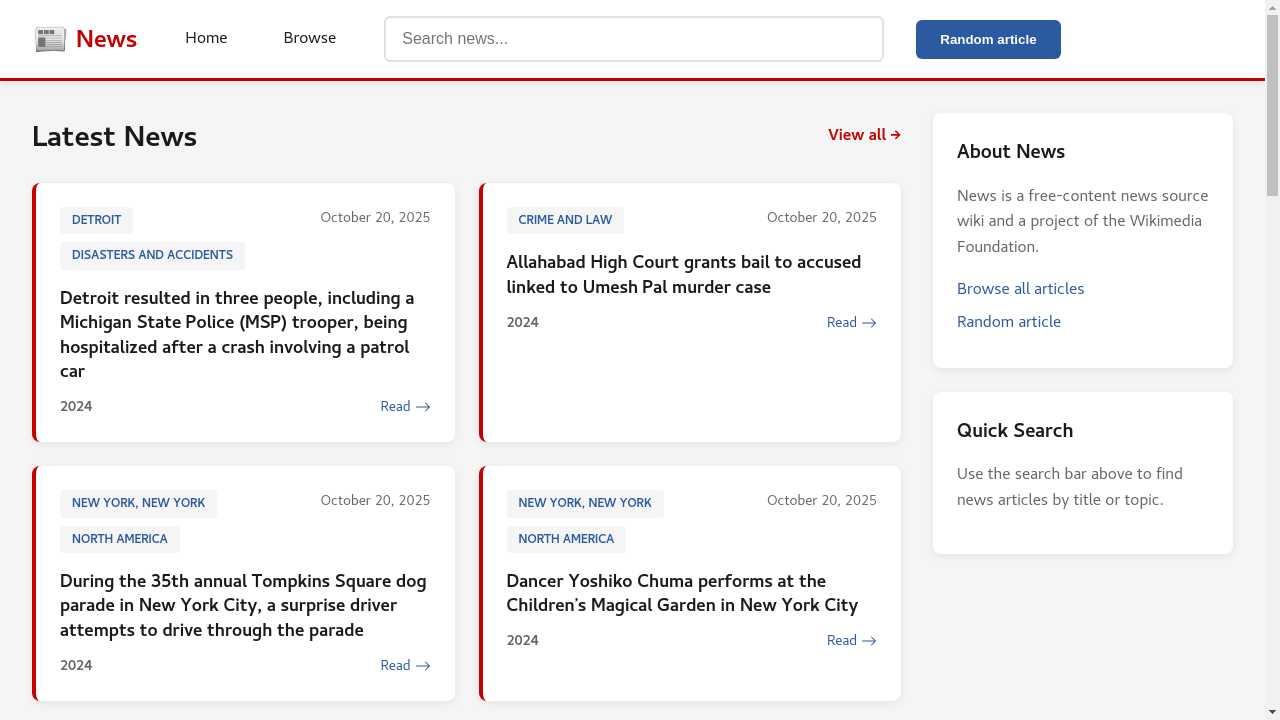}}
}
\caption{(a)-(f) Snapshots of different versions of News homepages, resembling different News UI across the eras.}
\label{fig:news_UI_all}
\end{figure}

\begin{figure}[htbp]
  \centering
\subfigure[Snapshot of the Shop v$_1$ homepage, resembling the Amazon UI from $1999-2004$. \label{fig:shop_v1}]{%
    \fbox{\includegraphics[width=0.99\linewidth]{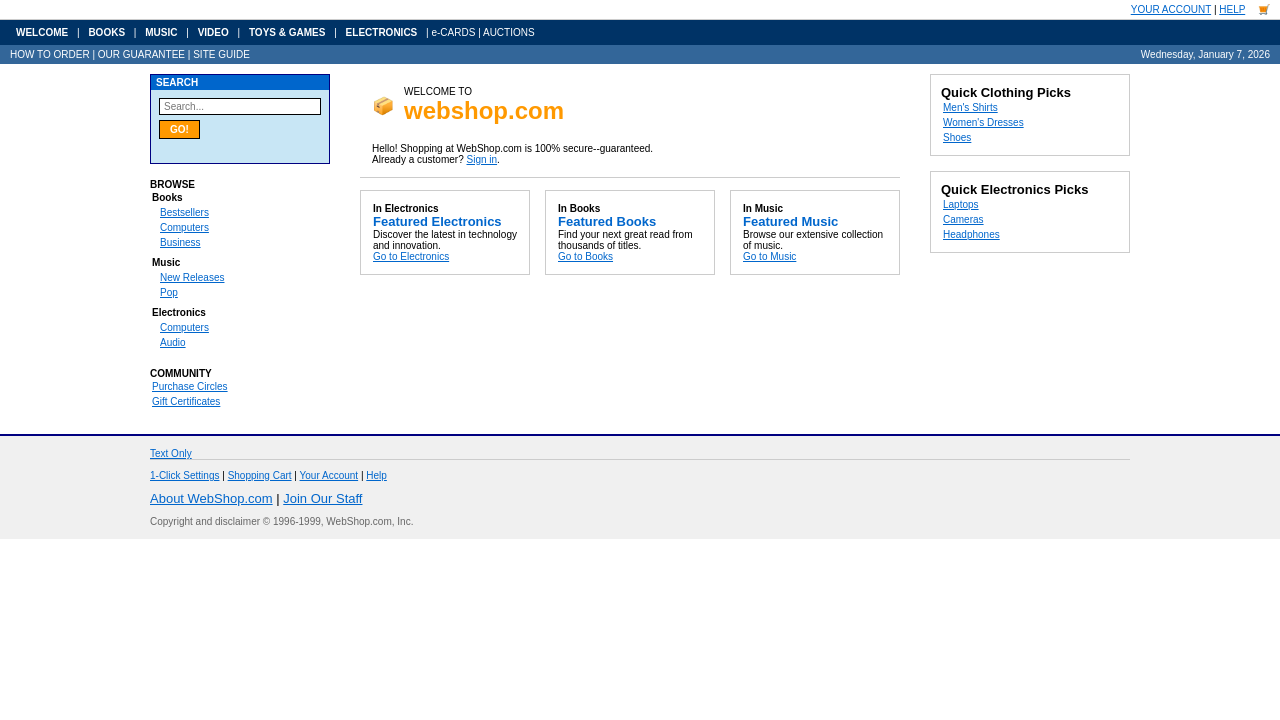}}
}\vspace{1em}
\subfigure[Snapshot of the Shop v$_2$ homepage, resembling the Amazon UI from $2005-11$.\label{fig:shop_v2}]{%
    \fbox{\includegraphics[width=0.99\linewidth]{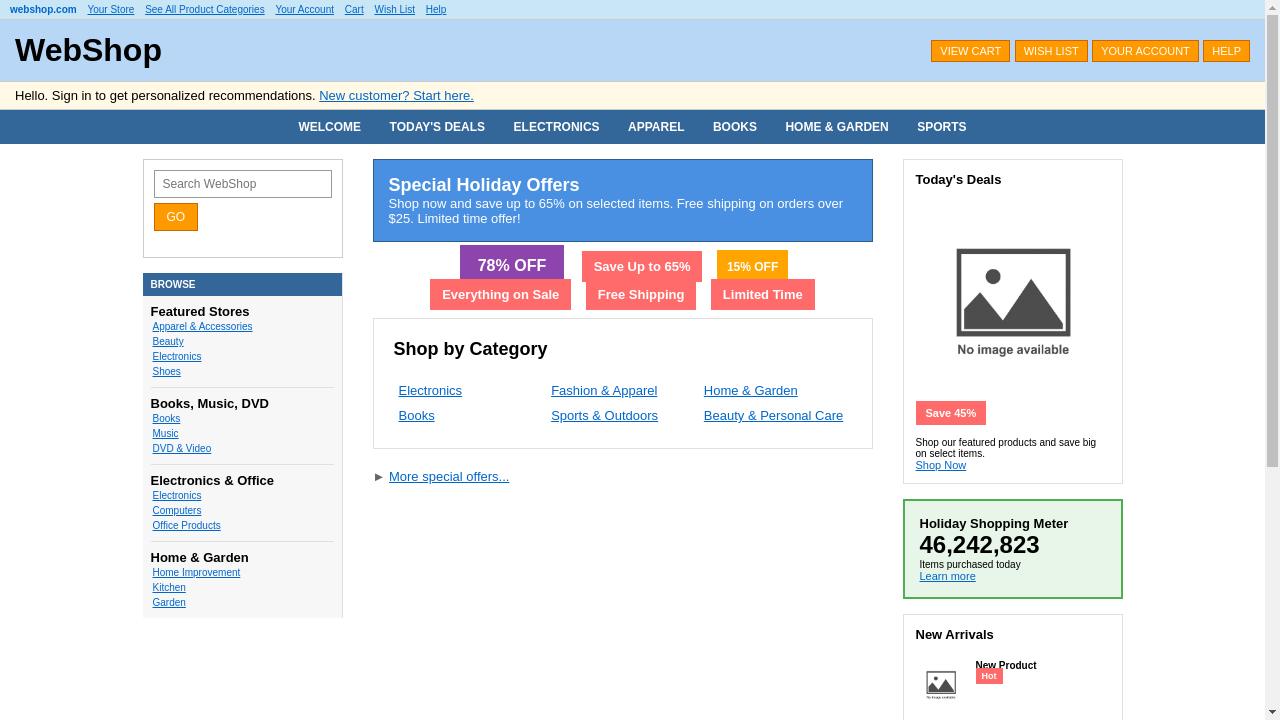}}
}
\end{figure}

\begin{figure}[htbp]
  \centering
\subfigure[Snapshot of the Shop v$_3$ homepage, resembling the Amazon UI from $2012-2014$. \label{fig:shop_v3}]{%
    \fbox{\includegraphics[width=0.99\linewidth]{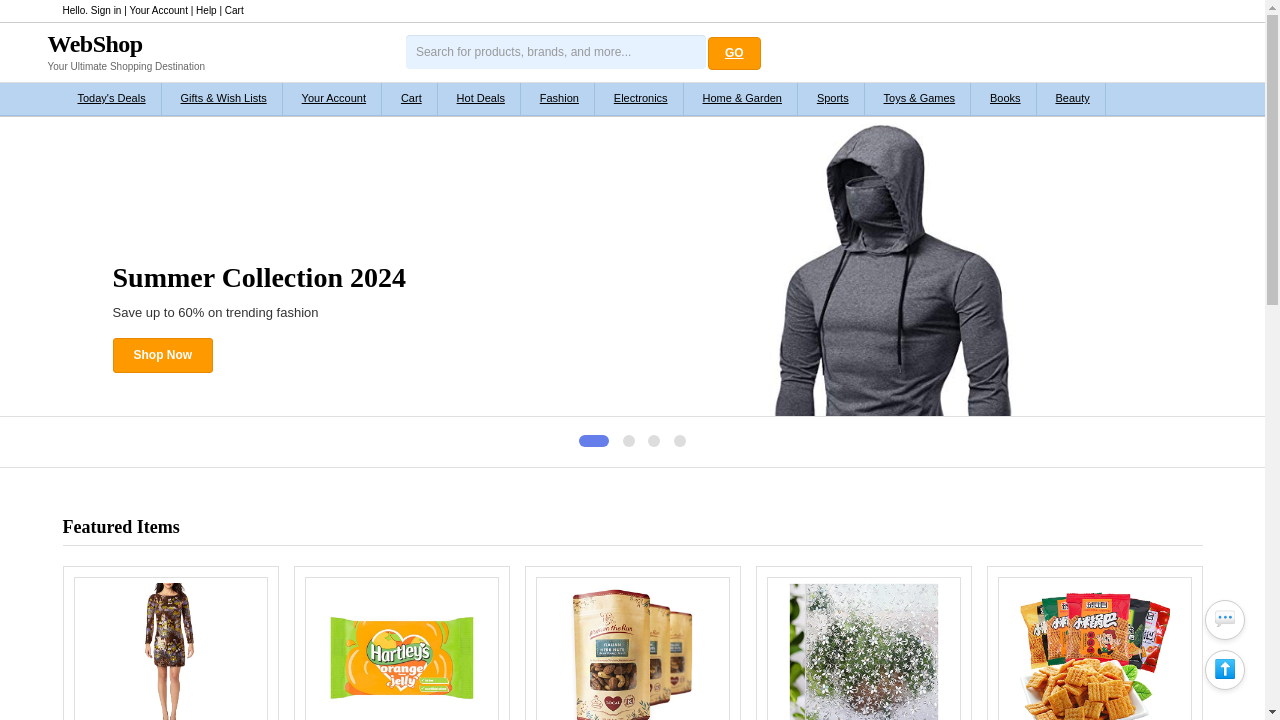}}
}\vspace{1em}
\subfigure[Snapshot of the Shop v$_4$ homepage, resembling the Amazon UI from $2015-25$.\label{fig:shop_v4}]{%
    \fbox{\includegraphics[width=0.99\linewidth]{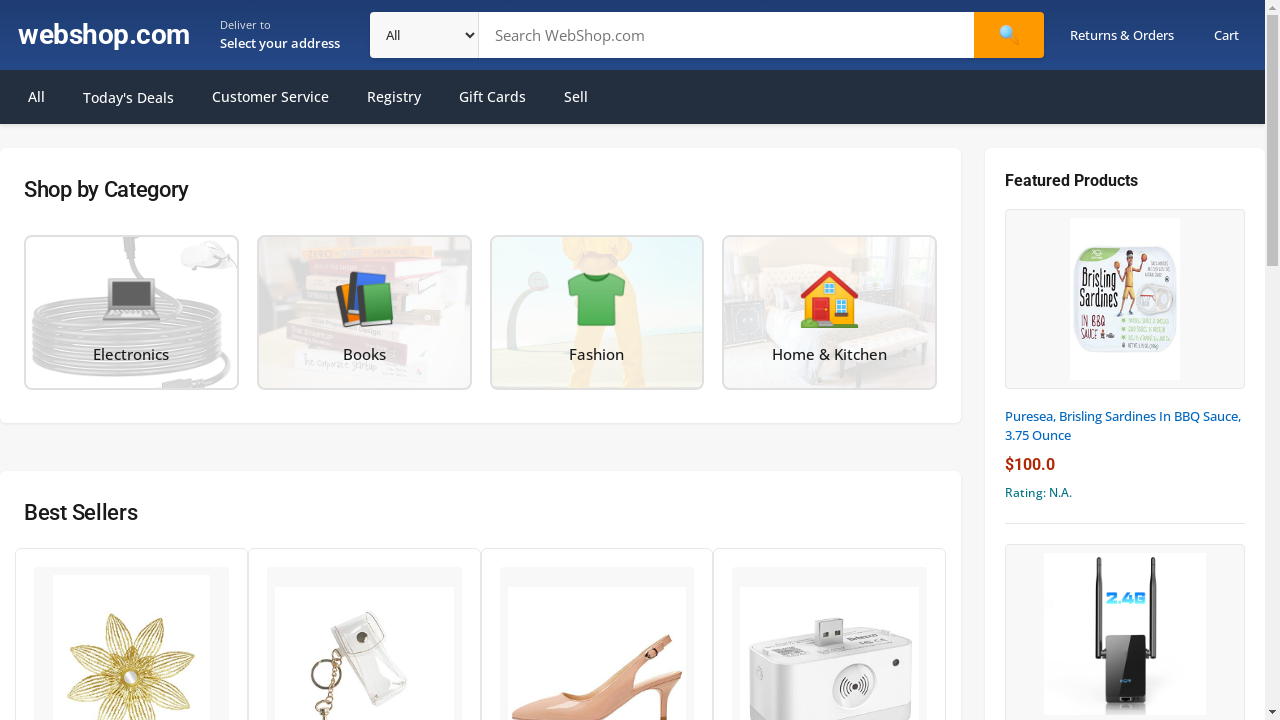}}
}
\end{figure}

\begin{figure}[htbp]
  \centering
\subfigure[Snapshot of the Shop v$_5$ homepage, resembling the Temu UI from $2025$. \label{fig:shop_v5}]{%
    \fbox{\includegraphics[width=0.99\linewidth]{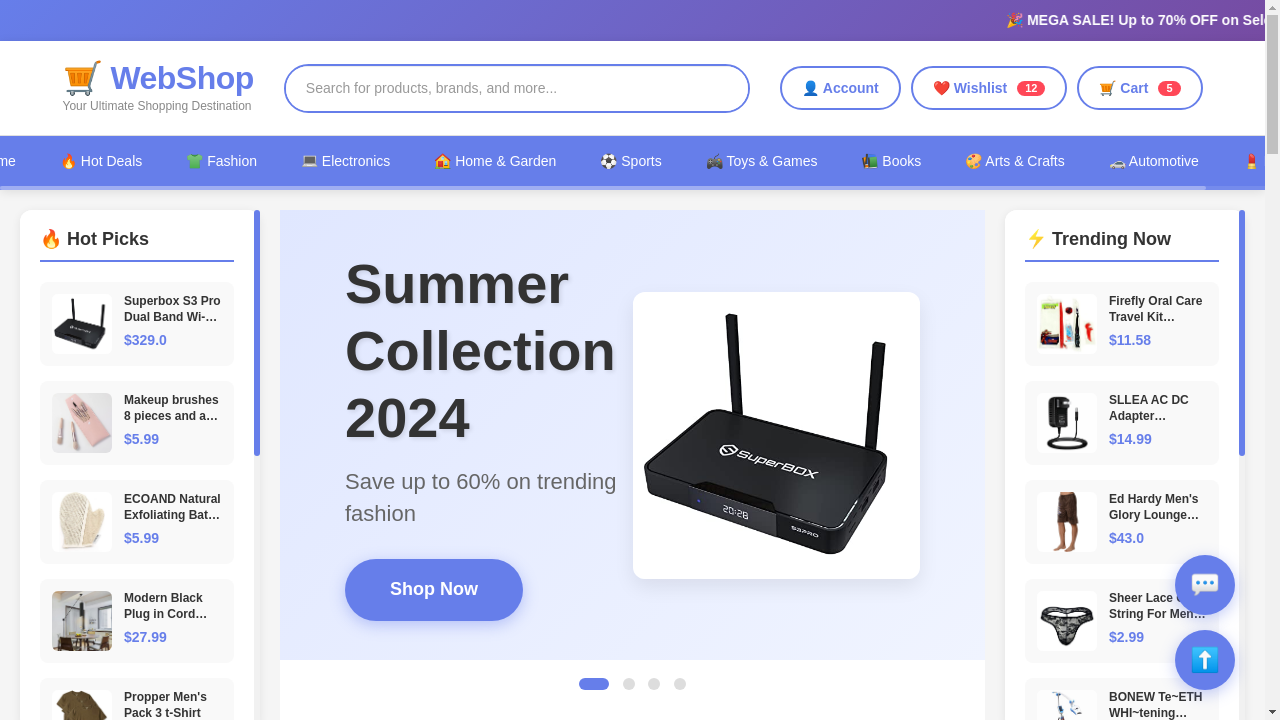}}
}\vspace{1em}
\subfigure[Snapshot of the Shop v$_6$ homepage, resembling the original WebShop UI.\label{fig:shop_v6}]{%
    \fbox{\includegraphics[width=0.99\linewidth]{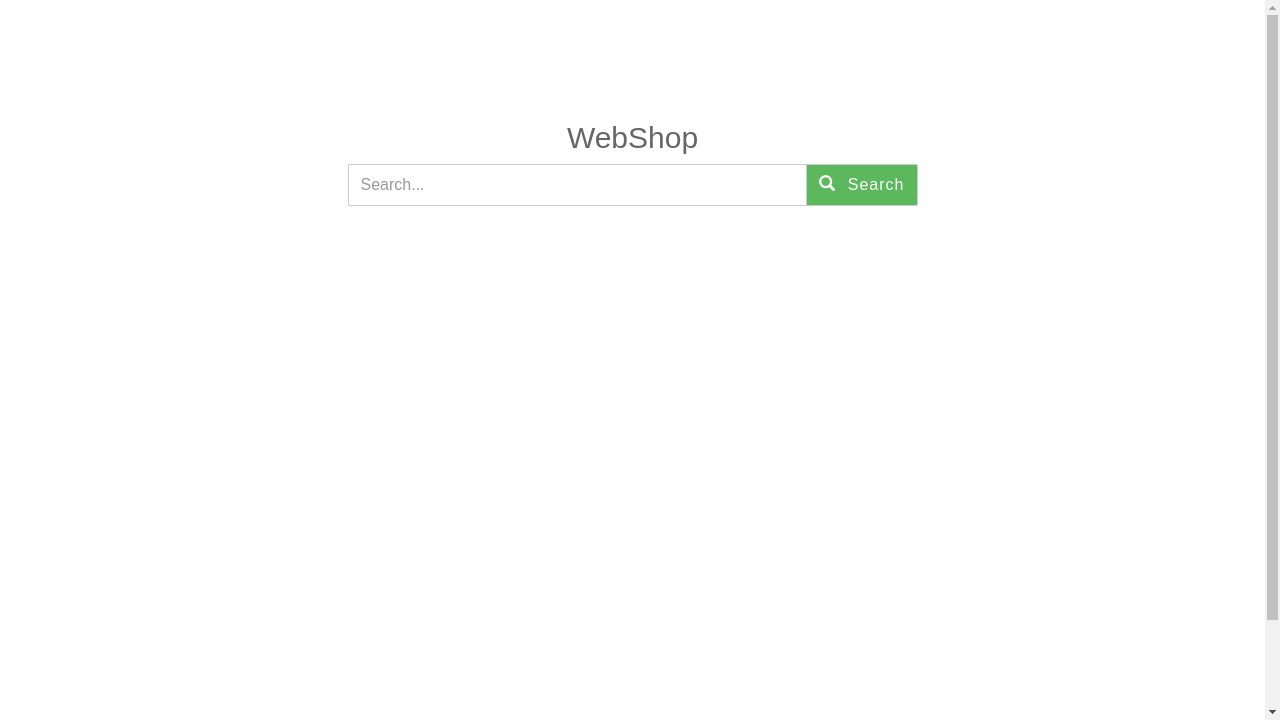}}
}
\caption{(a)-(f) Snapshots of different versions of Shop homepages, resembling different e-commerce UI across the eras.}
\label{fig:shop_UI_all}
\end{figure}

\newpage
\section*{NeurIPS Paper Checklist}

\begin{enumerate}

\item {\bf Claims}
    \item[] Question: Do the main claims made in the abstract and introduction accurately reflect the paper's contributions and scope?
    \item[] Answer: \answerYes{} % Replace by \answerYes{}, \answerNo{}, or \answerNA{}.
    \item[] Justification: The abstract and introduction accurately described the core contribution of our work, which is the \timewarp\ benchmark, the emperical findings, the trajectory collection method \timetraj, and our variant of behavior cloning \timewarp-\textsc{BC}.
    \item[] Guidelines:
    \begin{itemize}
        \item The answer \answerNA{} means that the abstract and introduction do not include the claims made in the paper.
        \item The abstract and/or introduction should clearly state the claims made, including the contributions made in the paper and important assumptions and limitations. A \answerNo{} or \answerNA{} answer to this question will not be perceived well by the reviewers. 
        \item The claims made should match theoretical and experimental results, and reflect how much the results can be expected to generalize to other settings. 
        \item It is fine to include aspirational goals as motivation as long as it is clear that these goals are not attained by the paper. 
    \end{itemize}

\item {\bf Limitations}
    \item[] Question: Does the paper discuss the limitations of the work performed by the authors?
    \item[] Answer: \answerYes{} % Replace by \answerYes{}, \answerNo{}, or \answerNA{}.
    \item[] Justification: The limitations have been discussed in detail in the Appendix Section \ref{sec:limitations}.
    \item[] Guidelines:
    \begin{itemize}
        \item The answer \answerNA{} means that the paper has no limitation while the answer \answerNo{} means that the paper has limitations, but those are not discussed in the paper. 
        \item The authors are encouraged to create a separate ``Limitations'' section in their paper.
        \item The paper should point out any strong assumptions and how robust the results are to violations of these assumptions (e.g., independence assumptions, noiseless settings, model well-specification, asymptotic approximations only holding locally). The authors should reflect on how these assumptions might be violated in practice and what the implications would be.
        \item The authors should reflect on the scope of the claims made, e.g., if the approach was only tested on a few datasets or with a few runs. In general, empirical results often depend on implicit assumptions, which should be articulated.
        \item The authors should reflect on the factors that influence the performance of the approach. For example, a facial recognition algorithm may perform poorly when image resolution is low or images are taken in low lighting. Or a speech-to-text system might not be used reliably to provide closed captions for online lectures because it fails to handle technical jargon.
        \item The authors should discuss the computational efficiency of the proposed algorithms and how they scale with dataset size.
        \item If applicable, the authors should discuss possible limitations of their approach to address problems of privacy and fairness.
        \item While the authors might fear that complete honesty about limitations might be used by reviewers as grounds for rejection, a worse outcome might be that reviewers discover limitations that aren't acknowledged in the paper. The authors should use their best judgment and recognize that individual actions in favor of transparency play an important role in developing norms that preserve the integrity of the community. Reviewers will be specifically instructed not to penalize honesty concerning limitations.
    \end{itemize}

\item {\bf Theory assumptions and proofs}
    \item[] Question: For each theoretical result, does the paper provide the full set of assumptions and a complete (and correct) proof?
    \item[] Answer: \answerYes{} % Replace by \answerYes{}, \answerNo{}, or \answerNA{}.
    \item[] Justification: The full set of assumptions and a complete (and correct) proof is provided in the Appendix \ref{sec:theory}.
    \item[] Guidelines:
    \begin{itemize}
        \item The answer \answerNA{} means that the paper does not include theoretical results. 
        \item All the theorems, formulas, and proofs in the paper should be numbered and cross-referenced.
        \item All assumptions should be clearly stated or referenced in the statement of any theorems.
        \item The proofs can either appear in the main paper or the supplemental material, but if they appear in the supplemental material, the authors are encouraged to provide a short proof sketch to provide intuition. 
        \item Inversely, any informal proof provided in the core of the paper should be complemented by formal proofs provided in appendix or supplemental material.
        \item Theorems and Lemmas that the proof relies upon should be properly referenced. 
    \end{itemize}

    \item {\bf Experimental result reproducibility}
    \item[] Question: Does the paper fully disclose all the information needed to reproduce the main experimental results of the paper to the extent that it affects the main claims and/or conclusions of the paper (regardless of whether the code and data are provided or not)?
    \item[] Answer: \answerYes{} % Replace by \answerYes{}, \answerNo{}, or \answerNA{}.
    \item[] Justification: We have described in detail: the web environment and evaluation in Section \ref{sec:timewarpBenchmark} and Appendix \ref{sec:EnvironmentBenchmarkDetails}, the methodology in Section \ref{sec:methodology} and Appendix \ref{sec:methodologyDetails}, the experimental details in Section \ref{sec:experimentalSetup} and Appendix \ref{sec:experimentDetailsAppendix}. These details including implementation details (\S\ref{sec:envImplementationDetails}), training setup (\S\ref{sec:trainingDetails}), hyperparameters (\S\ref{tab:hyperparams}), and checkpoints (\S\ref{sec:checkpoints}), which we believe would be sufficient to reproduce the main experimental results of the paper.
    \item[] Guidelines:
    \begin{itemize}
        \item The answer \answerNA{} means that the paper does not include experiments.
        \item If the paper includes experiments, a \answerNo{} answer to this question will not be perceived well by the reviewers: Making the paper reproducible is important, regardless of whether the code and data are provided or not.
        \item If the contribution is a dataset and\slash or model, the authors should describe the steps taken to make their results reproducible or verifiable. 
        \item Depending on the contribution, reproducibility can be accomplished in various ways. For example, if the contribution is a novel architecture, describing the architecture fully might suffice, or if the contribution is a specific model and empirical evaluation, it may be necessary to either make it possible for others to replicate the model with the same dataset, or provide access to the model. In general. releasing code and data is often one good way to accomplish this, but reproducibility can also be provided via detailed instructions for how to replicate the results, access to a hosted model (e.g., in the case of a large language model), releasing of a model checkpoint, or other means that are appropriate to the research performed.
        \item While NeurIPS does not require releasing code, the conference does require all submissions to provide some reasonable avenue for reproducibility, which may depend on the nature of the contribution. For example
        \begin{enumerate}
            \item If the contribution is primarily a new algorithm, the paper should make it clear how to reproduce that algorithm.
            \item If the contribution is primarily a new model architecture, the paper should describe the architecture clearly and fully.
            \item If the contribution is a new model (e.g., a large language model), then there should either be a way to access this model for reproducing the results or a way to reproduce the model (e.g., with an open-source dataset or instructions for how to construct the dataset).
            \item We recognize that reproducibility may be tricky in some cases, in which case authors are welcome to describe the particular way they provide for reproducibility. In the case of closed-source models, it may be that access to the model is limited in some way (e.g., to registered users), but it should be possible for other researchers to have some path to reproducing or verifying the results.
        \end{enumerate}
    \end{itemize}

\item {\bf Open access to data and code}
    \item[] Question: Does the paper provide open access to the data and code, with sufficient instructions to faithfully reproduce the main experimental results, as described in supplemental material?
    \item[] Answer: \answerYes{} % Replace by \answerYes{}, \answerNo{}, or \answerNA{}.
    \item[] Justification: The paper provides open access to both data and code, with documentations, as declared in Appendix \ref{sec:reporducibilityDatasetAccessLicensing}. The anonymized links have been provided in the submission. 
    \item[] Guidelines:
    \begin{itemize}
        \item The answer \answerNA{} means that paper does not include experiments requiring code.
        \item Please see the NeurIPS code and data submission guidelines (\url{https://neurips.cc/public/guides/CodeSubmissionPolicy}) for more details.
        \item While we encourage the release of code and data, we understand that this might not be possible, so \answerNo{} is an acceptable answer. Papers cannot be rejected simply for not including code, unless this is central to the contribution (e.g., for a new open-source benchmark).
        \item The instructions should contain the exact command and environment needed to run to reproduce the results. See the NeurIPS code and data submission guidelines (\url{https://neurips.cc/public/guides/CodeSubmissionPolicy}) for more details.
        \item The authors should provide instructions on data access and preparation, including how to access the raw data, preprocessed data, intermediate data, and generated data, etc.
        \item The authors should provide scripts to reproduce all experimental results for the new proposed method and baselines. If only a subset of experiments are reproducible, they should state which ones are omitted from the script and why.
        \item At submission time, to preserve anonymity, the authors should release anonymized versions (if applicable).
        \item Providing as much information as possible in supplemental material (appended to the paper) is recommended, but including URLs to data and code is permitted.
    \end{itemize}

\item {\bf Experimental setting/details}
    \item[] Question: Does the paper specify all the training and test details (e.g., data splits, hyperparameters, how they were chosen, type of optimizer) necessary to understand the results?
    \item[] Answer: \answerYes{} % Replace by \answerYes{}, \answerNo{}, or \answerNA{}.
    \item[] Justification: The experimental setting has been discussed in Section \ref{sec:experimentalSetup}, with details in Appendix \ref{sec:experimentDetailsAppendix}.
    \item[] Guidelines:
    \begin{itemize}
        \item The answer \answerNA{} means that the paper does not include experiments.
        \item The experimental setting should be presented in the core of the paper to a level of detail that is necessary to appreciate the results and make sense of them.
        \item The full details can be provided either with the code, in appendix, or as supplemental material.
    \end{itemize}

\item {\bf Experiment statistical significance}
    \item[] Question: Does the paper report error bars suitably and correctly defined or other appropriate information about the statistical significance of the experiments?
    \item[] Answer: \answerYes{} % Replace by \answerYes{}, \answerNo{}, or \answerNA{}.
    \item[] Justification: All experimental results in the paper have been conducted over 3 runs and the mean has been reported in every results table. For conciseness, the standard error is reported and discussed separately in Appendix \ref{subsec:consistencyAndStandardError}.
    \item[] Guidelines:
    \begin{itemize}
        \item The answer \answerNA{} means that the paper does not include experiments.
        \item The authors should answer \answerYes{} if the results are accompanied by error bars, confidence intervals, or statistical significance tests, at least for the experiments that support the main claims of the paper.
        \item The factors of variability that the error bars are capturing should be clearly stated (for example, train/test split, initialization, random drawing of some parameter, or overall run with given experimental conditions).
        \item The method for calculating the error bars should be explained (closed form formula, call to a library function, bootstrap, etc.)
        \item The assumptions made should be given (e.g., Normally distributed errors).
        \item It should be clear whether the error bar is the standard deviation or the standard error of the mean.
        \item It is OK to report 1-sigma error bars, but one should state it. The authors should preferably report a 2-sigma error bar than state that they have a 96\% CI, if the hypothesis of Normality of errors is not verified.
        \item For asymmetric distributions, the authors should be careful not to show in tables or figures symmetric error bars that would yield results that are out of range (e.g., negative error rates).
        \item If error bars are reported in tables or plots, the authors should explain in the text how they were calculated and reference the corresponding figures or tables in the text.
    \end{itemize}

\item {\bf Experiments compute resources}
    \item[] Question: For each experiment, does the paper provide sufficient information on the computer resources (type of compute workers, memory, time of execution) needed to reproduce the experiments?
    \item[] Answer: \answerYes{} % Replace by \answerYes{}, \answerNo{}, or \answerNA{}.
    \item[] Justification: The computation details to reproduce the experiments have been provided in Appendix \ref{sec:trainingDetails}.
    \item[] Guidelines:
    \begin{itemize}
        \item The answer \answerNA{} means that the paper does not include experiments.
        \item The paper should indicate the type of compute workers CPU or GPU, internal cluster, or cloud provider, including relevant memory and storage.
        \item The paper should provide the amount of compute required for each of the individual experimental runs as well as estimate the total compute. 
        \item The paper should disclose whether the full research project required more compute than the experiments reported in the paper (e.g., preliminary or failed experiments that didn't make it into the paper). 
    \end{itemize}
    
\item {\bf Code of ethics}
    \item[] Question: Does the research conducted in the paper conform, in every respect, with the NeurIPS Code of Ethics \url{https://neurips.cc/public/EthicsGuidelines}?
    \item[] Answer: \answerYes{} % Replace by \answerYes{}, \answerNo{}, or \answerNA{}.
    \item[] Justification: The research conducted in the paper conforms, in every respect, with the NeurIPS Code of Ethics.
    \item[] Guidelines:
    \begin{itemize}
        \item The answer \answerNA{} means that the authors have not reviewed the NeurIPS Code of Ethics.
        \item If the authors answer \answerNo, they should explain the special circumstances that require a deviation from the Code of Ethics.
        \item The authors should make sure to preserve anonymity (e.g., if there is a special consideration due to laws or regulations in their jurisdiction).
    \end{itemize}

\item {\bf Broader impacts}
    \item[] Question: Does the paper discuss both potential positive societal impacts and negative societal impacts of the work performed?
    \item[] Answer: \answerYes{} % Replace by \answerYes{}, \answerNo{}, or \answerNA{}.
    \item[] Justification: Both positive and negative social impacts have been discussed in Appendix \ref{sec:broaderImpact}.
    \item[] Guidelines:
    \begin{itemize}
        \item The answer \answerNA{} means that there is no societal impact of the work performed.
        \item If the authors answer \answerNA{} or \answerNo, they should explain why their work has no societal impact or why the paper does not address societal impact.
        \item Examples of negative societal impacts include potential malicious or unintended uses (e.g., disinformation, generating fake profiles, surveillance), fairness considerations (e.g., deployment of technologies that could make decisions that unfairly impact specific groups), privacy considerations, and security considerations.
        \item The conference expects that many papers will be foundational research and not tied to particular applications, let alone deployments. However, if there is a direct path to any negative applications, the authors should point it out. For example, it is legitimate to point out that an improvement in the quality of generative models could be used to generate Deepfakes for disinformation. On the other hand, it is not needed to point out that a generic algorithm for optimizing neural networks could enable people to train models that generate Deepfakes faster.
        \item The authors should consider possible harms that could arise when the technology is being used as intended and functioning correctly, harms that could arise when the technology is being used as intended but gives incorrect results, and harms following from (intentional or unintentional) misuse of the technology.
        \item If there are negative societal impacts, the authors could also discuss possible mitigation strategies (e.g., gated release of models, providing defenses in addition to attacks, mechanisms for monitoring misuse, mechanisms to monitor how a system learns from feedback over time, improving the efficiency and accessibility of ML).
    \end{itemize}
    
\item {\bf Safeguards}
    \item[] Question: Does the paper describe safeguards that have been put in place for responsible release of data or models that have a high risk for misuse (e.g., pre-trained language models, image generators, or scraped datasets)?
    \item[] Answer: \answerNA{} % Replace by \answerYes{}, \answerNo{}, or \answerNA{}.
    \item[] Justification: To the best of our knowledge, the paper poses no such risks.
    \item[] Guidelines:
    \begin{itemize}
        \item The answer \answerNA{} means that the paper poses no such risks.
        \item Released models that have a high risk for misuse or dual-use should be released with necessary safeguards to allow for controlled use of the model, for example by requiring that users adhere to usage guidelines or restrictions to access the model or implementing safety filters. 
        \item Datasets that have been scraped from the Internet could pose safety risks. The authors should describe how they avoided releasing unsafe images.
        \item We recognize that providing effective safeguards is challenging, and many papers do not require this, but we encourage authors to take this into account and make a best faith effort.
    \end{itemize}

\item {\bf Licenses for existing assets}
    \item[] Question: Are the creators or original owners of assets (e.g., code, data, models), used in the paper, properly credited and are the license and terms of use explicitly mentioned and properly respected?
    \item[] Answer: \answerYes{} % Replace by \answerYes{}, \answerNo{}, or \answerNA{}.
    \item[] Justification: Yes, all code, data and models used are appropriately credited and cited.
    \item[] Guidelines:
    \begin{itemize}
        \item The answer \answerNA{} means that the paper does not use existing assets.
        \item The authors should cite the original paper that produced the code package or dataset.
        \item The authors should state which version of the asset is used and, if possible, include a URL.
        \item The name of the license (e.g., CC-BY 4.0) should be included for each asset.
        \item For scraped data from a particular source (e.g., website), the copyright and terms of service of that source should be provided.
        \item If assets are released, the license, copyright information, and terms of use in the package should be provided. For popular datasets, \url{paperswithcode.com/datasets} has curated licenses for some datasets. Their licensing guide can help determine the license of a dataset.
        \item For existing datasets that are re-packaged, both the original license and the license of the derived asset (if it has changed) should be provided.
        \item If this information is not available online, the authors are encouraged to reach out to the asset's creators.
    \end{itemize}

\item {\bf New assets}
    \item[] Question: Are new assets introduced in the paper well documented and is the documentation provided alongside the assets?
    \item[] Answer: \answerYes{} % Replace by \answerYes{}, \answerNo{}, or \answerNA{}.
    \item[] Justification: The detailed documentation of the new assets have been provided in the \texttt{README.md} files of our GitHub repository.
    \item[] Guidelines:
    \begin{itemize}
        \item The answer \answerNA{} means that the paper does not release new assets.
        \item Researchers should communicate the details of the dataset\slash code\slash model as part of their submissions via structured templates. This includes details about training, license, limitations, etc. 
        \item The paper should discuss whether and how consent was obtained from people whose asset is used.
        \item At submission time, remember to anonymize your assets (if applicable). You can either create an anonymized URL or include an anonymized zip file.
    \end{itemize}

\item {\bf Crowdsourcing and research with human subjects}
    \item[] Question: For crowdsourcing experiments and research with human subjects, does the paper include the full text of instructions given to participants and screenshots, if applicable, as well as details about compensation (if any)? 
    \item[] Answer: \answerNA{} % Replace by \answerYes{}, \answerNo{}, or \answerNA{}.
    \item[] Justification: The paper does not involve crowd-sourcing nor research with human subjects.
    \item[] Guidelines:
    \begin{itemize}
        \item The answer \answerNA{} means that the paper does not involve crowdsourcing nor research with human subjects.
        \item Including this information in the supplemental material is fine, but if the main contribution of the paper involves human subjects, then as much detail as possible should be included in the main paper. 
        \item According to the NeurIPS Code of Ethics, workers involved in data collection, curation, or other labor should be paid at least the minimum wage in the country of the data collector. 
    \end{itemize}

\item {\bf Institutional review board (IRB) approvals or equivalent for research with human subjects}
    \item[] Question: Does the paper describe potential risks incurred by study participants, whether such risks were disclosed to the subjects, and whether Institutional Review Board (IRB) approvals (or an equivalent approval/review based on the requirements of your country or institution) were obtained?
    \item[] Answer: \answerNA{} % Replace by \answerYes{}, \answerNo{}, or \answerNA{}.
    \item[] Justification: The paper does not involve crowd-sourcing nor research with human subjects.
    \item[] Guidelines:
    \begin{itemize}
        \item The answer \answerNA{} means that the paper does not involve crowdsourcing nor research with human subjects.
        \item Depending on the country in which research is conducted, IRB approval (or equivalent) may be required for any human subjects research. If you obtained IRB approval, you should clearly state this in the paper. 
        \item We recognize that the procedures for this may vary significantly between institutions and locations, and we expect authors to adhere to the NeurIPS Code of Ethics and the guidelines for their institution. 
        \item For initial submissions, do not include any information that would break anonymity (if applicable), such as the institution conducting the review.
    \end{itemize}

\item {\bf Declaration of LLM usage}
    \item[] Question: Does the paper describe the usage of LLMs if it is an important, original, or non-standard component of the core methods in this research? Note that if the LLM is used only for writing, editing, or formatting purposes and does \emph{not} impact the core methodology, scientific rigor, or originality of the research, declaration is not required.
    %this research? 
    \item[] Answer: \answerYes{} % Replace by \answerYes{}, \answerNo{}, or \answerNA{}.
    \item[] Justification: Declaration of LLM usage has been reported in Appendix \ref{sec:llmUsageDeclaration}.
    \item[] Guidelines:
    \begin{itemize}
        \item The answer \answerNA{} means that the core method development in this research does not involve LLMs as any important, original, or non-standard components.
        \item Please refer to our LLM policy in the NeurIPS handbook for what should or should not be described.
    \end{itemize}

\end{enumerate}
% \sectionvspace

% \sectionvspace
% \subsection*{Standardization of Web Agent Training Reproducibility}
% \red{Maybe mention something similiar in the introduction section. A one liner or two.}
% While training our models, we found several web agent training pipelines that have significantly weaker baselines, and while investigating, we found several unoptimized hyperparameters in those baselines. For instance, we found instances where the Llama-3.1 8B model has been behavior cloned with only 1k context, which is significantly lesser than our optimal context for fine-tuning. We believe that our extensive expertimental analysis and ablation will shed light into better hyperparameter optimization while training web agents, essentially making stronger baselines.

\end{document}